
\documentclass[11.5pt]{article}
\usepackage{appendix}
\usepackage{jmlr2e}
\usepackage{lastpage}
\ShortHeadings{Small Transformers Compute Universal Metric Embeddings}{Anastasis Kratsios, Valentin Debarnot, Ivan Dokmani\'{c}}



\usepackage{silence,etoolbox}
\makeatletter
\patchcmd{\wrong@fontshape}{\@gobbletwo}{}{}{}
\makeatother
\WarningFilter{latexfont}{Font shape}
\WarningFilter{latexfont}{Some font}

\usepackage[table]{xcolor}
\usepackage{natbib}
\usepackage{hyperref}
\hypersetup{
	colorlinks = true,
	linkcolor = blue,
	anchorcolor = blue,
	citecolor = blue,
	filecolor = blue,
	urlcolor = blue
}

\DeclareMathAlphabet{\mathsfit}{\encodingdefault}{\sfdefault}{m}{sl}
\SetMathAlphabet{\mathsfit}{bold}{\encodingdefault}{\sfdefault}{bx}{n}












\newcommand{\softmax}{\mathrm{\bm{\sigma}}}






\usepackage{soul}
\usepackage{float}
\usepackage{graphicx}
\usepackage{booktabs}
\usepackage{nicefrac}
\usepackage{times}
\usepackage{enumitem}

\usepackage{footnote}
\makesavenoteenv{tabular}
\makesavenoteenv{table}


\usepackage{etoolbox}

\AfterEndEnvironment{equation}{\reseteqhint}
\AfterEndEnvironment{equation*}{\reseteqhint}
\AfterEndEnvironment{align}{\reseteqhint}
\AfterEndEnvironment{align*}{\reseteqhint}
\newcounter{hints}
\renewcommand{\thehints}{\alph{hints}}
\newcommand{\eqhint}[2][]{%
  \stepcounter{hints}%
  \if\relax\detokenize{#1}\relax\else\csxdef{hint@#1}{\thehints}\fi
  \mathrel{\overset{\textrm{(\thehints\hspace{0.01em})}}{\vphantom{\le}{#2}}}%
}
\newcommand{\reseteqhint}{\setcounter{hints}{0}}

\usepackage{graphicx}

\usepackage[utf8]{inputenc}
\usepackage{appendix}


\usepackage{mathtools}
\usepackage{bbm}
\usepackage{booktabs}
\usepackage{xparse}
\usepackage{sidecap}
\usepackage{float}
\usepackage{mdwlist}
\usepackage[bbgreekl]{mathbbol}
\usepackage{amsfonts}
\usepackage{amsmath}
\usepackage{amssymb}
\usepackage{amscd}
\usepackage{mathrsfs}
\usepackage{bm}
\usepackage{latexsym}
\usepackage{mleftright}
\usepackage{color}
\usepackage{xifthen}
\usepackage{makeidx}

\usepackage{mathrsfs}


\newcommand{\rr}{{\mathbb{R}}}
\newcommand{\pp}{{\mathbb{P}}}
\newcommand{\qq}{{\mathbb{Q}}}
\newcommand{\rrflex}[1]{{\ensuremath{\rr^{#1}
}}}

\newcommand{\rrd}{{\rrflex{d}}}

\newcommand{\xxx}{\mathcal{X}}

\newcommand{\nn}{{\mathbb{N}}}

\NewDocumentCommand{\mw}{}{\mathcal{MW}}
\NewDocumentCommand{\ppp}{mo}{
                \ensuremath{
                            \mathcal{P}\IfValueT{#2}{_{{#2}}}\IfValueF{#2}{_1}
                            \left(
                                {#1}
                            \right)
                }
                    }


\newcommand{\zzz}{{\mathcal{Z}}}

\NewDocumentCommand{\NN}{oo}{
    \ensuremath{
        \mathcal{NN}
        \IfValueT{#1}{_{#1}}\IfValueF{#1}{_{d:D}}
        \IfValueT{#2}{^{#2}}\IfValueF{#2}{^{\sigma}}
    }
}






%

\usepackage{marginnote}
\setlength{\marginparwidth}{2.3cm}
\definecolor{WowColor}{rgb}{.75,0,.75}
\definecolor{MildlyAlarming}{rgb}{0.85,0.25,0.1}
\definecolor{SubtleColor}{rgb}{0,0,.50}
\newcounter{margincounter}

\usepackage[normalem]{ulem}
\usepackage{wasysym}

\definecolor{darkcerulean}{rgb}{0.03, 0.27, 0.49}
\definecolor{darkmidnightblue}{rgb}{0.0, 0.2, 0.4}
\definecolor{darkcyan}{rgb}{0.0, 0.55, 0.55}
\definecolor{darkgreen}{rgb}{0.0, 0.2, 0.13}
\definecolor{deepjunglegreen}{rgb}{0.0, 0.29, 0.29}
\definecolor{darkcandyapplered}{rgb}{0.64, 0.0, 0.0}
\definecolor{darkred}{rgb}{0.55, 0.0, 0.0}
\definecolor{darkscarlet}{rgb}{0.34, 0.01, 0.1}
\definecolor{jasper}{rgb}{0.84, 0.23, 0.24}
\definecolor{darkjazzberryjam}{rgb}{0.45, 0.04, 0.37}
\definecolor{MidnightBlue}{RGB}{25,25,112}
\definecolor{ValBlue}{RGB}{51, 155, 200}
\definecolor{ValIIBlue}{RGB}{29, 88, 114}
\definecolor{MidnightBlueComplementingGreen}{RGB}{25,112,25}
\definecolor{MidnightBlueComplementingPurple}{RGB}{112,25,112}
\definecolor{MidnightBlueComplementingRed}{RGB}{112,25,69}

\definecolor{MidnightBlue}{RGB}{25,25,112}
\definecolor{MidnightBlueComplementingGreen}{RGB}{25,112,25}
\definecolor{MidnightBlueComplementingPurple}{RGB}{112,25,112}
\definecolor{MidnightBlueComplementingRed}{RGB}{112,25,69}
\definecolor{WowColor}{rgb}{.75,0,.75}
\definecolor{MildlyAlarming}{rgb}{0.85,0.25,0.1}
\definecolor{SubtleColor}{rgb}{0,0,.50}
\definecolor{antiquefuchsia}{rgb}{0.57, 0.36, 0.51}
\definecolor{fashionfuchsia}{rgb}{0.96, 0.0, 0.63}
\definecolor{jade}{RGB}{0,122,84}
\definecolor{darkjade}{RGB}{0, 98, 67}
\definecolor{caribbeangreen}{rgb}{0.0, 0.8, 0.6}
\definecolor{aquamarine}{rgb}{0.5, 0.8, 0.85}

\NewDocumentCommand{\Val}{mo}{
    \IfValueF{#2}{
                        {{\scriptsize
                            \textcolor{ValBlue}{ 
                            $\boldsymbol{\mathbb{V}}$\textbf{: }
                            \textit{{#1}}
                            }
                        }}
        }
    \IfValueT{#2}{
                        \marginnote{{\scriptsize
                            \textcolor{ValIIBlue}{ 
                            $\boldsymbol{\mathbb{V}}$\textbf{: }
                            \textit{{#1}}
                            }
                        }}
        }
                    }
\NewDocumentCommand{\Ivan}{mo}{
    \IfValueF{#2}{
                        {{\scriptsize
                            \textcolor{darkjazzberryjam}{ 
                            \textbf{$\mathcal{I}$:}
                            \textit{{#1}}
                            }
                        }}
        }
    \IfValueT{#2}{
                        \marginnote{{\scriptsize
                            \textcolor{darkjazzberryjam}{ 
                            \textbf{$\mathcal{I}$:}
                            \textit{{#1}}
                            }
                        }}
        }
                    }
\NewDocumentCommand{\Anastasis}{mo}{
    \IfValueF{#2}{
                        {{
                            \textcolor{jade}{ 
                            {$\boldsymbol{\mathfrak{A}}$:}
                            \textit{{#1}}
                            }
                        }}
        }
    \IfValueT{#2}{
                        \marginnote{{\scriptsize
                            \textcolor{darkjade}{ 
                           \textbf{{$\boldsymbol{\mathfrak{A}}$:} - Note:}
                            \textit{{#1}}
                            }
                        }}
        }
                    }


\newcommand{\eqdef}{\ensuremath{\stackrel{\mbox{\upshape\tiny def.}}{=}}}
\def\R{\mathbb{R}}

\def\M{\mathcal{M}}



\usepackage{wrapfig}

\usepackage{algorithm}
\usepackage{algpseudocode}

\definecolor{Window1}{RGB}{92,150,31}%
    \definecolor{Window1dark}{RGB}{41,67,13}%
\definecolor{Window2}{RGB}{255,168,28}
    \definecolor{Window2dark}{RGB}{114,75,12}
\definecolor{Window3}{RGB}{255,96,33}
    \definecolor{Window3dark}{RGB}{97,36,12}
\definecolor{InputColor}{RGB}{20,255,177}
    \definecolor{InputColorlight}{RGB}{222,237,229}
    
\definecolor{MidnightBlue}{RGB}{25,25,112}
\definecolor{MidnightBlueComplementingGreen}{RGB}{25,112,25}
\definecolor{MidnightBlueComplementingPurple}{RGB}{112,25,112}
\definecolor{MidnightBlueComplementingRed}{RGB}{112,25,69}
\definecolor{WowColor}{rgb}{.75,0,.75}
\definecolor{MildlyAlarming}{rgb}{0.85,0.25,0.1}
\definecolor{SubtleColor}{rgb}{0,0,.50}
\definecolor{Fuchsia}{RGB}{178, 18, 178}
\definecolor{antiquefuchsia}{rgb}{0.57, 0.36, 0.51}
\definecolor{fashionfuchsia}{rgb}{0.96, 0.0, 0.63}
\definecolor{jade}{rgb}{0.0, 0.66, 0.42}
\definecolor{caribbeangreen}{rgb}{0.0, 0.8, 0.6}
\definecolor{aquamarine}{rgb}{0.5, 0.8, 0.85}
\definecolor{attentioncolor}{RGB}{152,90,81}
\definecolor{burgred}{RGB}{40,3,22}
\definecolor{AnnieGreen}{RGB}{17,123,92}
\definecolor{Turquoise}{RGB}{64,224,208}
\definecolor{darkjade}{RGB}{0,122,84}
\definecolor{landmarkgreen}{RGB}{6,59,6}

\definecolor{mycolor1}{rgb}{0.00000,0.44700,0.74100}%
\definecolor{mycolor2}{rgb}{0.85000,0.32500,0.09800}%

\usepackage{subcaption}
\usepackage{tabularx}
\usepackage{multirow}


\usepackage{cleveref}
\raggedbottom
\newcommand{\ra}[1]{\renewcommand{\arraystretch}{#1}}
\setlength\heavyrulewidth{0.3ex}
\usepackage{lscape}


\begin{document}

\title{Small Transformers Compute Universal Metric Embeddings}




	\author{\name Anastasis Kratsios 
	\email kratsioa@mcmaster.ca\\
	\addr McMaster University\\
	Department of Mathematics\\
    1280 Main Street West, Hamilton, Ontario, L8S 4K1, Canada
	\AND
	\name Valentin Debarnot
	\email valentin.debarnot@unibas.ch\\
    \addr Universit\"{a}t Basel\\
    Department of Computer Science\\
    Basel, 4051, Switzerland
    \AND
    \name Ivan Dokmani\'{c}
	\email ivan.dokmanic@unibas.ch\\
    \addr Universit\"{a}t Basel\\
    Department of Computer Science\\
    Basel, 4051, Switzerland
	}
	
	\editor{}
	
\maketitle

\begin{abstract}
We study representations of data from an arbitrary metric space $\mathcal{X}$ in the space of univariate Gaussian mixtures with a transport metric (Delon and Desolneux 2020).  We derive embedding guarantees for feature maps implemented by small recent neural networks called \emph{probabilistic transformers}.  Our guarantees are of memorization type: we prove that a probabilistic transformer of depth about $n\log(n)$ and width about $n^2$ can bi-H\"older embed any $n$-point dataset from $\mathcal{X}$ with low metric distortion, thus avoiding the curse of dimensionality.  We further derive probabilistic bi-Lipschitz guarantees, which trade off the amount of distortion and the probability that a randomly chosen pair of points embeds with that distortion.  If $\mathcal{X}$'s geometry is sufficiently regular, we obtain stronger bi-Lipschitz guarantees for all points in the dataset.  As applications, we derive neural embedding guarantees for datasets from Riemannian manifolds, metric trees, and certain types of combinatorial graphs.  When instead embedding into multivariate Gaussian mixtures, we show that probabilistic transformers can compute bi-H\"{o}lder embeddings with arbitrarily small distortion.  
\end{abstract}
\hfill\\
\begin{keywords}%
    Metric Embeddings, Geometric Deep Learning,  Optimal Transport, Representation Learning, Transformers.
\end{keywords}




\section{Introduction}
	In machine learning practice we face a daunting variety of data sources. All of them, however, come equipped with some domain-specific notion of (dis)similarity. The successes of modern machine learning models such as deep neural networks may be in part attributed to the fact that they (explicitly or implicitly) learn representations of data that are compatible with these domain-specific notions. A key task in a successful machine learning pipeline is thus to identify a \textit{representation space} $\mathcal{R}$ and a \textit{feature map} $\phi:\mathcal{X}\rightarrow \mathcal{R}$ which faithfully encodes $\mathcal{X}$. It is often the case that the data space $\mathcal{X}$ and the dissimilarity $d_{\mathcal{X}}$ form a metric space $(\mathcal{X}, d_{\mathcal{X}})$; ``faithfully'' then means with minimal distortion to $d_\mathcal{X}$. Once data is represented in $\mathcal{R}$, we can learn and predict from $\phi(\mathcal{X})$ using downstream classification and regression models.

	For a long time, the default choice for $\mathcal{R}$ has been a Euclidean space or a (reproducing kernel) Hilbert space. Yet there is now a body of work both in machine learning \cite{JostNature1,JostNature2,muscoloni2017machine,ganea2018hyperbolic,digiovanni2022heterogeneous} and in metric embedding theory \cite{Bourgain_1985_HilbertSpaceisaNogo,Matouvsek_1996_Embeddings,naor2006markov} showing that Euclidean and Hilbert spaces are suboptimal for many practically-relevant notions of dissimilarity. This is because Euclidean spaces are ``too small'' and Hilbert spaces ``too flat'' to represent the great variety of geometries in modern datasets.  
	
    An archetypal situation where low-dimensional non-Euclidean representations outperform high-dimensional Euclidean is when working with hierarchical data, organized on trees or graphs with power-law degree distributions \cite{ravasz2003hierarchical}, and equipped with the usual combinatorial (geodesic) metric. Any finite tree $\xxx$ can be efficiently embedded into the two-dimensional \textit{hyperbolic space} \citep{sarkar2011low} while embeddings in $d$-dimensional Euclidean space require much greater distortion \cite{Gupta_Trees_Quantitiativefinitedimembeddings_ACM}. Since hierarchical data is ubiquitous, this favorable property of hyperbolic spaces has inspired a booming development of ``hyperbolic'' machine learning models. Examples include clustering \citep{chami2020trees,tabaghi2021procrustes,tabaghi2020hyperbolic}, PCA-type methods \citep{chami2021horopca}, classification \cite{lopez2020fully}, hyperbolic analogues of feedforward networks \citep{ganea2018hyperbolic,shimizu2021hyperbolic}, and several Python packages supporting this Riemannian geometry \cite{GEOMSTATS_JMLR_2020,kochurov2020geoopt}. These advances contributed to state-of-the-art performance when learning from natural language \cite{Zipf1949,dhingra2018embedding,le2019inferring}, knowledge graphs \cite{kolyvakis2020hyperbolic}, social networks \cite{krioukov2010hyperbolic,muscoloni2017machine}, directed graphs \cite{munzner1997h3}, scenario generation for stochastic phenomena \cite{pflug2015dynamic}, and combinatorial trees \cite{nickel2017poincare}.
    	
	Hyperbolic representations perform well when $\xxx$ possesses a ``tree-like'' geometric prior, but many datasets lack such structure or exhibit a more complex one. This has motivated the search for representations of combinatorial graphs in  Riemannian manifolds beyond constant-curvature space forms, including products of space forms (mixed-curvature spaces) \cite{tabaghi2021linear,JostNature1,JostNature2} and representations that match discrete curvature in heterogeneous rotationally-symmetric manifolds \cite{digiovanni2022heterogeneous}. While these proposals show great promise in addressing the emergent shortcomings of hyperbolic spaces, each of them introduces additional parameters (such as the choice of space forms or their dimensions and curvatures) without obvious nominal values on real-world datasets, and each requires a different set of downstream tools.

    We are thus motivated to look for a ``universal'' representation space. Beyond the compelling geometric considerations in the previous paragraphs, even when we work with data from a nice space $\mathcal{X}$, there may exist no machine learning models on $\mathcal{X}$ with favourable approximation, generalization, and optimization properties. When good machine learning models \emph{do} exist, it often happens that we work with datasets which belong to some \emph{unknown} low-dimensional subset of $\mathcal{X}$ and that this potentially advantageous latent structure is not leveraged by the off-the-shelf models. It is then opportune to design representations that allow us to optimally relay $\xxx$'s structure to downstream tasks. An important practical advantage of a single canonical representation space $\mathcal{R}$ is that it would admit a single ``standardized'' set of machine learning tools.

	\subsection{``Good'' Representation Spaces and Learnable Feature Maps, with Guarantees}
	\label{sec:good-representations}

	We consider the problem of \textit{representing} a general metric space $(\xxx, d_\mathcal{X})$ in a representation space $\mathcal{R}$, using a \textit{feature map} $\phi:\xxx\rightarrow \mathcal{R}$. We are interested in ``good'' representation spaces and computationally tractable feature maps. 

	Let us take a moment to reflect on what a ``good''  $\mathcal{R}$ and $\phi$ should entail.  Empirically successful non-Euclidean embeddings \cite{LopezPozzettiTrettelStrubeWienhard_2021__Symmetric_Spaces_for_Graph_Embeddings,digiovanni2022heterogeneous} and Bourgain's impossibility result for metric embeddings in Hilbert spaces \cite{Bourgain_1985_HilbertSpaceisaNogo} suggest that $\mathcal{R}$ must not be \emph{flat}. On the other hand, we show in Section~\ref{s_Main_ss_Supporting_Results} that for any complete Riemannian manifold $\mathcal{R}$ (including all manifolds of positive curvature) and for any positive integer $n$, there exists an $n$-point metric space which cannot be embedded in $\mathcal{R}$ with metric distortion smaller than $O(\log n)$. Finite-dimensional manifolds are thus \emph{too small} to embed datasets from arbitrary metric spaces, so we require that $\mathcal{R}$ be infinite-dimensional. 

	Given a finite dataset from $\mathcal{X}$, its representation in $\mathcal{R}$ is often computed by solving an optimization problem. One example is Laplacian eigenmaps when $\mathcal{X}$ is a Riemannian manifold and $d_\mathcal{X}$ the geodesic distance \cite{belkin2003laplacian,belkin2006convergence}. The feature map $\phi$ is thus defined implicitly via an optimization problem. A drawback of this strategy is that if a new sample is added to the dataset, computing its representation may require recomputing representations of all points. To avoid this, we are interested in efficient direct parameterizations of the feature map by deep neural networks with controlled complexity. In machine learning, this ``amortization" of static optimization-based procedures is referred to as (out-of-sample) generalization \cite{bartlett2017spectrally,AgarwalNiyogi_2009_GeneralizationRankingAlgosJMLR,BartlettFailuresGeneralizationJMLR2021,MeiTheodorMontanari_2022_Generalization_RFMs,MeiMontanari2022}. To make it possible, $\mathcal{R}$ should be a familiar representation space with well-understood geometry, finitely parameterized points, and admitting efficient numerical algorithms. Since only finite-dimensional representations can be practically implemented, we ask that the feature map $\phi:\xxx\rightarrow \mathcal{R}$ implemented by our deep learning model take values in a finite-dimensional subspace of $\mathcal{R}$ whenever $\xxx$ has a suitable geometric prior (which we make precise shortly). We informally refer to the dimension of the image of $\phi$ as the \textit{effective dimension} of the representation $\phi$ of $\xxx$.

	We show that the requirements for $\mathcal{R}$ are met by a certain space of probability measures equipped with an optimal transport metric, and that the corresponding feature map $\phi$ can be \textit{exactly implemented} (in the sense of memorization capacity) by a deep neural network with controlled complexity. Concretely, we prove that a deep neural network with about $n\log(n)$ layers of width $n^2$ can embed any dataset of $n$ points in $\mathcal{X}$ with low metric distortion.  
	Thus, unlike neural networks studied by most universal approximation theorems \cite{BurgerNeubauer_2001,YAROTSKYSobolev,kratsios2021_GDL,acciaio2022metric,kidger2019universal,puthawala2020globally,ShenYangZhang2021}, the PT does not face the curse of dimensionality. Its number of parameters is a (small-degree) polynomial in the embedded number of points which is independent of $\xxx$'s dimension. In contrast, the networks constructed in universal approximation theorems usually require a number of parameters exponential in $\xxx$'s dimension.  

	The particular $n$-point dataset (a subset $\xxx_n$ of $\xxx$) encodes information about general points in $\xxx$. Its points act as reference points or \emph{landmarks} for the remainder of $\xxx$. If $\xxx$ is a polytope (for example, $[0,1]^d$) then $\xxx_n$ could be the set of extremal points in $\xxx$ (for example, $\xxx_n= \{0,1\}^d$). If $\xxx$ is a connected compact Riemannian manifold then $\xxx_n$ could be any maximal $\delta$-dense subset; for  a sufficiently small%
	\footnote{%
	    Let $d_g$ be the geodesic distance on $\xxx$.  
        The main result of \cite{KatzUsadiKatz_GeomDedicata_2011__BiLipEmbeddingFinManifolds} shows that if $\delta$ is less than $10$ times the length of the smallest non-contractible loop in $\xxx$ (i.e. $\xxx$'s systole) then, map $x\mapsto (d_g(x,x_l))_{l=1}^n$ is a high-dimensional Euclidean bi-Lipschitz embedding of the Riemannian manifold $\xxx$; where $\xxx_n:=\{x_l\}_{l=1}^n$.  A key point here is the trade-off between the embedding dimension $n$, which grows exponentially in $\delta^{-1}$ (via an elementary covering argument) and the Lipschitz constant of the map $x\mapsto (d_g(x,x_l))_{l=1}^n$ which is large whenever $\delta$ is small.%
	} $\delta>0$ this set of $n$ points encodes most of the metric information in $(\xxx, d_{\mathcal{X}})$. In machine learning, $\xxx_n$ is the dataset to be analyzed or the training set which is randomly drawn from $\xxx$. In metric embedding theory suitable finite subsets play an analogous role to Schauder bases in the theory of Banach spaces (see \cite{Naor_SnapshotRibe_ICM_2018}).

    \begin{wrapfigure}{l}{0.3\textwidth}
    \centering
    \includegraphics[width=1\linewidth]{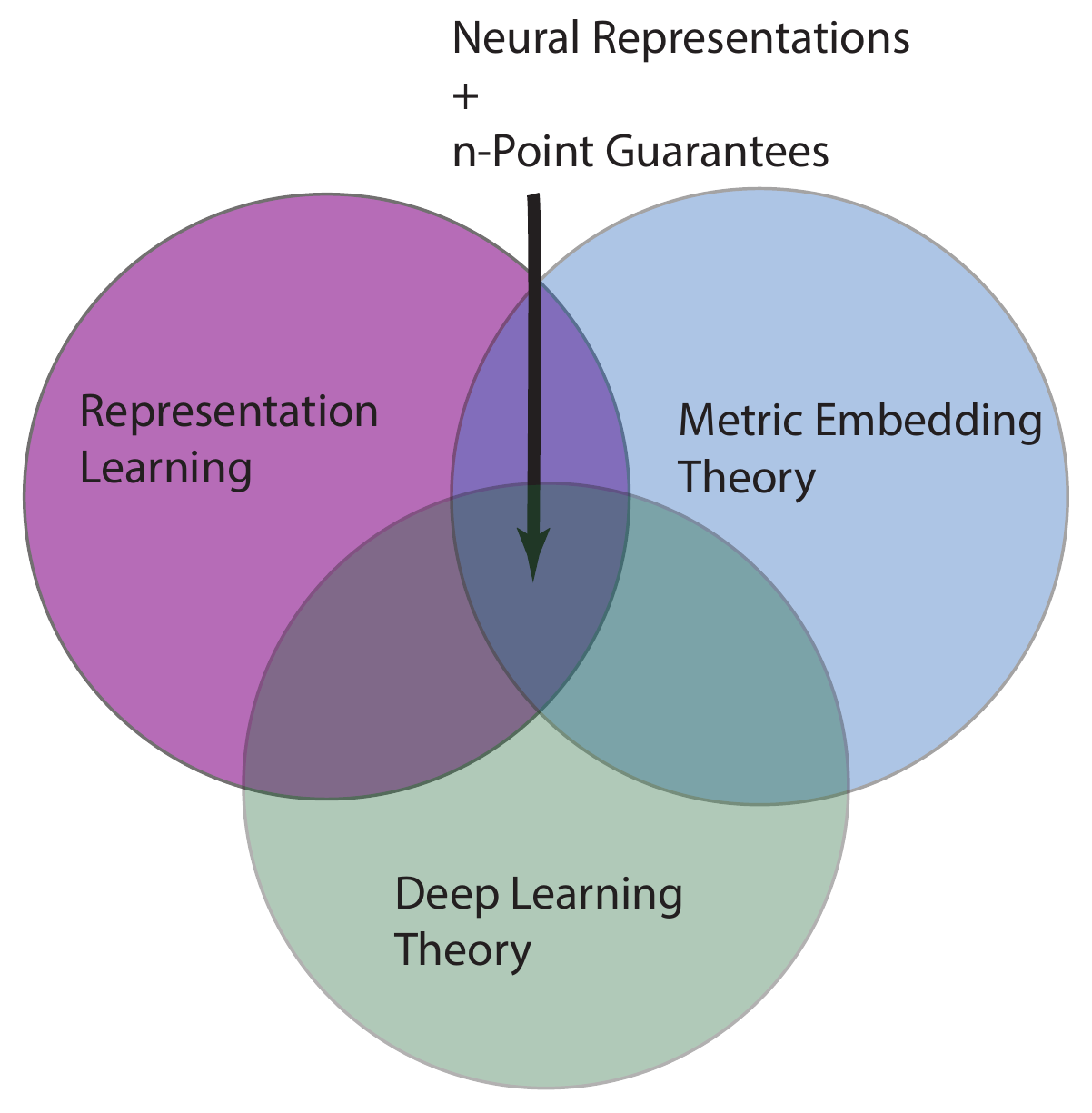}
    \end{wrapfigure}
	This places our work at the junction between \textit{metric embedding theory} and \textit{representation learning}.  The former aims at proving the existence of low-dimensional and low-distortion embeddings of finite metric spaces (e.g.\ $\xxx_n$), while the latter seeks computationally tractable representations of non-linear data (e.g.\ $\xxx$) which are performant in downstream learning tasks. Our theoretical guarantees concern $n$-point datasets from $\mathcal{X}$ because one cannot expect to embed general infinite metric spaces into any single ``reasonable''\footnote{By ``reasonable'' we mean a separable complete geodesic metric space.} space. 
	Finally, we complement the theory by stylized computer experiments which illustrate low metric distortion of our proposed representations (compared to Euclidean and hyperbolic) as well as out-of-sample generalization.	

	\subsection{The Representation Space} Our search for a ``good'' representation space begins by examining the Wasserstein space, denoted by $(\mathcal{P}_2(\mathbb{R}^d),\mathcal{W}_2)$, ($d\geq 1$), which is both infinite-dimensional and exhibits \textit{``positive curvatures on all scales''} \citep[Section 4.1.1]{kloeckner2016geometric}. Furthermore, even when $d=1$, $(\mathcal{P}_2(\mathbb{R}^d),\mathcal{W}_2)$ contains an isometric copy of any finite-dimensional bounded Euclidean ball.  Thus, it has plenty of room and a complex-enough geometry to make it a viable candidate for a universal representation space. A fortiori, this intuition has been confirmed by Andoni, Naor, and Neiman \cite{NaorNeimanAndoniSnowflake2018AnnalsSENSQ} who show that $(\mathcal{P}_2(\mathbb{R}^d),\mathcal{W}_2)$ embeds any finite dataset with little or no distortion to $\xxx$'s metric geometry when $d>2$, with $d=2$ still an open problem in geometric analysis.
	
	A computational drawback of $(\mathcal{P}_2(\mathbb{R}^d),\mathcal{W}_2)$ is that its elements cannot be \textit{exactly implemented} by a computer (capable of processing real values) but rather only approximated by an expressive class of probability measures such as Gaussian mixtures or empirical distributions (atomic probability measures) \cite{chevallier2018uniform}. This makes them ill-suited for sharp embedding guarantees since the optimal approximation rate using, for example, atomic measures with $M$ atoms in $\mathcal{P}_2(\mathbb{R}^d)$ with respect to the Wasserstein distance is $\mathcal{O}(M^{-1/d})$ \cite{FoundationsofQuantization2000GrafLuschgy,KloecknerQuantizationAhlforsRegular2012,LiuPages2020Quantization}. Moreover, restricting the Wasserstein-$2$ distance to either the set of Gaussian mixtures or empirical (atomic) probability measures no longer yields a complete geodesic space, resulting in an ill-behaved geometry.
	
	We sidestep these issues by instead working on the smaller optimal transport-theoretic space $(\mathcal{GM}_2(\mathbb{R}),$ $\mw_2)$, introduced by \cite{WassersteinGaussianMixtures}, whose elements are (finitely parameterized) \textit{univariate Gaussian mixtures} and whose distance is a strengthening of the Wasserstein distance obtained by considering only those couplings which are themselves Gaussian mixtures. This space, metrized by restricting the optimal couplings, is part of a broader line of recent work which encodes additional structure into the transport distance by constraining the couplings; we highlight adapted optimal transport \cite{allAdaptedTopsareEqual,AcciaioBackhoffZalashki_CausalOT_2020,Gudi2022_Ann}, martingale optimal transport \cite{MeteDolinsky_2014_PTRF,MathiasNizarMarcel_2017_AOP,JanGuo_2019_AAP,GudiJulio_2022_AAP}, and semi-martingale optimal transport \cite{ChongAriel_2019}.  The marked advantage of $(\mathcal{GM}_2(\mathbb{R}),\mw_2)$ over $(\mathcal{P}_2(\mathbb{R}^d),\mathcal{W}_2)$ is that its elements can  be exactly implemented (up to numerical precision) which eliminates the quantization error of $\mathcal{O}(M^{-1/d})$.  Geometrically, $(\mathcal{GM}_2(\mathbb{R}),\mw_2)$ shares many of the appealing properties of the classical Wasserstein space $(\mathcal{P}_2(\mathbb{R}^d),\mathcal{W}_2)$ (e.g., it is a complete separable \textit{geodesic space}); thus, there are no geometric drawbacks of working with $(\mathcal{GM}_2(\mathbb{R}),\mw_2)$ instead of with $(\mathcal{P}_2(\mathbb{R}^d),\mathcal{W}_2)$.  As byproduct, since the distance $\mw_2$ is strictly stronger than the classical Wasserstein distance $\mathcal{W}_2$, our deep neural embedding results into $(\mathcal{GM}_2(\mathbb{R}),\mw_2)$ automatically imply embeddings in $(\mathcal{P}_2(\mathbb{R}),\mathcal{W}_2)$.  
	
	To summarize, we move to the space of univariate Gaussian mixtures with transport distances of \cite{WassersteinGaussianMixtures} for which there is a well-developed machine learning machinery (including non-linear dimension reduction \cite{bigot2017geodesic}, clustering \cite{mi2018variational}, and regression \cite{chen2021wasserstein}) and which is neither ``too small'' nor ``too flat''.  Furthermore, as we show, the feature maps are computationally tractable and can be parameterized by deep neural networks. Finally, several maintained Python libraries streamline the implementation of deep learning algorithms in these spaces \cite{PyOpt_2021_FlamaryCoutryetAl,GaussMixGithub}.  
	
	\subsection{Universal Feature Maps}	

	We implement the feature maps using an extension of the \textit{probabilistic transformer} (PT) model \cite{AB_2022,kratsios2021_GCDs}. PT extends the classical transformer network used in natural language processing\footnote{And now in many other places in deep learning.} \cite{vaswani2017attention}; in particular, the PT implements the classical transformer ``on average''. It can be shown to universally approximate any continuous function while simultaneously \emph{exactly} implementing arbitrary compact (possibly non-convex) constraints \cite{AB_2022}.
	
	\begin{figure}[ht!]%
		\centering	
        \includegraphics[width=1\linewidth]{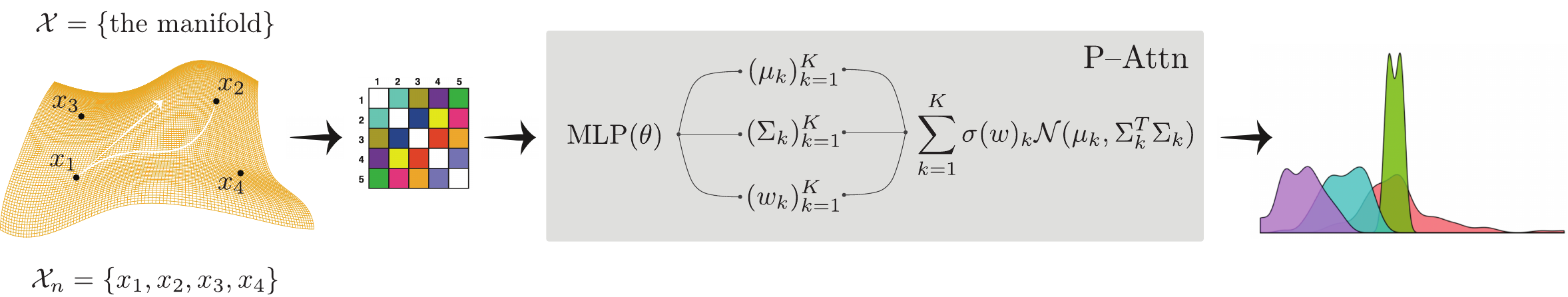}
		\caption{Schematic view of a probabilistic transformer ({\color{gray}{gray}}) representing a metric space $\xxx$ ({\color{orange}{orange}}) while bi-Lipschitz embedding the distinguished points (landmarks, data) $\xxx_n$ ({\color{landmarkgreen}{black}}).}
		\label{fig:transformer_general_intro}
	\end{figure}
	
	We adapt the more general probabilistic transformer model of \cite{kratsios2021_GCDs} by composing it with a non-parameterized input layer which translates metric information into vectorial data by mapping a point in $\xxx$ to a vector of its distances to a set of \emph{landmarks} $\{x_l\}_{l=1}^n\subseteq \xxx$ (cf. Figure~\ref{fig:transformer_MG}). The resulting process of representing a metric space $\xxx$ with the PT with embedding guarantees for any distinguished $n$-point dataset $\xxx_n \subseteq \xxx$ is illustrated in Figure~\ref{fig:transformer_general_intro}.  
    	
	\subsection{Summary of Contributions}
    \begin{figure}
		\centering
		\includegraphics[width=0.35\linewidth]{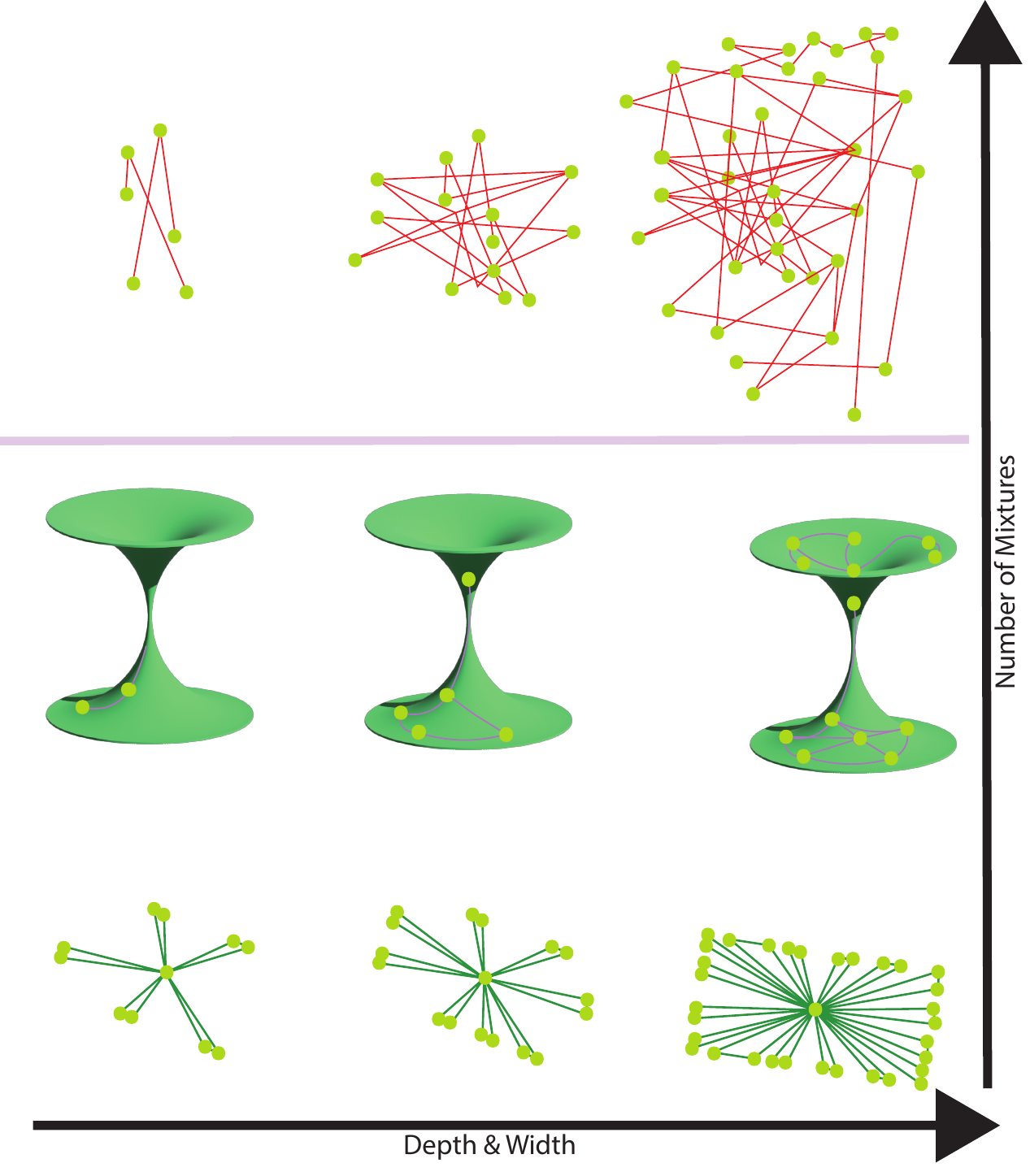}
		\caption{Embedding landscape of probabilistic transformers. Data from structured spaces such as trees or manifolds is ``easiest'' to embed in terms of feature map complexity. Data from unstructured spaces (general metric graphs, expanders) requires more complex (but still uncursed) neural feature maps.}
		\label{fig_paper_summary}
    \end{figure}
    
	Our results can be summarized with reference to the PT's embedding landscape in Figure~\ref{fig_paper_summary}.  Our first two \textit{universal representation theorems} relate to the first row, where no ``geometric prior'' is assumed on the metric space $\xxx$.  The second and third rows relate to the sharper embedding guarantees we obtain when $\xxx$ does have a regular geometry. In particular, this regularity allows us to explicitly bound the embedding's effective dimension which becomes independent of $n$, and depends only on $\xxx$'s latent geometry.  
	
	We begin by showing that, for any finite dataset $\xxx_n$ in $\xxx$ and any geometric perturbation parameter $\alpha\in (1/2,1)$ (H\"{o}lder coefficient), there is a PT which bi-$\alpha$-H\"{o}lder embeds $\xxx_n$ into $\mathcal{R}$ with metric distortion equal to that of the quantitative version of Assouad's embedding theorem \cite{Assouad} as derived by Naor and Neiman \cite{naor2012assouad}. We note that the result of Naor and Neiman is an existence theorem, while we prove that the embedding can be implemented by a concrete deep neural network (the PT) with explicitly controlled width and depth.  
	Of course, it is desirable to transition from $\alpha < 1$ to a bi-Lipschitz embedding with $\alpha=1$. One reason is that Lipschitz feature maps are easiest to learn from \cite{kratsios2021_GDL}. Our second main result shows that for any $D>1$, still without geometric assumptions on $\xxx$, there exists a PT which represents $\xxx_n$ in $\mathcal{R}$ and for which a pair of uniformly randomly sampled points are bi-Lipschitz embedded with metric distortion at most $D$ with probability at least $\mathcal{O}(n^{-4e/(1+D)})$. 
	Thus the PT exhibits a natural trade-off between embedding quality and satisfiability of the embedding.  
	
	It is natural to expect that if $\xxx$ has a regular geometric prior, it should be possible to obtain a deterministic small-distortion bi-Lipschitz guarantee for all points in $\xxx_n$.  This is indeed the case: we show that if $\xxx$ is a compact $d$-dimensional Riemannian manifold with controlled Ricci curvature, then there is a PT which bi-Lipschitz embeds $\xxx_n$ into $\mathcal{R}$ with effective dimension $3d+6$.  A similar fact holds for combinatorial trees, with PT's metric distortion coinciding with that of Gupta's embeddings of trees in Euclidean space \cite{Gupta_Trees_Quantitiativefinitedimembeddings_ACM}.  In these cases, we also obtain explicit depth, width, and effective dimension estimates. It is worth emphasizing that the depth and the width only depend on the number of points $n$ for which we seek a memorization guarantee (the number of landmarks) and the effective dimension only depends on the regularity and geometry of $\mathcal{X}$ (cf. Figure~\ref{fig_paper_summary}).  
	Lastly, we show that if one embeds into Gaussian mixtures on Euclidean space of dimension at-least three, then there are probabilistic transformers which can implement bi-H\"{o}lder embeddings of arbitrarily low distortion.
	
	Our results are the first theoretical results on embeddings of datasets from metric spaces using deep neural feature maps. They are also the first guarantees that the discrete geometries covered by our results can be embedded into the space of Gaussian mixtures on $\mathbb{R}$ with an optimal transport-type distance, independently of how the embeddings are implemented. Finally, we develop new proof techniques that opportunely combine metric embedding theory \cite{heinonen2003geometric,KrauthgameLeeNaor2004,naor2012assouad,ostrovskii2013metric,eriksson2018quantitative,NaorNeimanAndoniSnowflake2018AnnalsSENSQ}, memorization theory of deep feedforward networks \cite{sontag1997shattering,Park,NEURIPS2020_1e14bfe2,NEURIPS2020_662a2e96,Memorization_ReluThreshfold_2020_SIAM,NEURIPS2020_34609bdc,vardi2022on}, and computational optimal transport \cite{WassersteinGaussianMixtures}. Our results immediately imply embeddings into the usual Wasserstein-2 space over $\mathbb{R}$.

	\section{Geometric Background}
	\label{s_Intro_Background}
	We briefly overview some of the relevant terminologies from metric embedding theory.  For further details we recommend the book \cite{ostrovskii2013metric} or the lecture notes \cite{matouvsek2013lecture,naor2015metric}.  
	
	A metric space is a pair $(\xxx,d_{\xxx})$ of a set $\xxx$ and a \textit{distance function} $d_{\xxx}:\xxx\times \xxx\rightarrow[0,\infty)$ which is symmetric in its arguments, is $0$ if and only if $x=\tilde{x}$ (thus can uniquely identify points), and satisfies the triangle inequality: for every three points $x_1,x_2,x_3$ in $\xxx$
	$
	d_{\xxx}(x_1,x_3)\leq d_{\xxx}(x_1,x_2) + d_{\xxx}(x_2,x_3)
	$. 
	The prototypical metric space is the Euclidean space; i.e. $\rr^n$ with Euclidean distance $d^2(x,\tilde{x})\eqdef \sum_{i=1}^n (x_i-\tilde{x}_i)^2$.  We now review three other examples which are central to our analysis.
	
	\subsection{Combinatorial Graphs}
	\label{s_Intro_Background__ssCombinatorialGraphs}
		A \textit{(combinatorial) graph} is a pair $G\eqdef (V,E)$ of a set of \textit{vertices} (or vertices) $V$ and a set of unordered pairs%
		\footnote{For a set $V$, $\binom{V}{2}$ denotes the quotient of the Cartesian product $V\times V$ under the equivalence relation $\sim$ defined on any $(v_1,v_2),(v_3,v_4)\in V\times V$ by $(v_1,v_2)\sim (v_3,v_4)$ if and only if $(v_1,v_2)=(v_3,v_4)$ or $(v_1,v_2)=(v_4,v_3)$.}
		$E \subseteq \binom{V}{2}$, called \textit{edges}, such that $\{u, v\} \in E$ if there is an edge between $u,v\in V$.  We will always assume $G$ to be \textit{connected}, which means that for any $u,v\in V$ there is a sequence of edges $\{u,v_1\}$, $\dots$, $\{v_N,v\}$ in $E$ ``linking $u$ to $v$.  
		Every graph $G = (V,E)$ induces a metric space $\xxx_G\eqdef (V,d_G)$ whose \textit{graph geodesic} distance $d_G$ is defined for any $u,v\in V$ by
		\[
        		d_G(v,u) 
    		\eqdef 
                d_G(u,v) 
                    \eqdef 
                \inf\big\{N : \exists\,\{v, v_1\},\dots\{v_{N-1},u\}\in E\big\}
		.
		\]
    We only consider \textit{simple graphs}, meaning that any two vertices can be connected by at most one edge.  
	
	\subsection{Riemannian Manifolds with Lower-Bounded Ricci Curvature}
	\label{s_Intro_Background__ssRiemanninan_manifold_LBRicci_CurvatureRiemanninan_manifold_LBRicci_Curvature}
	We overview the ``smooth geometric prior'' considered in this paper; namely, we review the notion of a Riemannian manifold.  
	For a more detailed exposition on Riemannian geometry, we refer the reader to \citep[Chapter 1]{jost2008riemannian} on the topic.  
	
	A $d$-dimensional Riemannian manifold $(M,g)$ is a pair of a $d$-dimensional \textit{smooth manifold} $M$ and smoothly varying family of positive-definite inner-products $g\eqdef (g_x)_{x\in M}$, with each $g_x$ mapping pairs of vectors in the tangent space $T_x(M)$ above the point $x\in M$ to $\rr$.  Equivalently, by \citep[Theorem 2]{Nash_1954_c1_Embeddings} we may describe any $d$-dimensional Riemannian manifold ``extrinsically'' by viewing it as a Riemannian submanifold of the Euclidean space $\rr^{2d+1}$.
	
    We say that $(M,g)$ is complete if every two points in $M$ can be reached by a (continuous) curve and if the \textit{geodesic distance}, defined for $x,\tilde{x}\in M$ by
		\begin{equation}
			\label{s_Intro_Background__ssRiemanninan_manifold_LBRicci_CurvatureRiemanninan_manifold_LBRicci_Curvature___definition_Riemannian_metric}
			d_g(x,\tilde{x})
			\eqdef
			\inf_{\gamma}\,
			\int_0^1 \, 
			\langle \gamma'(t),\gamma'(t)\rangle_{\gamma(t)}
			dt
			,
		\end{equation}
		makes $(M,d_g)$ into a complete metric space; where the infimum in~\eqref{s_Intro_Background__ssRiemanninan_manifold_LBRicci_CurvatureRiemanninan_manifold_LBRicci_Curvature___definition_Riemannian_metric} is taken over all piece-wise continuously differentiable curves $\gamma:[0,1]\rightarrow \rr$ starting at $x$ and ending at $\tilde{x}$; i.e. $\gamma(0)=x$ and $\gamma(1)=\tilde{x}$.  By the main result of \cite{Hopf_Rinow_1931}, the completeness of $(M,d_g)$ is equivalent to the existence of unique smooth curves minimizing the objective function~\eqref{s_Intro_Background__ssRiemanninan_manifold_LBRicci_CurvatureRiemanninan_manifold_LBRicci_Curvature___definition_Riemannian_metric}; these curves are called \textit{geodesics}.  
		
		In this paper, we focus on Riemannian manifolds satisfying a lower-bounded \textit{Ricci curvature condition}, which we express using the \textit{curvature-dimension inequality} of \cite{BaudoinBonnefontGarofalo_2014_SubRiemannianCurvatureDoubling}: for every infinitely differentiable map $f:M\rightarrow \rr$ the following holds
		\begin{equation}
			\label{eq_cdi_beaudoin}
			\frac1{2} \Delta \| \nabla f\|_g^2 - \langle \nabla f,\nabla (\Delta f)\rangle_g
			\geq 
			\frac1{d} \|\Delta f\|_g^2 + r  \|\nabla f\|_g
			,
		\end{equation}
		where $\Delta$ is the Laplacian on $(M,g)$ (which describes heat flow on $(M,g)$) and $\|\cdot\|_g\eqdef \langle \cdot,\cdot\rangle_g$.  To make matters concrete, one can show that a $d$-dimensional sphere of radius $\rho>0$ satisfies~\eqref{eq_cdi_beaudoin} with $r=\rho^{-2}d(d-1)>0$, the hyperbolic plane satisfies~\eqref{eq_cdi_beaudoin} with $r=-1<0$, and the Euclidean space satisfies~\eqref{eq_cdi_beaudoin} with $r=0$.  Since $\Delta$ dictates the heatflow on $(M,g)$, then $r$ in~\eqref{eq_cdi_beaudoin} can be interpreted as describing the rate at which heat dissipates on $(M,g)$ with small $r$ being fast (e.g.\ the hyperbolic plane) and large $r$ being slow (e.g.\ the sphere).  
	
	\subsection{Mixed-Gaussian Optimal Transport}
	\label{s_Intro_Background__ssWasserstein_Space}
    We rely on one more example of a non-Euclidean metric space, which is a variant of the space of probability measures introduced by \cite{vaserstein1969markov_OG}, in the context of optimal transport theory.  This is the optimal transport-theoretic space of Gaussian mixtures introduced in \cite{WassersteinGaussianMixtures}, which specializes in the classical optimal transport constructions to the class of Gaussian mixtures.  This metric space of probability measures is part of a broader area of contemporary research encoding structure into transport plans by restricting the classes of admissible transport plans; see \citep{MonotoneTransport_BeiglbrockLabodereTouzi_2017,NAdalTalay_Transport_Refined_SDE,AcciaioBackhoffZalashki_CausalOT_2020,Backhodd_Bartl_Beiglbock__AlladaptedTopequal_2020,BackhoddPammer_2022_StabilityMartingaleAndWeakOT}).  In what follows, we use $\mathcal{P}_p(\rr^d)$ to denote the set of probability measures on the Euclidean space $\rr^d$ admitting a $p^{th}$-moment; i.e. $\pp\in \mathcal{P}_p(\rr^d)$ if $\mathbb{E}_{X\sim \pp}[\|X\|^p]<\infty$.

		We refer to the metric space $\mathcal{W}_2(\rr^d) \eqdef (\mathcal{P}_2(\rr^d),\mathcal{W}_2)$ as the Wasserstein space, where $\mathcal{W}_2$ is defined for any two $\pp$ and $\qq$ in $\mathcal{P}_2(\rrd)$ by
		\begin{equation}
			\label{eq_s_Intro_Background__ssWasserstein_Space__DefinitionWasserstein}
			\mathcal{W}_2^2(\pp,\qq)
			\eqdef 
			\inf_{\pi\in \operatorname{Cpl}(\pp,\qq)}\,
			\mathbb{E}_{(X,Y)\sim \pi}\left[
			\|X-Y\|^2
			\right]
			,
		\end{equation}
		where the set $\operatorname{Cpl}(\pp,\qq)$ consists of all probability distributions on $\rr^{2d}$ with marginals $\pp$ and $\qq$.  A key point, highlighting our interest in the case in embedding of metric spaces into univariate distributions, is that when $d=1$ then
		\[
		\mathcal{W}_2(\pp, \qq) 
		= 
		\int_0^1 \Big( Q_{\pp}(t) - Q_{\qq}(t) \Big)^2 \, dt
		,
		\]
		where $Q_{\pp}$ and $Q_{\qq}$ are the respective quantile functions of $\pp$ and of $\qq$.  Thus, for empirical distributions, unlike in the multivariate case ($d>1$) where computing $\mathcal{W}_2(\pp,\qq)$ has super-cubic complexity \cite{orlin1988faster}, in the univariate case ($d=1$) $\mathcal{W}_2$ can be easily computed via a simple sorting procedure; i.e. with near linear complexity.  
		
		In the case where $\pp$ and $\qq$ are \textit{Gaussian mixtures} \citep[Section 4.1]{WassersteinGaussianMixtures} propose a stronger distance function than the classical Wasserstein-$2$ distance $\mathcal{W}_2(\pp,\qq)$ on the set of subset of $\mathcal{P}_2(\rr)$ consisting of univariate Gaussian mixtures; we denote this set by $\mathcal{GM}_2(\rr)$.  This distance is defined by restricting the couplings in optimization problem~\eqref{eq_s_Intro_Background__ssWasserstein_Space__DefinitionWasserstein} to the smaller class couplings $\pi$ which are themselves are Gaussian mixtures (on $\rr^{2d}$).  
		Denoted by $\mw_2(\pp,\qq)$, the mixed-Gaussian Wasserstein distance between two Gaussian mixtures $\pp=\sum_{i=1}^I w_{1,i}\,N(\mu_{1,i},\Sigma_{1,i})$ and $\qq=\sum_{j=1}^J w_{2,j} \, N(\mu_{2,j},\Sigma_{2,j})$ is equivalently expressed in the following computationally tractable formulation.
		Denote the Wasserstein-2 distance between $N(\mu_{1, i}, \Sigma_{1, i})$ and $N(\mu_{2, j}, \Sigma_{2, j})$ by $\omega_{ij}$ and let $\Omega = (\omega_{ij}^2)_{i, j = 1}^{I, J}$. Then $\mw_2^2(\pp, \qq)$ is given by
		\begin{align*}
		    \mw_2^2(\pp, \qq) ~~ \eqdef \quad\quad &\min_{\mathclap{V \in [0, \infty)^{I \times J}}} \quad\quad
		    ~~~\operatorname{tr}( V^\top \Omega)\\
		    &\text{subject to}
		    \quad
		    V\boldsymbol{1}_{J} = w_{1}\\
		    &\phantom{\text{subject to}}
		    \quad
		    V^\top \boldsymbol{1}_{I} = w_{2}
		\end{align*}
		where $\boldsymbol{1}_L=(1,\dots,1)^\top$ denotes the $L$-dimensional vector of ones.  Central to the tractability of $\mw_2^2$ is that, by \cite{ClosedFormW2}, the Wasserstein-$2$ distance between two Gaussians $N(\mu_1,\Sigma_1)$ and $N(\mu_2,\Sigma_2)$ is given in closed form by
		\[
    		\mathcal{W}_2^2\big(
        		N(\mu_1,\Sigma_1)
        		    ,
        		N(\mu_2,\Sigma_2)
    		\big)
    		    =
    		\left\|
    			\mu_{1}
    			-
    			\mu_{2}
    			\right\|^2
    			+
    			\operatorname{tr}\left(
    			\Sigma_{1}
    			+
    			\Sigma_{2}
    			-
    			2 \sqrt{
    				\sqrt{\Sigma_{1}}
    				\Sigma_{2}
    				\sqrt{\Sigma_{1}}
    			}
    			\right)
		,
		\]
		where $\sqrt{A}$ denotes the square-root of a symmetric positive-definite matrix $A$.  
		
		There are several relationships between the two optimal transport distances.  First, clearly by definition one has $\mathcal{W}_2\leq \mw_2$; moreover, the inequality is often strict.  
		Conversely, it holds that
		\[
		\mw_2(\pp,\qq) \leq 
		\mathcal{W}_2(\pp,\qq)
		+
		\sqrt{2}
		\biggl(
			\Big(
    			\sum_{i=1}^I
    			w_{1,i} \operatorname{tr}\left(\Sigma_{1,i}\right)
			\Big)^{1/2}
			+
			\Big(
    			\sum_{j=1}^J
    			w_{j,i} \operatorname{tr}\left(\Sigma_{2,j}\right)
			\Big)^{1/2}
		\biggr)
		;
		\]
		from which we see that $\mw_2$ equals to $\mathcal{W}_2$ for finitely supported measures.  
	We use $(\mathcal{GM}(\rr^d),\mw_2)$ to denote the space of Gaussian mixtures with mixed-Gaussian Wasserstein metric.  
	
	\subsection{Bi-H\"{o}lder Embeddings}
	A map $\phi:\xxx\hookrightarrow \mathcal{R}$ from a metric space $(\xxx,d)$ and to a (metric) representation space $(\mathcal{R},d_{\mathcal{R}})$ is called a bi-H\"{o}lder embedding if: there is an $0<\alpha\leq 1$, a \textit{scale} $s>0$, and a \textit{distortion} $D\geq 1$ such that for every $x,\tilde{x}\in \mathcal{X}$
    \begin{equation}
    \label{eq_biHolder_definition}
    	    sd^{\alpha}(x,\tilde{x}) \,
	\le
	    d_{\mathcal{R}}(\phi(x_1),\phi(x_2)) 
	\le
	\,
	    sD\,d^{\alpha}(x,\tilde{x})
	.
    \end{equation}
	The smallest value of $D$ for which~\eqref{eq_biHolder_definition} holds quantifies the worst-case dilation between any two points induced by the feature map $\phi$, upon re-scaling%
	\footnote{If $\mathcal{R}$ is a normed space and $\phi$ is invariant under linear maps then we can always absorb $s$ into the model.  However, for general representation spaces resealing is not meaningful without the scale $s$.  }
	by $s$.  As an example, if $\mathcal{R}=\rr^n$, $\xxx\subseteq \rr^m$, and if $\phi$ is smooth then $D$ is roughly $\max_{x}\,\|\nabla \phi(x)\|$; however, for general metric spaces these quantities are meaningless.  Thus, a large distortion $D$ only \textit{globally} distorts $\xxx$'s geometry by stretching all of it.  
    As illustrated in Figure~\ref{fig_alpha_bigdeal}, a small value of $\alpha$ relatively distorts the space's geometry by placing more importance (illustrated by bright colours) on nearby pairs of points and less importance on distant pairs of points (relative to the {\ color {red}{red}} vertex in the center of Figure~\ref{fig_alpha_bigdeal}).  This has the effect that large scales become more minor and small scales become larger when $\alpha\ll 1$, with this effect disappearing as $\alpha$, approaches $0$.  
	\begin{figure}[ht]%
    \centering
	\begin{subfigure}[t]{0.49\textwidth}
        \centering
        \includegraphics[width=.5\linewidth]{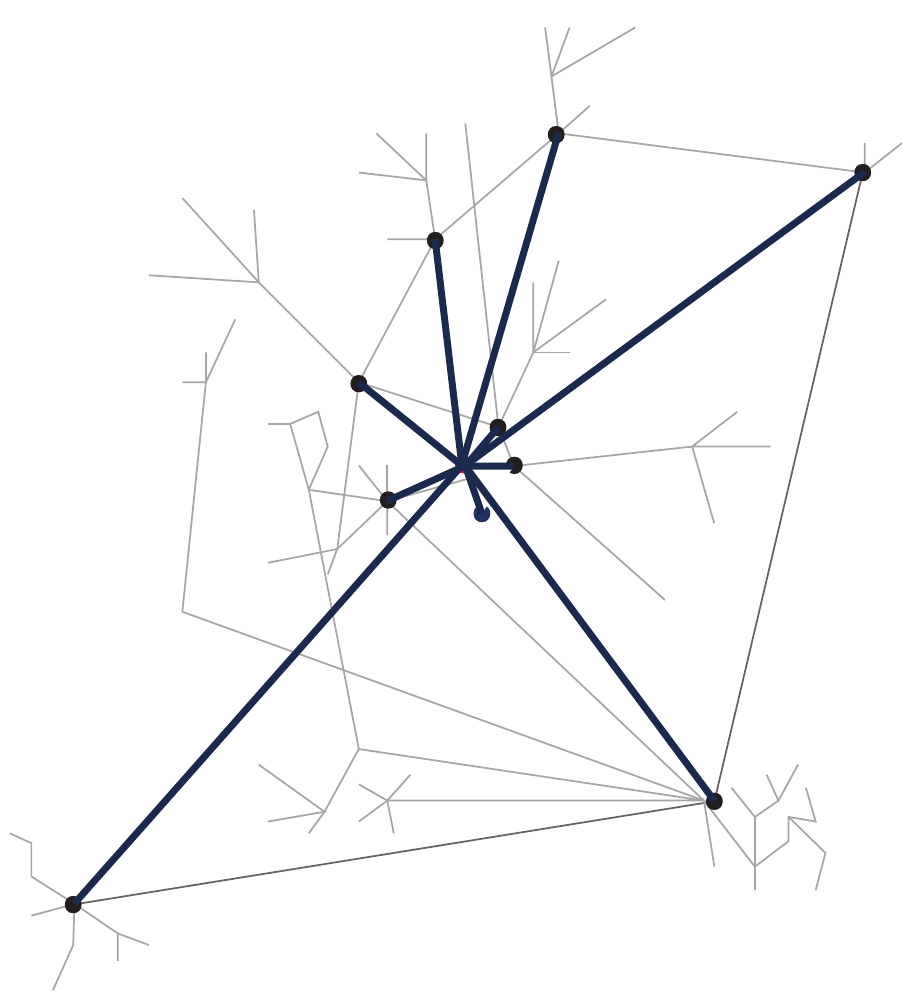}
		\caption{}\label{fig__no_softmessage_passing}
    \end{subfigure}
	\begin{subfigure}[t]{0.49\textwidth}
        \centering
        \includegraphics[width=.5\linewidth]{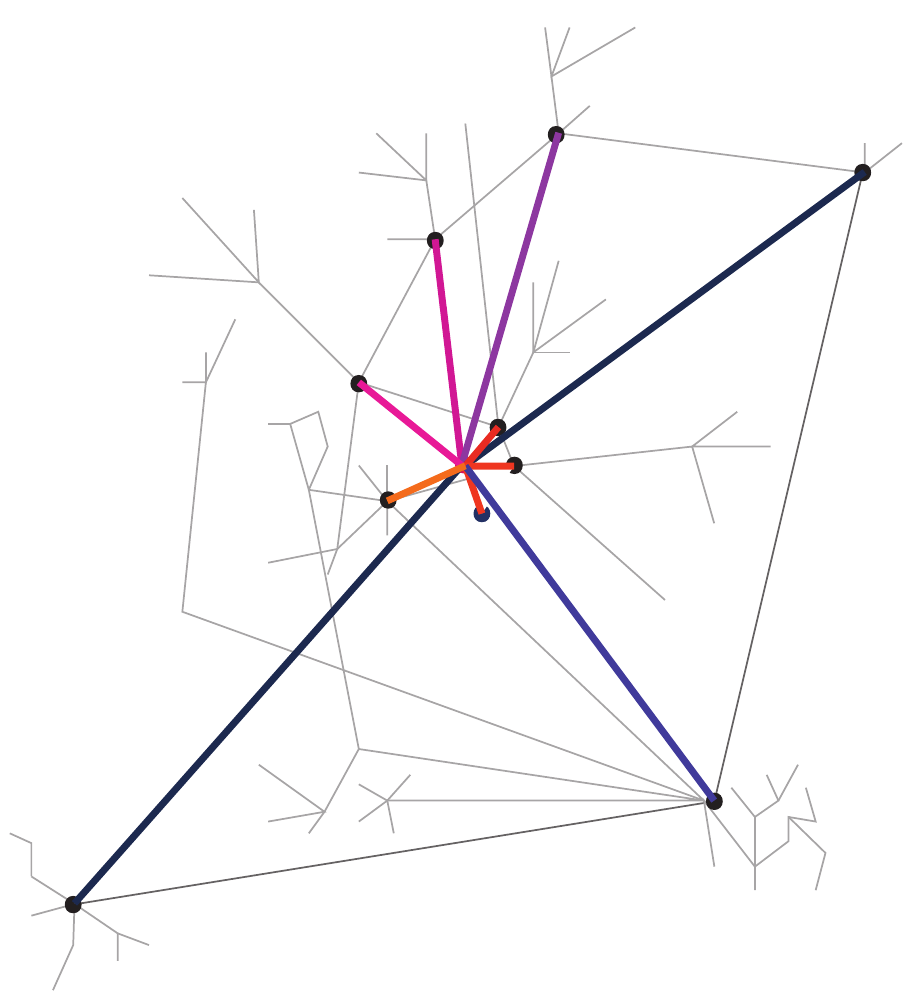}
		\caption{}\label{fig__softmessage_passing}
    \end{subfigure}
    \caption{Effect of varying $\alpha$ on the geometry of the embedding metric space.
        a) $\alpha=1$ (Lipschitz): all distances are equally distorted.
	    b) $\alpha\ll 1$ (fractional): small distances are prolonged, large distances are shortened.}
    \label{fig_alpha_bigdeal}
    \end{figure}
    Effectively $\alpha$ plays a comparable role to tuning the \textit{neighbourhood size} in classical iterations of graph attention \cite{velickovic2018graph} or the role of neighborhood size when defining the \textit{mean average precision} performance metric of \cite{cruceru2020computationally,digiovanni2022heterogeneous}.   

    The parameter $\alpha \in (0,1]$ can also be interpreted through its role in the loss function, which we use to train our model.  If $\xxx_n$ is a set of points which we want to embed into $\mathcal{MW}(\rr)$ then in our numerical experiments, we will train a probabilistic transformer to learn a low distortion bi-H\"{o}lder embedding by numerically optimizing the following loss function which is a proxy for condition~\eqref{eq_biHolder_definition}:
    \begin{equation}
    \label{eq:loss_to_minimize}
        \sum_{x,\tilde{x}\in \xxx_n} 
            \big\vert
                d_{\xxx}(x,\tilde{x})^{\alpha}
                    -
                \mw_2(\phi(x),\phi(\tilde{x})) 
            \big\vert
    .
    \end{equation}
    Here, the minimization is taken over the set of a set of models.  When $\alpha\ll 1$ then,~\eqref{eq:loss_to_minimize} over emphasizes embedding nearby pairs of points over embedding distant pairs of points.  In contrast, when $\alpha=1$, all pairs of points are given equal importance in the embedding.  Thus, through the loss function~\eqref{eq:loss_to_minimize}, $\alpha$ implicitly plays the role of the neighborhood size defining the graph attention mechanism of \cite{velickovic2018graph} or the average (distance) distortion loss function (used in e.g.\ \cite{JostNature1,digiovanni2022heterogeneous}).  
	
	In \cite{KrauthgameLeeNaor2004}, the authors introduce the notion of an aspect ratio of a measure space as the ratio of total mass over the minimum mass at any point.  Similarly, we define the \textit{aspect ratio} of a finite metric space $(\xxx_n,d_n)$ as the ratio of the maximum distance between any two points therein (its diameter) over the minimum separation between any two distinct points.  We define
	\[
	\operatorname{aspect}(\xxx_n,d_n)
	\eqdef 
	\frac{
		\max_{x,\tilde{x}\in \xxx_n}\,
		d_n(x,\tilde{x})
	}{
		\min_{x,\tilde{x}\in \xxx_n
			;\,
			x\neq \tilde{x}}\,
		d_n(x,\tilde{x})
	}
	.
	\]
	We note that, closely related concepts to the aspect ratio can be found in the (approximate) memorization results for deep feedforward networks \cite{Sublinear_Memorization}. Furthermore, requiring that the aspect ratio is positive and finite circumvents otherwise memorization by a feedforward neural network may be impossible \cite{SontagShattering_1997}.

	\begin{wrapfigure}{l}{0.25\textwidth}
		\vspace{8mm}
        \centering
        \includegraphics[width=1\linewidth]{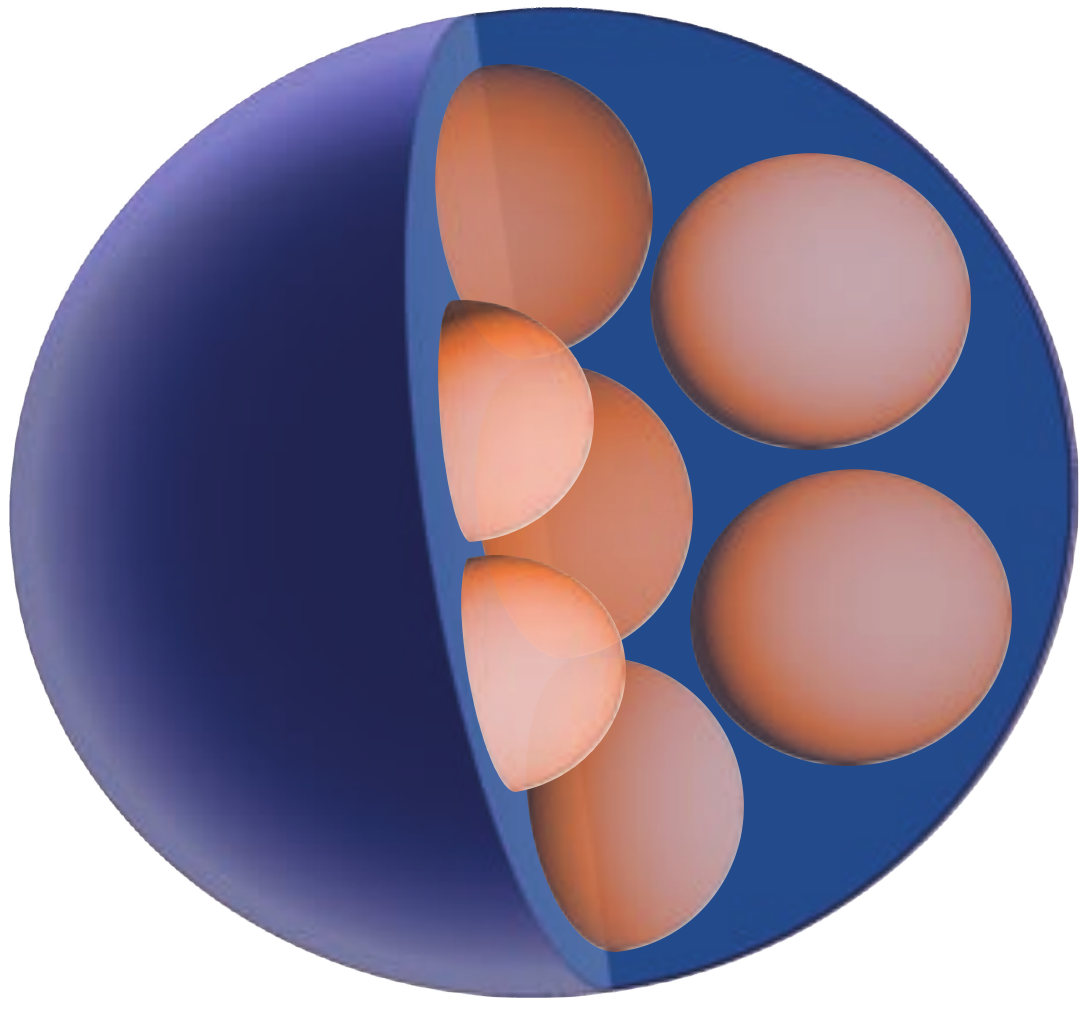}
        \caption{$\xxx$'s capacity is the number of small balls of radius $r/5$ that can fit in any ball of radius $r$.}
        \label{fig:metric_capacity}
        \vspace{-.05cm}
    \end{wrapfigure}

	A related quantity is the \textit{diameter} of the $n$-point metric space $(\xxx_n,d_n)$, defined by $\operatorname{diam}(\xxx_n,d_n)\eqdef \max_{x,\tilde{x}\in \xxx_n}\, d_n(x,\tilde{x})$.  Thus, the aspect ratio can be interpreted as $(\xxx_n,d_n)$'s diameter re-scaled by the reciprocal smallest distance in $\xxx_n$.

    Lastly, we will make use of the notion of ``metric capacity'' $\mathrm{cap}(\xxx,d_{\xxx})$ of a metric space $(\xxx,d_{\xxx})$.  
	This ``dimensional'' metric invariant is defined similarly to box or Hausdorff dimension (in fractal geometry \cite{Falconer_1986_GeometryofFractalSets}), is a formalization of the notion of dimension of a general metric space $\xxx$.  When $\xxx$ is the $m$-dimensional Euclidean space or when $m$-dimensional compact Riemannian manifold then, the logarithm of its capacity can be shown to be proportional to $m$.  
	Illustrated in Figure~\ref{fig:metric_capacity}, the \textit{capacity} $\mathrm{cap}(\xxx,d_{\xxx}) $ of a metric space $(\xxx,d_{\xxx})$ is equal to
    $$
    	\sup\big\{
        	    d \in \nn_+ 
        	:\,
            	\exists (x_i)_{i=0}^d \in \xxx^{d+1},\exists r>0\, \mbox{s.t. }
                	\sqcup_{i=1}^d \, B_{\xxx}(x_i,r/5)
                	    \subseteq 
                	B_{\xxx}(x_0,r)
    	\big\}
	,
	$$
	where $B_{\xxx}(x,r)\eqdef \{u \in \xxx:\, d_{\xxx}(u,z)<r\}$ and $\sqcup$ denotes the \textit{union} of disjoint sets.  We refer the reader to \citep[Chapter 10]{heinonen2001lectures} and \citep[Proposition 1.7]{Brue2021Extension} for further details on metric capacity and to \cite{le2015assouad} for an exposition of metric-theoretic notions of dimension.
	
	\section{The Probabilistic Transformer Model}
	\label{s_Transformer_Model}
    Since any good representation space should admit ``nice" feature maps which can be approximated by standard neural networks. Our main result shows that this holds for $\mathcal{MW}_2(\rr)$, with the neural network being the probabilistic transformer.  
	The PT model, introduced below and illustrated in Figure~\ref{fig:transformer_general}, processes a dataset $(\xxx, d_n)$ from a metric space $(\xxx,d_{\xxx})$ in three phases before producing a probabilistic output in $\mathcal{MG}_2(\rr^d)$.
	
	\begin{figure}[H]
		\centering	
		\includegraphics[width=0.9\linewidth]{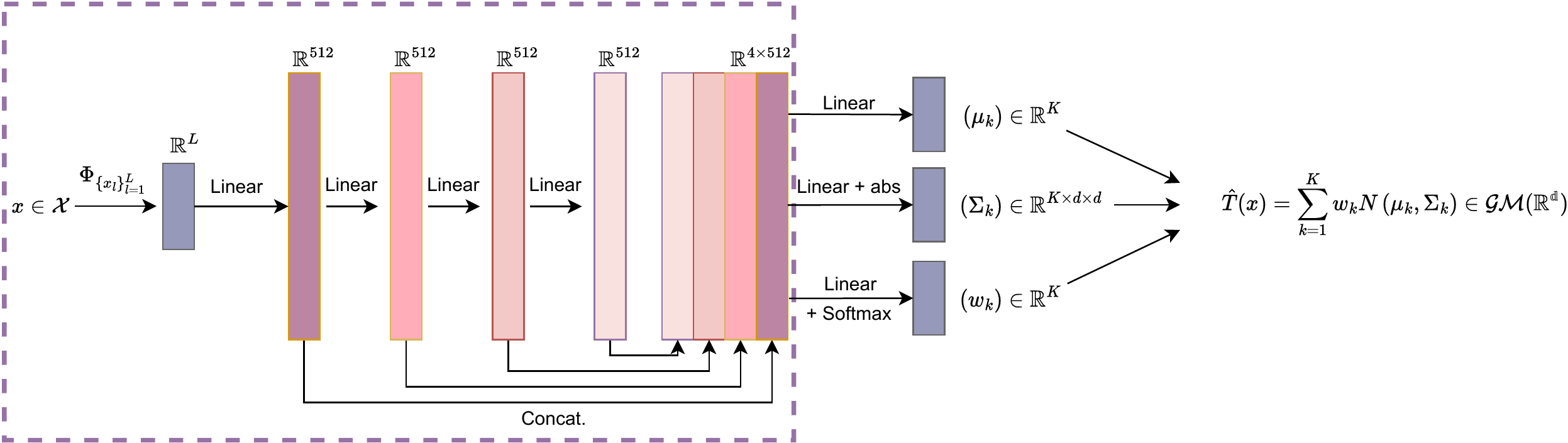}
		\caption{The architecture of a probabilistic transformer.}
		\label{fig:transformer_general}
	\end{figure}
	First, data in $(\xxx_n,d_n)$ is encoded into an $L$-dimensional vector of distances to a set of landmarks $x_1,\dots,x_L\in \xxx_n$. 
	The encoding, illustrated by the $\Phi_{\{x_l\}_{l=1}^L}$ layer in Figure~\ref{fig:transformer_general}, is reminiscent of the graph attention mechanism of \cite{velickovic2018graph}. The \textit{un-normalized graph attention} mechanism is defined as
	\begin{equation}
	    \label{eq:feature_map}
	\Phi_{\{x_l\}_{l=1}^L}
	: \xxx_n \ni x 
	\mapsto
	\left(
	d_n(x,x_{l})
	\right)_{l=1}^L \in \rr^L
	.
	\end{equation}
	The landmarks act as an intrinsic reference frame, they identify points in $\xxx_n$ by their relative positions to $\{x_{l}\}_{l=1}^L$.  This can be seen as a non-linear counterpart to a truncated orthonormal basis of a Hilbert space.
	When all points in $\xxx$ are chosen as landmarks, this un-normalized graph attention mechanism is a finite-dimensional version of the embedding of $(\xxx_n,d_n)$ into $(\rr^{n},\ell^{\infty})$ \cite{frechet1906quelques}.  
	
	\begin{remark}
	\label{remark_trainablelandmarks}
	    The points in $x_1,\dots,x_L$ in the definition of $\Phi_{\{x_l\}_{l=1}^L}$ are in principle trainable and could be discovered algorithmically instead of being specified by the user.
	\end{remark}
	
	Next, this vectorial data is processed by a set of \textit{independent}, \textit{parallel}
	ReLU-feedforward networks $\mathcal{NN}_w:\rr^L\rightarrow \rr^K$, and $K$ ReLU-feedforward networks $\mathcal{NN}_1:\rr^L\rightarrow \rr^{d}\times \rr^{d\times d}, \dots,\mathcal{NN}_K:\rr^L\rightarrow \rr^{d}\times \rr^{d\times d}$.  The role of the networks $\{\mathcal{NN}_k\}_{k=1}^K$ is to process inputs to a $d$-dimensional Gaussian measure-valued outputs $N_d(\mu,\Sigma)$ by outputting a mean vector $\mu$ and \textit{a} square-root of a covariance matrix $\Sigma^{\top}\Sigma$.  The role of the network $\mathcal{NN}_w$ is to generate input-dependent weights in the $K$-simplex $\{w\in [0,1]^K:\,\sum_{k=1}^K w_k=1\}$ used to mix the Gaussians into a Gaussian mixture.  
	
	\begin{remark}
	\label{rem_paralelization}
	It is theoretically sufficient for the ReLU network to be independent. In practice, this can be relaxed by merging them into a single network as we do in Section~\ref{s_Experiments}.
	\end{remark}
	
	The predictions of all networks are then combined using the probabilistic attention mechanism \cite{AB_2022} which generalizes the construction of \cite{bishop1994mixture}.  The (Gaussian) probabilistic attention mechanism is defined for any $K\in \nn_+$, any \textit{weight} $w\in \rr^K$, and any set of \textit{keys} $Y\eqdef ((\mu_1,\Sigma_1),\dots,(\mu_n,\Sigma_K))\in \rr^{K\times (d+d^2)}$ by
	\[
	\operatorname{P-Attn}(w,Y)
	\eqdef 
	\sum_{k=1}^K\,
	\softmax_K(w)_k\,N(\mu_k,
	\Sigma_k^{\top}\Sigma_k)
	\in 
	\mathcal{P}_2(\rr^d);
	\]
	where we have identified $\rr^{d+d^2}$ with $\rr^d\times \rr^{d\times d}$ (where $\rr^{N\times M}$ is the set of $N\times M$ matrices) and $\softmax_K(u)\eqdef (e^{u_k}/\sum_{i=1}^K e^{u_i})_{k=1}^K$ is the \textit{softmax function}.  The probabilistic attention mechanism's place in our model is illustrated by the last layer in Figure~\ref{fig:transformer_MG}, which synthesizes the mixtures from the $(\mu_k)_{k=1}^K$, $(\Sigma_k)_{k=1}^K$, and the $(w_k)_{k=1}^K$ parameters produced by the feedforward networks. 
	
	The parameters defining the Gaussian mixtures are implemented by deep feedforward networks with ReLU activation function $\operatorname{ReLU}(x)\eqdef \max\{0,x\}$.  Let $D\in \nn_+$. Fix a \textit{depth}
	\footnote{The depth is the total number of affine layers defining the network and not the number of ``hidden layers''; that is, the depth is one more than the number of $\operatorname{ReLU}$ activation functions applied component-wise in~\eqref{eq_definition_ffNNrepresentation_function}.}
	$J\in \nn_+$ and a multi-index $[d]\eqdef (d_0,\dots,d_J)$ of integers with $d_0=d$ and $d_J=D$.  A function $\mathcal{NN}:\rr^d\rightarrow \rr^D$ is said to be a deep feedforward network if for every $j=1,\dots, J$ there are $d_{j}\times d_{j-1}$-dimensional matrices $A^{(j)}$ called weights and $b^{(j)}\in \rr^{d_j}$ called biases, such that $\mathcal{NN}$ admits the iterative representation 
	\begin{equation}
	\label{eq_definition_ffNNrepresentation_function}
		\begin{aligned}
			\mathcal{NN}(x) 
			\eqdef &
			A^{(J)}x^{(J)} + b^{(J)}\\
			x^{(j)} &\eqdef
			\operatorname{ReLU}\bullet( A^{(j)}
			x^{(j-1)}
			+
			b^{(j)})
			& \mbox{ for }  
			j=1,\dots,J-1,
		    \\
			x^{(0)} &\eqdef x
			,
		\end{aligned}
	\end{equation}
	where $\bullet$ denotes component-wise composition.  We now formalize the probabilistic transformer in Figure~\ref{fig:transformer_MG}.
	\begin{definition}[Probabilistic Transformer (PT)]
		\label{def_transformer}
		Fix a metric space $(\xxx,d)$ and a $D\in \nn_+$.  A \textit{probabilistic transformer} is a map $T:\xxx\rightarrow \mathcal{P}_2(\rr^D)$ with representation
		\begin{equation}
			\label{eq__def_transformer}
			T(x)
			=
			\operatorname{P-Attn}\left(
			\mathcal{NN}_w
			(u)
			,
			\left(
			    \mathcal{NN}_k(u)
			\right)_{k=1}^K
			\right)
			,
			\qquad 
			u\eqdef \Phi_{\{x_l\}_{l=1}^L}(x)
			,
		\end{equation}
		where $L\in \nn_+$, $x_1,\dots,x_L\in \xxx$, and $\mathcal{NN}_w:\rr^L\rightarrow \rr^K$ and $\mathcal{NN}_1:\rr^L\mapsto \rr^{D}\times \rr^{D\times D},\dots,\mathcal{NN}_K:\rr^L\mapsto \rr^{D}\times \rr^{D\times D}$ are ReLU feedforward networks.
	\end{definition}
	The \textit{depth} of a probabilistic transformer $T$ (with representation~\eqref{eq__def_transformer}) is defined as the maximum of the depth of each of the feedforward networks $\mathcal{NN}_w$, $\mathcal{NN}_1$, $\dots$, $\mathcal{NN}_K$.  Similarly, the \textit{width} of a transformer $T$ is defined as the maximum of the widths of each feedforward network  $\mathcal{NN}_w$, $\mathcal{NN}_1$, $\dots$, $\mathcal{NN}_K$.  These definitions are natural since each of the networks $\mathcal{NN}_w$, $\mathcal{NN}_1$, $\dots$, $\mathcal{NN}_K$ defining $T$ are independent, in the sense that their layers do not pass inputs to or from one another and their parameters have no explicit relations.  
	
	The \textit{effective dimension} of a probabilistic transformer $T$, is the maximum number of mixtures output by $T$ for any input scaled by the number of parameters defining each Gaussian probability measure in
	\[
	\mathrm{effdim}(T) := 
	K
	\,
	(D+D^2)
	,
	\]
	where we use the notation of Definition~\ref{def_transformer}.
	Thus, $d_T\eqdef K$.  
	We define the number of trainable parameters in $T$, denoted by $\mathrm{par}(T)$, as the aggregate number of \textit{trainable parameters} amongst the networks $\mathcal{NN}_w,\mathcal{NN}_1,\dots,\mathcal{NN}_K$; i.e.
	$%
	\mathrm{par}(T) 
	\eqdef
	\mathrm{par}(\mathcal{NN}_w)
	+
	\sum_{k=1}^K\, 
	\mathrm{par}(\mathcal{NN}_k)
	$.  As in \cite{JLMLR_BartlessHAveyLiawMagrabian_2019_VCBoundsReLUffNN}, the number of trainable parameters encodes ``network topology'' by only counting the number of non-zero entries, thus only counting the number of weighted and biases between any two neurons with a direct connection.  In the notation of~\eqref{eq_definition_ffNNrepresentation_function}, the number of trainable parameters of a neural network $\mathcal{NN}$ is defined as
	\[
	\mathrm{par}(\mathcal{NN}) 
	\eqdef
	\|c_0\|_0 + \sum_{j=1}^{J}\, \big(\|A^{(j)}\|_0 + \|b^{(j)}\|_0\big)
	,
	\]
	where $\|\cdot\|_0$ counts the number of non-zero entries in a matrix or vector of a given size.  
	For fully connected feedforward networks of constant width $\mathrm{par}(\mathcal{NN})$ is roughly equal to the network's depth multiplied by its width squared; but in general $\mathrm{par}(\mathcal{NN})$ will be much smaller since our networks will tend to have a \textit{sparsely connected} architecture analogously to \citep{suzuki2018adaptivity} or in the convolutional neural network theory \citep{zhou2020universality}.  If one replaces $\|\cdot\|_0$ with the Fr\"{o}benius and Euclidean norms (for the weight and biases respectively) then, $\mathrm{par}(\mathcal{NN})$ becomes a path norm \cite{neyshabur2015path,Zheng_Meng_Zhang_Chen_Yu_Liu_2019,WeinanWojtowytsch_pathnorm_ReLU2022} typically used in neural network regularization.

	\section{Main Results}
	\label{s_MainResults}
	
	Our \textit{quantitative} results concerning the embedding capabilities of probabilistic transformers in mixed-Gaussian Wasserstein spaces fall into two classes.  The first set of results concerns the possibility of embedding and the complexity of the model performing an embedding when no ``latent geometric structures'' are present in the dataset, and the latter class of results concerns the embedding of datasets with a latent geometry.  
	
	Table~\ref{tab:Complexities_paperversion} summarizes the rates in our main results.  We emphasize that explicit constants are also available and we refer the reader interested in detailed constants, and precise $\mathcal{O}$ rates instead of $\tilde{\mathcal{O}}$ to Table~\ref{tab:Complexities_explicit} in Appendix~\ref{s_explicit_network_complexities}. Note that we use the following asymptotic notation. Given $g,f:\mathbb{N}\rightarrow [0,\infty)$ will say that $g$ is $\tilde{\mathcal{O}}(f)$ if $g$ is $\mathcal{O}(\tilde{f})$ where $f = \tilde{f}\times$ logarithmic terms.  
	
	\begin{table*}[ht!]
    \centering
	\ra{1.3}
    \caption{Complexity of the probabilistic transformers performing $n$-point embeddings.  }
        \resizebox{\columnwidth}{!}{%
    		\begin{tabular}{@{}lllll@{}}
    			\cmidrule[0.3ex](){1-5}
    			\textbf{Latent Geometry} & \textbf{Effective Dimension $d_{T}$} & \textbf{Depth} $\in \widetilde{\mathcal{O}}(\,\cdot\,)$ & \textbf{Width} $\in \Omega(\,\cdot\,)$ & \textbf{Result} 
    			\\    
    			\midrule
    		    General
    		    & 
    		        $
    		        \Omega\Big(
            			\frac{ \log(
            			    \mathrm{cap}(\xxx_n,d_n))
            			}{
            			    \alpha
            			   }
        			\Big)
    		        $
    		    &
    		        $
    			        n
    			        \big(
    			            1
    			                +
    			            \log(
        			            n^{5/2}
        			            \operatorname{aspect}(\xxx_n,d_n)
    			            )
    			        \big)
    		        $
    		    & 
    		        $
    		        \max\{d_T,n^2\}
    		        $
    		   &
    		   Theorem~\ref{MAINTHEOREM_DETERMINISTIC}
    		\\
    			General 
    			& 
    			    $\mathcal{O}\big(
                    (D-2)
        			^{-2}
                    \theta_D
        			\log_2(n)
        			\big)
    			$
    			&
    			    $
    			        n^{\Theta_D}
    			        \big(
    			            1
    			                +
    			            \theta_D
    			                + 
    			            \log(
    			                \operatorname{aspect}(\xxx_n,d_n)
    			            )
    			        \big)
    			    $
    			& 
    			$
    			\max\{d_T,n^{2\theta_{D}}\}
    			$
                & 
                Theorem~\ref{MAINTHEOREM_PROBABILISTIC}
    		\\
            General - Multivariate Mixtures & 
    		$
    		\mathcal{O}\big(\frac{n^6}{(D-1)} \, \operatorname{aspect}(\xxx_n,d)^2 \big)
    		$
    		& 
    		 $ n^2 + \log\big(
    			            n^{5/2}\, \operatorname{aspect}(\xxx_n,d_n)
    			          \big)\, 
    		$
    		& 
            $
            \mathcal{O}\big(\frac{n^5(n-1)}{(D-1)} \, \operatorname{aspect}(\xxx_n,d)^2 \big)
            $ 
            & 
            Theorem~\ref{theorem:high_dimension_asymptotic_no_distortion}
            \\
    		    Discrete: Trees
    		    & 
    		        $M$
    		    &
    		        $
    			        n
    			        \big(
    			            1
    			                +
    			            \log(
        			            n^{5/2}
        			            \operatorname{diam}(\xxx_n,d_n)
    			            )
    			        \big)
    		        $
    		    & 
		        $\max\{M,n^2\}$
    		   &
    		   Proposition~\ref{prop_combinatorial_tree_embedding}
    		 \\
    		    Discrete: $2$-Hop Graphs
    		    & 
    		        $\Omega\Big(
    		            \frac{
    		                \log(1+\rho(A_{G}))
    		                }{
    		                \alpha
    		                }
    		        \Big)$
    		    &
    		        $
    			        n
    			        \big(
    			            1
    			                +
    			            \log(
        			            n^{5/2}
        			            \operatorname{diam}(\xxx_n,d_n)
    			            )
    			        \big)
    		        $
    		    & 
    		        $
    		        \max\{d_T,n^2\}
    		        $
    		   &
    		   Corollary~\ref{cor_combinatorial_graphs}
    		\\
    		    Manifold: Riemannian, Bounded Curvature
    		    & 
    		        $
    		        \Omega\Big(
    		          \frac{
    		            m^{1+\alpha}
    		          }{
    		            \alpha (1-\alpha)^{1+\alpha}
    		           }
    		        \Big)
    		        $
    		    &
    		        $
    			        n
    			        \big(
    			            1
    			                +
    			            \log(
        			            n^{5/2}
        			            \operatorname{aspect}(\xxx_n,d_n)
    			            )
    			        \big)
    		        $
    		    & 
    		        $
    		        \max\{d_T,n^2\}
    		        $
    		   &
    		   Corollary~\ref{cor_Ricci_Version}
    		\\
    		    Manifold: Riemannian and Compact
    		    & 
    		        $\Omega(d)$
    		    &
    		        $
    			        n
    			        \big(
    			            1
    			                +
    			            \log(
        			            n^{5/2}
        			            \operatorname{aspect}(\xxx_n,d_n)
    			            )
    			        \big)
    		        $
    		    & 
    		        $
    		        \max\{d,n^2\}
    		        $
    		   &
    		   Proposition~\ref{proposition_manifoldprior}\\
    			\bottomrule
    		\end{tabular}
    		}
        \label{tab:Complexities_paperversion}
    \end{table*}
	
	\subsection{Embedding Guarantees for General Datasets}
	\label{s_MainResults__ss_NoStructure}
	
	Our first main result shows that any metric space $\xxx$ can be represented in $(\mathcal{MG}_2(\rr),\mw_2)$ with a small PT guaranteeing that any fixed finite set of points can be embedded with little distortion, if we allow for minor perturbations to $\xxx$'s geometry.  By a ``small PT'', we mean that the representation implemented by the PT of Theorem~\ref{MAINTHEOREM_PROBABILISTIC} does not face the \textit{the curse of dimensionality}.  
	\begin{theorem}[Deterministic Fractional Embeddings of Large Data by Small Transformers]
		\label{MAINTHEOREM_DETERMINISTIC}
		\hfill\\ 
		There is an absolute constants $C>0$ such that, for every metric space $(\xxx,d_{\xxx})$, every finite subset $\xxx_n\subseteq \xxx$ with $n\ge 2$ points, and every ``geometric perturbation parameter'' $\frac{1}{2}<\alpha<1$, 
		there exists a probabilistic transformer $T$ for which: for every $x,\tilde{x}\in \xxx$
		\[
    		    d_{\xxx}^{\alpha}(x,\tilde{x})
    		\leq
    		    \mathcal{W}_2(T(x),T(\tilde{x}))
    		\leq
    		    \mw_2(T(x),T(\tilde{x}))
    		\leq 
        		\big\lceil
        		C
        		C_{\xxx_n}
        		\big\rceil
        		d_{\xxx}^{\alpha}(x,\tilde{x})
		,
		\]
		where $C_{\xxx}\eqdef \left(
		\frac{
			12
			\log
			\left(
			\mathrm{cap}(\xxx_n)
			\right)
		}{(1-\alpha)}
		\right)^{1+\alpha}$.  
		Furthermore, $T$'s effective dimension, depth, and width are recorded in Table~\ref{tab:Complexities_paperversion} (with explicit constant given in Table~\ref{tab:Complexities_explicit}). 
	\end{theorem}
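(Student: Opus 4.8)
The plan is to realize the probabilistic transformer's feature map as a composition
\[
\xxx_n \;\xhookrightarrow{\;\Phi\;}\; \rr^{n} \;\xrightarrow{\;F\;}\; \rr^{N} \;\xrightarrow{\;\iota\;}\; (\mathcal{GM}_2(\rr),\mw_2),
\]
where $\Phi=\Phi_{\{x_l\}_{l=1}^n}$ is the landmark feature map~\eqref{eq:feature_map} (with the $n$ data points themselves taken as landmarks), $\iota$ isometrically embeds a bounded Euclidean ball into the finitely supported measures, and $F$ is the piece implemented exactly by the ReLU blocks $\mathcal{NN}_w,\mathcal{NN}_1,\dots,\mathcal{NN}_K$. \emph{Step 1 (metric embedding).} Apply the quantitative Assouad theorem of Naor and Neiman \cite{naor2012assouad} to the finite metric space $(\xxx_n,d_n)$: for each $\alpha\in(1/2,1)$ there is a dimension $N=\Theta\!\big(\log(\mathrm{cap}(\xxx_n))/\alpha\big)$ and a map $\beta\colon\xxx_n\to\rr^N$ with $c_1 d_n^{\alpha}(x,\tilde x)\le\|\beta(x)-\beta(\tilde x)\|\le c_2 d_n^{\alpha}(x,\tilde x)$ and $c_2/c_1\le C\,C_{\xxx_n}$; here the metric capacity (Figure~\ref{fig:metric_capacity}) controls the doubling constant of $(\xxx_n,d_n)$ that enters Naor--Neiman's bounds. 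Since $\xxx_n$ is finite we may rescale so that $c_1=\sqrt N$ and $\beta(\xxx_n)\subset B(0,R)$ for some $R>0$.

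\emph{Step 2 (Euclidean ball into Gaussian mixtures).} Fix a separation $\Delta>4R$ and set $\iota(v)\eqdef\frac1N\sum_{k=1}^{N}N(k\Delta+v_k,\,0)$, a mixture of $N$ Dirac masses. The spacing $\Delta>4R$ forces the $\mw_2$-optimal coupling between $\iota(v)$ and $\iota(\tilde v)$ to match the $k$-th atom to the $k$-th atom, whence $\mw_2^2(\iota(v),\iota(\tilde v))=\mathcal{W}_2^2(\iota(v),\iota(\tilde v))=\tfrac1N\|v-\tilde v\|^2$, where the identity $\mw_2=\mathcal{W}_2$ on finitely supported measures is the one recorded in Section~\ref{s_Intro_Background__ssWasserstein_Space}. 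Composing Steps~1--2, $\iota\circ\beta$ is a bi-$\alpha$-H\"older embedding of $\xxx_n$ into $(\mathcal{GM}_2(\rr),\mw_2)$ with scale $1$ and distortion at most $\lceil C\,C_{\xxx_n}\rceil$, and the chain $d_\xxx^{\alpha}\le\mathcal{W}_2(\cdot)\le\mw_2(\cdot)\le\lceil C C_{\xxx_n}\rceil d_\xxx^\alpha$ is then immediate.

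\emph{Step 3 (realization by a PT).} The landmark map $\Phi$ is injective on $\xxx_n$ (the $j$-th coordinate of $\Phi(x_j)$ vanishes while that of $\Phi(x_k)$ does not for $k\neq j$), and after normalization its $n$ image points have pairwise separation and diameter controlled by $\operatorname{aspect}(\xxx_n,d_n)$. Invoking the memorization theory for ReLU feedforward networks \cite{vardi2022on,Park}, for each $k\le N$ there is a ReLU network $\mathcal{NN}_k\colon\rr^{n}\to\rr\times\rr^{1\times1}$ with $\mathcal{NN}_k(\Phi(x_j))=\big(k\Delta+\beta_k(x_j),\,0\big)$ for all $j$, i.e.\ it emits the $k$-th atom location and zero covariance; taking $\mathcal{NN}_w\equiv 0$ gives $\softmax_N(\mathcal{NN}_w(\cdot))\equiv(\tfrac1N,\dots,\tfrac1N)$, so with $K=N$ the probabilistic attention layer assembles exactly $T(x)=\operatorname{P-Attn}(\mathcal{NN}_w(u),(\mathcal{NN}_k(u))_{k=1}^N)=\iota(\beta(x))$, $u=\Phi(x)$. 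The depth, width, and effective dimension $d_T=K=N$ are read off from Step~1 and the memorization construction: the $\log(n^{5/2}\operatorname{aspect}(\xxx_n,d_n))$ factor in the depth is the number of bits of precision needed to first collapse the $n$ inputs to distinct, adequately separated reals via a generic linear projection (the $n^{5/2}$ accounts for the separation loss under projection) and then run a quantized interpolation stage, while the width $\Omega(\max\{d_T,n^2\})$ is the cost of that stage together with the $N$ parallel output heads.

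The step I expect to be the main obstacle is Step~3: obtaining the \emph{sharp} depth/width rates (rather than generic memorization bounds) requires carefully tracking the projection-plus-bit-encoding argument that produces the $n^{5/2}\operatorname{aspect}(\xxx_n,d_n)$ term and verifying that zero-variance outputs are admissible throughout the P-Attn assembly. A secondary subtlety is upgrading the two-sided estimate from $\xxx_n$ to all of $\xxx$: one handles this by running Steps~1--2 with Assouad's theorem applied to (a fine landmark net of) $\xxx$ itself when $\xxx$ is doubling — the only regime in which the curse-free claim is non-vacuous — so that the realized map $T$ remains well-defined on all of $\xxx$ while retaining the displayed bounds; the constants and complexities in Table~\ref{tab:Complexities_paperversion} then follow by combining the Naor--Neiman estimates from Step~1 with the network sizes from Step~3.
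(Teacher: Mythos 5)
Your proposal is correct and follows essentially the same route as the paper: the quantitative Assouad embedding of Naor--Neiman into $\rr^N$ (with $N$ controlled by the metric capacity via the doubling constant, cf.\ Lemma~\ref{lem_doubling_to_capacity_control_lemma}), followed by the shifted-coordinate embedding of a bounded Euclidean ball into finitely supported measures (the paper cites Kloeckner's construction and records the shift via Algorithm~\ref{Algorithm_RandomPartition}, where you use $k\Delta$), and finally ReLU memorization with uniform softmax weights and zero covariances to realize the composite as a PT (Lemma~\ref{lemma_embedding_neuralification}, which in turn rests on \cite{vardi2022on}). Your ``secondary subtlety'' about extending to all of $\xxx$ is lighter than you fear: the paper's guarantee is purely a memorization statement on $\xxx_n$ (the display's ``$x,\tilde x\in\xxx$'' should be read as ``$x,\tilde x\in\xxx_n$''), and the only thing that extends to $\xxx$ is the \emph{domain} of $T$ via the landmark map $\Phi$, so no Assouad embedding of a net of $\xxx$ is needed.
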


	Independently of any machine learning model, results such as \cite{Bartal2005RamseyPhenomena} confirm there are $n$-point metric spaces $\xxx_n$ with the property that \textit{any} finite subset $\tilde{\xxx_n}$ that bi-Lipschitz embeds into $\ell^2$ with distortion $D$ cannot contain more than $n^{1-\vartheta_D}$ points, for some $\vartheta_D>0$ depending only on $D$.  Equivalently, if we uniformly and independently pick a pair of points in $\xxx_n$ at random then they belong to $\tilde{\xxx}_n$ with probability at-most $\frac1{n^{2\vartheta_D}}$.  This places a fundamental limitation on the number of pairs of points which can be bi-Lipschitz embedded with distortion $D$ using \textit{any model}.  In particular, this type of limitation must translate to the PT model.  In contrast, this is not the case, when working with almost bi-Lipschitz ($\alpha$-H\"{o}lder with $\alpha \approx 1$) embeddings.  
	
	Nevertheless, the next result provides a probabilistic guarantee that a PT embeds pairs of points in the set of landmarks $\xxx_n$ with a given distortion.  As we may expect, the likelihood that any two given data points are embedded with a \textit{prescribed (maximal) distortion} is proportional to the size of the distortion.  In other words if we do not allow any perturbation to the landmark set's geometry then, with high probability, most pairs of points can be bi-Lipschitz embedded into $(\mathcal{MG}_2(\rr),\mw_2)$ with large distortion and with small probability most pairs of points in the landmark set can be embedded with small distortion. 
	\begin{theorem}[PAC-Type Embeddings of Large Data by Small Transformers]
		\label{MAINTHEOREM_PROBABILISTIC}
		Let $(\xxx,d_{\xxx})$ be a metric space, fix a finite subset $\xxx_n$ of $\xxx$ with $n \ge 2$ points, and let $\pp$ be the uniform probability measure on $\xxx_n$.  
        There is a $\frac{1}{e^{2}}<\delta_n<1$ depending only on $n$ such that for every $\delta_n< \delta<1$, there exists a ``scale'' $s>0$ and a probabilistic transformer $T:(\xxx_n,d_n)\hookrightarrow (\mathcal{MG}_2(\rr),\mw_2)$ satisfying
		\[
		\pp\biggl(
    		s
    		d_{\xxx}(x,\tilde{x})
    		\leq
    		\mathcal{W}_2(T(x),T(\tilde{x}))
    		    \,\mbox{ and }\,
    		\mw_2(T(x),T(\tilde{x}))
    		\leq 
    		s
    		D(D-1)
    		d_{\xxx}(x,\tilde{x})
		\biggr)
		\ge 
		\delta
		,
		\]
        Abbreviate $\lambda \eqdef \log_n(\sqrt{\delta})$.  The distortion parameter $D>2$ is given explicitly by
        \[
        	    D 
	        =
	        -2
	        \,
	        \frac{
        	    \big(
        	        1 + \lambda
        	    \big)^{
        	        \frac{
        	            1 + \lambda
        	        }{
        	            \lambda
        	        }
        	    }
        	 }{
        	    \lambda
        	 }
        \]
    Furthermore, $T$'s effective dimension, depth, and width are recorded in Table~\ref{tab:Complexities_paperversion} (explicit constants are in Table~\ref{tab:Complexities_explicit}).  
	\end{theorem}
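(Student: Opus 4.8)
The plan is to mirror the two-stage template used for Theorem~\ref{MAINTHEOREM_DETERMINISTIC}: first prove a purely metric statement — existence of a \emph{large} subset of $\xxx_n$ that admits a genuine (not merely H\"older) bi-Lipschitz embedding into $(\mathcal{MG}_2(\rr),\mw_2)$ with distortion $D(D-1)$ — and then realize that finite family of values by a probabilistic transformer of controlled size. The new ingredient, relative to Theorem~\ref{MAINTHEOREM_DETERMINISTIC}, is dictated by the obstruction recalled just before the statement (via \cite{Bartal2005RamseyPhenomena}): a small-distortion bi-Lipschitz embedding simply cannot hold for \emph{all} of $\xxx_n$, so we must content ourselves with a guarantee on a $\delta$-fraction of pairs, and the tool that produces such a fraction is a \textbf{metric Ramsey theorem}.

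\textbf{Step 1: Ramsey subset and metric embedding.} Put $\epsilon \eqdef -\lambda = -\log_n\sqrt{\delta}\in(0,1)$. I would invoke a quantitative metric Ramsey theorem of Bartal--Linial--Mendel--Naor type (\cite{Bartal2005RamseyPhenomena}, in an explicit-constant form): there is a subset $S\subseteq\xxx_n$ with $|S|\ge n^{1-\epsilon}$ such that $(S,d_\xxx|_S)$ embeds bi-Lipschitzly into a finite ultrametric with distortion at most (an explicit function of $\epsilon$ that, rewritten via $\lambda$, is the quantity $D$ in the statement). The ultrametric on $S$ can be taken to be a $2$-hierarchically-separated tree with only $O(\log_2|S|)=O(\log_2 n)$ distinct scales, and such a tree embeds into the quantile-function model of $\mathcal{W}_2(\rr)$ — a convex, hence Hilbertian, subset of $L^2[0,1]$ — which sits inside $\mathcal{MG}_2(\rr)$; representing this embedding with only $O(\log_2 n)$ Gaussian components (rather than one atom per point of $S$) forces an additional controlled distortion factor, and since the lower bound is read in $\mathcal{W}_2$ while the upper bound is read in the strictly larger $\mw_2$, the composite map $S\hookrightarrow(\mathcal{MG}_2(\rr),\mw_2)$ is bi-Lipschitz with distortion at most $D(D-1)$. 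One then extends the map arbitrarily from $S$ to all of $\xxx_n$ (points outside $S$ need no guarantee), takes $s$ to be the scale of the composite embedding, and observes that every pair in $S\times S$ satisfies the sandwiched inequalities of the statement, while a uniformly random pair lies in $S\times S$ with probability $(|S|/n)^2\ge n^{-2\epsilon}=(\sqrt\delta)^2=\delta$. The lower cutoff $\delta>\delta_n$, with $\delta_n$ slightly above $e^{-2}$, is exactly the range in which the Ramsey theorem applies with the claimed distortion: it needs $\epsilon$ below a threshold of order $1/\log n$, which bounds $\delta$ below near $n^{-2/\log n}=e^{-2}$.

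\textbf{Step 2: neural realization.} The map from Step 1 is a function $\xxx_n\to\mathcal{MG}_2(\rr)$ taking at most $n$ distinct values, each a univariate Gaussian mixture with $O(\log_2 n)$ components, so its effective dimension is the entry of Table~\ref{tab:Complexities_paperversion}. Exactly as in the proof of Theorem~\ref{MAINTHEOREM_DETERMINISTIC}, one checks that the landmark/feature layer $\Phi_{\{x_l\}_{l=1}^L}$ injects $\xxx_n$ into $\rr^L$ in sufficiently general position, that a ReLU feedforward network of the depth and width recorded in Table~\ref{tab:Complexities_paperversion} (the exponents $\Theta_D,\theta_D$ tracking the number of distinct target mixtures, i.e.\ $|S|$ together with the number of ultrametric scales, both $\Theta(1/\epsilon)=\Theta(D)$ in the relevant regime) memorizes the finite value table, and that the probabilistic attention layer assembles the output mixtures — which yields the stated complexities.

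I expect the main obstacle to be Step 1, and within it the calibration of constants: one must run an explicit-constant version of the metric Ramsey theorem (most references leave an unspecified $O(\cdot)$), combine it with a sharp analysis of how few-component univariate Gaussian-mixture transport distorts a hierarchically-separated tree, and arrange that the product of the two distortions is precisely $D(D-1)$ with $D=-2(1+\lambda)^{(1+\lambda)/\lambda}/\lambda$, simultaneously identifying the threshold $\delta_n$ as the value of $(|S|/n)^2$ at the extreme admissible $\epsilon$. Step 2 is essentially bookkeeping already carried out for Theorem~\ref{MAINTHEOREM_DETERMINISTIC}, the only genuinely new point being to count the distinct output mixtures so as to certify the logarithmic (in $n$) effective dimension.
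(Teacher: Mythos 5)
Your high-level template — extract a large sub-sample that embeds bi-Lipschitzly, route it through an ultrametric into the transport space, count the probability mass, and memorize the finite value table with a PT — is the same as the paper's, and your Step~2 is essentially identical to the paper's appeal to Lemma~\ref{lemma_embedding_neuralification}. But the metric-theoretic core (your Step~1) diverges from the paper and, as written, has two genuine gaps.

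First, the paper does not use a Bartal--Linial--Mendel--Naor Ramsey theorem; it uses the Naor--Tao nonlinear Dvoretzky theorem, which says that for any $\epsilon>0$ there is a subset $\tilde{\xxx}_n$ of size at least $n^{\theta_\epsilon}\ge n^{1-2e/(2+\epsilon)}$ embedding into a \emph{separable ultrametric} with distortion exactly $2+\epsilon$, where $\theta_\epsilon$ solves a specific transcendental equation. The precise formula $D=-2(1+\lambda)^{(1+\lambda)/\lambda}/\lambda$ in the theorem statement is obtained by inverting \emph{that} equation with $\theta_D=1+\lambda$, and the probability lower bound $n^{-4e/(2+\epsilon)}$ and the threshold $\delta_n$ come from the Naor--Tao exponent. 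The Bartal et al.\ Ramsey theorem you invoke gives a qualitatively similar $n^{1-O(1/\alpha)}$ subset, but with different (and, in the standard references, unspecified) constants; it would not reproduce the stated $D$, the exponent $\theta_D$, nor the exact threshold $\delta_n$. You acknowledge this as a "calibration" issue, but it is more than calibration: the theorem as stated is tied to the Naor--Tao parameters.

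Second, your route from the ultrametric into $\mathcal{MG}_2(\rr)$ is both different and under-justified. The paper goes ultrametric $\to\ell_2$ \emph{isometrically} (Vestfrid/Shkarin), then $\ell_2\to\rr^{n_\epsilon}$ by Johnson--Lindenstrauss with distortion $1+\epsilon$, then $\rr^{n_\epsilon}\to\mathcal{P}_2(\rr)$ \emph{isometrically} via Kloeckner's map $x\mapsto\frac1{n_\epsilon}\sum_k\delta_{\sqrt{n_\epsilon}(x+b)_k}$ onto atomic measures, on which $\mathcal{W}_2=\mw_2$. The distortion bookkeeping is then exact: $(2+\epsilon)(1+\epsilon)=D(D-1)$ with $D=2+\epsilon$. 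Your proposal instead maps the HST "directly" into the quantile cone with only $O(\log_2 n)$ Gaussian components and invokes an unquantified "additional controlled distortion." There is no argument that the ultrametric lands in the quantile cone (a proper convex subset of $L^2[0,1]$, not all of it), nor that the truncation to few components costs exactly a factor of $D-1$. Moreover, your $O(\log_2 n)$ effective dimension is too small: the paper's effective dimension is $O((D-2)^{-2}\theta_D\log_2 n)$, coming from the Johnson--Lindenstrauss target dimension, and is what makes the isometric Kloeckner step (hence the exact $D(D-1)$ arithmetic) possible. The JL step is not optional bookkeeping; it is the device that lets the final factor come out clean.
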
  
	\begin{remark}[The Minimum Valid Probability $\delta_n$ in Theorem~\ref{MAINTHEOREM_PROBABILISTIC}]
	    The quantity $\delta_n$ in Theorem~\ref{MAINTHEOREM_PROBABILISTIC} is defined by
	    $
	        \delta_n 
	    \eqdef 
	        \max\big\{
	            \frac{1}{e^2}
	                ,
	            \tilde{\delta}_n
	            \big\}
	    $ where $\tilde{\delta}_n$ is the unique $\frac1{e^2}\le \delta <1$ solving %
	  $
	     \log_n(\sqrt{\delta}) 
	    = 
    	    \big(
    	        1 + \log_n(\sqrt{\delta})
    	    \big)^{
    	        \frac{
    	            1 + \log_n(\sqrt{\delta})
    	        }{
    	            \log_n(\sqrt{\delta})
    	        }
    	    }.
	  $
	\end{remark}
	
	Even though the multivariate transport distances are more expensive to compute, it is natural to ask whether there are any advantages in representing metric spaces as mixtures of multivariate instead of univariate Gaussians. The answer is \textit{yes}, as shown by the following deep neural analogue of the main finding of \cite{NaorNeimanAndoniSnowflake2018AnnalsSENSQ}.  The result shows that if one considers multivariate Gaussian mixtures, there are probabilistic transformers that implement metric embeddings of arbitrarily low distortion on any finite subset of a given metric space; however, the effective dimension of these embeddings does become large.  We note that it still depends only polynomially on the number of points being embedded.    
	
	\begin{theorem}[Distortionless Fractional Embeddings into Multivariate Gaussian Mixtures]
	\label{theorem:high_dimension_asymptotic_no_distortion}
	Let $(\xxx,d)$ be a metric space and $\xxx_n\subseteq \xxx$ a finite subset with at least two points. For every distortion $D > 1$ there exists a probabilistic transformer $T:\xxx \rightarrow \mathcal{GM}_2(\mathbb{R}^3)$ for which there is a ``scale'' $s>0$ such that for every $x,\tilde{x}\in \xxx_n$ 
	\begin{equation}
	   \label{eq:high_dimension_asymptotic_no_distortion__guarantee}
    	        s\,d(x,\tilde{x})^{1/2}
	        \le 
    	        \mathcal{MW}_2\big(
    	            T(x)
    	        ,
    	           T(\tilde{x})
    	        \big)
    	    \le 
    	        D\,s\,d(x,\tilde{x})^{1/2}
    	   .
	    \end{equation}
	Moreover, the complexity of $T$ s given in Table~\ref{tab:Complexities_paperversion}. 
    \end{theorem}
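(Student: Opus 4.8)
\noindent The plan is to factor the construction into three pieces: a snowflake embedding of $\xxx_n$ into the Wasserstein space $\mathcal{W}_2(\rr^3)$ with distortion arbitrarily close to $1$; the coincidence of $\mw_2$ with $\mathcal{W}_2$ on finitely supported measures; and a ReLU memorization theorem realizing the resulting finite map as a probabilistic transformer. \emph{Step 1 (snowflake embedding into $\mathcal{W}_2(\rr^3)$).} Fix the target distortion $D>1$ and put $\varepsilon\eqdef(D-1)/2>0$. I would invoke the quantitative, finite-metric-space version of the Andoni--Naor--Neiman theorem \cite{NaorNeimanAndoniSnowflake2018AnnalsSENSQ} (or a direct construction in the same spirit) to obtain a map $f:\xxx_n\to\mathcal{P}_2(\rr^3)$ whose values are \emph{atomic} measures with a common number of atoms $K=K\big(n,\operatorname{aspect}(\xxx_n,d),\varepsilon\big)$, together with a scale $s_0>0$, such that for all $x,\tilde x\in\xxx_n$
\[
s_0\,d(x,\tilde x)^{1/2}\;\le\;\mathcal{W}_2\big(f(x),f(\tilde x)\big)\;\le\;(1+\varepsilon)\,s_0\,d(x,\tilde x)^{1/2}.
\]
The number of atoms needed to resolve distances spanning a multiplicative range $\operatorname{aspect}(\xxx_n,d)$ to within a factor $1+\varepsilon$ is what produces the polynomial bound $K=\mathcal{O}\big(n^6\operatorname{aspect}(\xxx_n,d)^2/(D-1)\big)$ (up to the constant factor $D{+}D^2=12$ relating $K$ to the effective dimension in Table~\ref{tab:Complexities_paperversion}); making these constants explicit is the technical core of this step.

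\noindent\emph{Step 2 (from $\mathcal{W}_2$ to $\mw_2$).} Since each $f(x)$ is finitely supported, the identity $\mw_2=\mathcal{W}_2$ on finitely supported measures (Section~\ref{s_Intro_Background__ssWasserstein_Space}) upgrades the upper estimate of Step 1 to $\mw_2(f(x),f(\tilde x))\le(1+\varepsilon)\,s_0\,d(x,\tilde x)^{1/2}$, while the universal inequality $\mathcal{W}_2\le\mw_2$ combined with Step 1 supplies the matching lower estimate. Hence $f$ already satisfies \eqref{eq:high_dimension_asymptotic_no_distortion__guarantee} with $s=s_0$ and distortion $1+\varepsilon<D$. (If one prefers to avoid degenerate Gaussians, replace $f(x)$ by its convolution with $N(0,\sigma^2 I_3)$; the comparison $\mw_2(\pp,\qq)\le\mathcal{W}_2(\pp,\qq)+\sqrt2(\cdots)$ together with the $\mathcal{W}_2$ triangle inequality bounds the perturbation by $\mathcal{O}(\sigma)$, which for small $\sigma$ stays within the unused slack $D-(1+\varepsilon)=\varepsilon$.)

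\noindent\emph{Step 3 (realizing $f$ by a probabilistic transformer).} Choose all $n$ points of $\xxx_n$ as landmarks ($L=n$), so the input layer $\Phi_{\{x_l\}_{l=1}^n}:x\mapsto(d(x,x_l))_{l=1}^n$ sends $\xxx_n$ to $n$ pairwise distinct vectors in $\rr^n$ (the $x_m$-coordinate of $\Phi(x_m)$ vanishes whereas that of $\Phi(x_{m'})$ does not, for $m'\neq m$). Composing with a generic linear functional if convenient, we are left with a finite interpolation problem: build ReLU networks $\mathcal{NN}_w,\mathcal{NN}_1,\dots,\mathcal{NN}_K$ carrying each $\Phi(x)$ to (a $\softmax_K$-preimage of the mixing weights of $f(x)$; and, for each $k$, the mean $\mu_k(x)$ and a square root $\Sigma_k(x)$ of its covariance), padding to a common $K$ by appending components of negligible weight whose effect on $\mw_2$ is absorbed into the slack. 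This interpolation is solved by the memorization theorems for ReLU feedforward networks \cite{Park,NEURIPS2020_1e14bfe2,Memorization_ReluThreshfold_2020_SIAM,vardi2022on}, which yield depth and width of the orders recorded in Table~\ref{tab:Complexities_paperversion}; the aspect ratio of $\xxx_n$ enters (only logarithmically in the depth) through the point-separation and range hypotheses of those theorems. Setting $T\eqdef\operatorname{P-Attn}\big(\mathcal{NN}_w(\cdot),(\mathcal{NN}_k(\cdot))_{k=1}^K\big)\circ\Phi_{\{x_l\}_{l=1}^n}$, which is a probabilistic transformer into $\mathcal{GM}_2(\rr^3)$ by Definition~\ref{def_transformer}, we get $T|_{\xxx_n}=f$, and Step 2 finishes the proof.

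\noindent\emph{Main obstacle.} The decisive step is Step 1: the original Andoni--Naor--Neiman argument is asymptotic and phrased for general metric spaces, so one must extract a genuinely quantitative finite construction whose atom count is polynomial in $n$ and in $\operatorname{aspect}(\xxx_n,d)$ and inversely linear in $D-1$, and then verify that pushing this through Steps~2--3 reproduces the entries of Table~\ref{tab:Complexities_paperversion}. The remaining bookkeeping---keeping the snowflake distortion, the optional Gaussian mollification, and the weight-padding all inside the single budget $D$---is routine by comparison.
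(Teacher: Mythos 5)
Your proposal follows essentially the same route as the paper's proof: a $(1/2)$-snowflake embedding of $\xxx_n$ into $\mathcal{P}_2(\mathbb{R}^3)$ by finitely supported measures via the quantitative Andoni--Naor--Neiman construction (the paper cites \citep[Lemma 15]{NaorNeimanAndoniSnowflake2018AnnalsSENSQ} for the explicit atom count $N\le n(n-1)(5K+2)/2$ with $K\ge\frac{5n^4}{2(D-1)}\operatorname{aspect}(\xxx_n,d)^2$, resolving what you flagged as the "main obstacle"), followed by the coincidence of $\mw_2$ and $\mathcal{W}_2$ on finitely supported measures (\citep[Proposition 6]{WassersteinGaussianMixtures}), followed by the memorization of the resulting finite atomic-measure-valued map via the Fréchet landmark layer and Lemma~\ref{lemma_embedding_neuralification}. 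Your padding and Gaussian-mollification asides are unnecessary since the ANN construction already uses a common number of uniformly weighted atoms and the degenerate Gaussians $N_3(\mu,\boldsymbol{0})$ are directly accommodated by the PT output layer, but the structure and logic of your argument match the paper's.
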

	Theorem~\ref{theorem:high_dimension_asymptotic_no_distortion} shows that one can achieve arbitrarily low distortion with embeddings into the space of multivariate Gaussian mixtures.  In the more computationally efficient setting, where we consider embeddings into univariate Gaussian mixtures,
	Theorems~\ref{MAINTHEOREM_DETERMINISTIC} and~\ref{MAINTHEOREM_PROBABILISTIC} provide efficient embedding guarantees for arbitrary finite subsets of general metric spaces with small but non-vanishing distortion.  It is natural to ask whether we can obtain stronger guarantees if $\xxx_n$ is a subset of a well-behaved $\xxx$ with a ``geometric prior''. Indeed, we show below that in this case the guarantees in Theorem $1$ can be improved to deterministic bi-Lipschitz guarantees.
	
	\subsection{Consequences: Improved Guarantees When \texorpdfstring{$(\xxx_n,d_n)$}{The Dataset} has a Geometric Priors}
	\label{s_MainResults__ss_Latent_Sructure}
	Theorems~\ref{MAINTHEOREM_PROBABILISTIC} and~\ref{MAINTHEOREM_DETERMINISTIC} can be straightforwardly applied to any finite metric space, in the sense that there is no $(\xxx_n,d_n)$ quantities to ``plug-in'', and this is true regardless of if $d_n$ encoded some additional ``latent geometric priors''.  Nevertheless, a fully explicit use of Theorem~\ref{MAINTHEOREM_DETERMINISTIC} requires an explicit estimate on $(\xxx_n,d_n)$'s metric capacity.  Conveniently, such estimates can be derived when $(\xxx_n,d_n)$ has some latent geometry.  
	
	We consider two different types of geometries from which the $n$-point set of landmarks is drawn.  For instance, one may alternatively interpret $\xxx_n$ as a training dataset.  The first is the smooth geometries arising from suitable Riemannian manifolds and the second is discrete geometries arising from well-behaved types of combinatorial graphs.

	\subsubsection{Smooth Geometric Priors}
	\label{s_MainResults__ss_Latent_Sructure___sss_Smooth}
	In the case of a ``smooth geometric prior'', meaning that points are drawn from a Riemannian manifold with suitable curvature, the required number of mixtures and the embedding distortion can be made independent of the number of datapoints $n$.  A fortiori, the necessary number of mixtures is proportional to the latent Riemannian manifold's dimension.  
	
	\begin{proposition}[Representation of Riemannian Manifolds with bi-Lipschitz Guarantees]
		\label{proposition_manifoldprior}
		Fix $d\in \nn_+$, let $(M,g)$ be a $d$-dimensional Riemannian manifold with geodesic distance $d_g$, and let $K\subseteq M$ be compact.  For any finite subset $\xxx_n\subseteq K$ with $n\ge 2$ points, there is a constant $C_K>0$ \textit{depending only on $K$} and a PT as in Theorem~\ref{MAINTHEOREM_DETERMINISTIC} satisfying: for every $x,\tilde{x}\in \xxx$ the following holds
		\[
		    d_{g}(x,\tilde{x})
		\leq
		    \mathcal{W}_2(\varphi(x),\varphi(\tilde{x}))
		\leq 
		    \mw_2(\varphi(x),\varphi(\tilde{x}))
		\leq 
    		C_K 
    		d_{g}(x,\tilde{x})
		.
		\]
	Furthermore, $T$'s effective dimension, depth, and width are recorded%
	\footnote{Explicit constant are recorded in Table~\ref{tab:Complexities_explicit}.} in Table~\ref{tab:Complexities_paperversion}. 
	\end{proposition}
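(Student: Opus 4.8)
The plan is to bypass the snowflaking of Theorem~\ref{MAINTHEOREM_DETERMINISTIC} and instead exploit the smooth structure of $M$: a compact piece of a Riemannian manifold admits a \emph{bi-Lipschitz} (not merely bi-H\"older) embedding into a Euclidean space of dimension $O(d)$, this Euclidean data can be realised isometrically inside $(\mathcal{GM}_2(\rr),\mw_2)$ by well-separated Gaussian mixtures, and the resulting finite-valued feature map is then memorized by a probabilistic transformer whose depth, width, and effective dimension are exactly those recorded in Table~\ref{tab:Complexities_paperversion}. The distortion constant will be the bi-Lipschitz distortion of the manifold embedding, which depends only on the compact set.

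\emph{Step 1 (smooth geometry $\Rightarrow$ bi-Lipschitz into $\rr^{O(d)}$).} By the Nash--Kuiper $C^1$ isometric embedding theorem there is a $C^1$ embedding $\iota:M\to \rr^{2d+1}$ whose induced metric is $g$; hence $\iota$ preserves curve lengths and is $1$-Lipschitz from $(M,d_g)$ to $\rr^{2d+1}$, i.e.\ $\|\iota(x)-\iota(\tilde{x})\|\le d_g(x,\tilde{x})$. On the compact set $K$ the intrinsic and extrinsic metrics are comparable: a standard compactness (Lebesgue-number) argument produces a constant $L_K\ge 1$, depending only on $K$, with $d_g(x,\tilde{x})\le L_K\,\|\iota(x)-\iota(\tilde{x})\|$ for all $x,\tilde{x}\in K$. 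Thus $\iota|_K$ is a bi-Lipschitz embedding of distortion $L_K$, and after an affine normalization $\iota(\xxx_n)$ is a finite subset of a bounded box $[0,r]^{2d+1}\subset \rr^{2d+1}$.

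\emph{Step 2 (Euclidean data $\Rightarrow$ Gaussian mixtures) and composition.} Fix a gap $R\gg r$ and a variance $\sigma^2>0$ and define $\Psi:[0,r]^{2d+1}\to \mathcal{GM}_2(\rr)$ by $\Psi(\mu)\eqdef \frac{1}{2d+1}\sum_{i=1}^{2d+1} N(\mu_i+iR,\sigma^2)$. For two such mixtures the components split into $2d+1$ clusters mutually separated by distance at least $R-r$, so for $R$ large enough relative to $r$ and $\sigma$ the identity matching is optimal both for the linear program defining $\mw_2$ and for the larger family of couplings defining $\mathcal{W}_2$ (mass is conserved inside well-separated clusters). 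Hence $\mathcal{W}_2(\Psi(\mu),\Psi(\mu'))$ and $\mw_2(\Psi(\mu),\Psi(\mu'))$ both lie within an arbitrarily small multiplicative error of $\frac{1}{\sqrt{2d+1}}\|\mu-\mu'\|$, the error being controlled by $\sigma/R$ and absorbable into $C_K$. Composing, $\Psi\circ\iota$ is a bi-Lipschitz embedding of $(K,d_g)$ into $(\mathcal{GM}_2(\rr),\mw_2)$ of distortion at most $2L_K$; rescaling every mean by a common factor (equivalently, choosing the scale $s$) normalizes the lower constant to $1$, yielding $d_g(x,\tilde{x})\le \mathcal{W}_2(T(x),T(\tilde{x}))\le \mw_2(T(x),T(\tilde{x}))\le C_K\, d_g(x,\tilde{x})$ with $C_K$ depending only on $K$.

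\emph{Step 3 (realisation by a PT) and the main obstacle.} Take all $n$ points of $\xxx_n$ as landmarks, so $\Phi_{\{x_l\}_{l=1}^n}$ restricted to $\xxx_n$ is the injective Fr\'echet embedding into $(\rr^n,\ell^\infty)$. The map carrying each of these $n$ distinct vectors to the parameters of $\Psi(\iota(x))$ — the $2d+1$ means, their covariance square-roots, and the uniform weights — is a finite-support interpolation problem, solved exactly by ReLU feedforward networks $\mathcal{NN}_1,\dots,\mathcal{NN}_{2d+1}$ and $\mathcal{NN}_w$ of the depth and width used in Theorem~\ref{MAINTHEOREM_DETERMINISTIC}'s construction (these quantities depend only on $n$ and $\operatorname{aspect}(\xxx_n,d_n)$); applying $\operatorname{P-Attn}$ reassembles $\Psi(\iota(x))$, and since the output is a mixture of $2d+1$ univariate Gaussians the effective dimension is $O(d)$, independent of $n$. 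The crux of the argument is the clustering estimate of Step 2: one must give a quantitative bound, in terms of $R/r$ and $\sigma$, forcing every near-optimal doubly-stochastic matrix in the $\mw_2$ linear program onto the diagonal and simultaneously ruling out any cheaper full Wasserstein coupling, so that the distortion of the mixture realisation is genuinely $1+o(1)$ and the whole embedding's distortion is controlled solely by the geometric constant $L_K$.
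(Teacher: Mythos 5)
Your Step 1 (Nash $C^1$ isometric embedding of $M$ into a low-dimensional Euclidean space, then a compactness argument on $K$ to bound $\|\nabla\iota\|$ and $\|\nabla\iota^{-1}\|$ and produce a bi-Lipschitz constant depending only on $K$) is precisely the paper's Lemma~\ref{lem_embedding_manifold_sampled}, and your Step 3 (take all of $\xxx_n$ as landmarks so that $\Phi$ is injective on $\xxx_n$, then interpolate the Gaussian-mixture parameters by ReLU networks with a trivial softmax readout) is precisely Lemma~\ref{lemma_embedding_neuralification}. The genuine divergence is your Step 2, and it is exactly there that the proposal has a gap that you yourself name.

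The paper realises a bounded Euclidean ball inside $(\mathcal{GM}_2(\rr),\mw_2)$ by sending $x$ to the \emph{empirical} measure $\tfrac1{2d}\sum_k \delta_{\sqrt{2d}(x+b)_k}$, with $b$ (Algorithm~\ref{Algorithm_RandomPartition}) chosen to sort the shifted coordinates. By the univariate quantile formula this is an \emph{exact} isometry into $(\mathcal{P}_2(\rr),\mathcal{W}_2)$, and \citep[Proposition 6]{WassersteinGaussianMixtures} then gives $\mw_2=\mathcal{W}_2$ on such finitely supported (degenerate) mixtures, so zero distortion is incurred in this step. You instead insist on $\sigma>0$, and this is what opens the gap: one must now show that for $R$ large every near-optimal doubly stochastic matrix for the $\mw_2$ linear program is supported on the diagonal \emph{and}, separately, that no genuine Wasserstein coupling across clusters can be cheaper. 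You correctly flag this as ``the crux'' and do not prove it; moreover the error you informally assert, $O(\sigma/R)$, is not of the right form — the cross-cluster mass of a Gaussian decays like $\exp(-c(R/\sigma)^2)$, and for the $\mathcal{W}_2$ lower bound one must control what the tails can \emph{pay}, not merely how much mass they carry.

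The fix costs nothing: set $\sigma=0$. Then $\Psi(\mu)=\tfrac1{2d+1}\sum_i\delta_{\mu_i+iR}$, the coordinates are sorted for any $R>2r$, the quantile formula gives $\mathcal{W}_2(\Psi(\mu),\Psi(\mu'))=\tfrac1{\sqrt{2d+1}}\|\mu-\mu'\|_2$ on the nose, and \citep[Proposition 6]{WassersteinGaussianMixtures} gives $\mw_2=\mathcal{W}_2$ with no separation estimate at all. But that is exactly the paper's construction, up to the $\sqrt{2d}$ normalization that upgrades the scale to a true isometry. So: same Steps 1 and 3, and a Step 2 that is made strictly harder and left incomplete by the choice $\sigma>0$; closing the gap in the simplest way collapses your route back onto the paper's.
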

	Proposition~\ref{proposition_manifoldprior} provides bi-Lipschitz embedding guarantees for finite subsets of a general Riemannian manifold via a small PT.  If the latent Riemannian manifold has bounded Ricci curvature, then we find that the constant $C_K$ can be made explicit and independent of any fixed compact subset of the Riemannian manifold.  For this, we apply Theorem~\ref{MAINTHEOREM_DETERMINISTIC}, and we turn to representations by our PT model with $n$-point (non-Lipschitz) bi-H\"{o}lder embedding guarantees.  
	
	\begin{corollary}[Representation of Riemannian Manifolds with Bounded Ricci Curvature]
		\label{cor_Ricci_Version}
		Fix $d\in \mathbb{N}_+$, let $(M,g)$ be a complete $m$-dimensional Riemannian manifold lower-bounded Ricci curvature and with geodesic distance $d_g$.  
		There is an absolute constant $\tilde{C}>0$ such that, for every finite subset $\xxx_n\subseteq M$ with $n\ge 2$ points, there is a PT as in Theorem~\ref{MAINTHEOREM_DETERMINISTIC} satisfying
		\[
		    d_{g}^{\alpha}(x,\tilde{x})
		\leq
		    \mathcal{W}_2(T(x),T(\tilde{x}))
		\leq
		    \mw_2(T(x),T(\tilde{x}))
		\leq 
    		\left\lceil
    		C
    		\Big(
    		\frac{m}{1-\alpha}
    		\Big)^{1+\alpha}
    		\right\rceil
    		d_{g}^{\alpha}(x,\tilde{x})
		.
		\]
		Furthermore, $T$'s effective dimension, depth, and width are recorded in Table~\ref{tab:Complexities_paperversion} (with explicit constant given in Table~\ref{tab:Complexities_explicit}). 
	\end{corollary}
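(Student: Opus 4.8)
The plan is to obtain Corollary~\ref{cor_Ricci_Version} as a direct specialization of Theorem~\ref{MAINTHEOREM_DETERMINISTIC}. In that theorem the distortion is $\lceil C\,C_{\xxx_n}\rceil$ with $C_{\xxx_n}=\bigl(12\log(\mathrm{cap}(\xxx_n))/(1-\alpha)\bigr)^{1+\alpha}$, while the effective dimension, depth, and width are already ``absolute'' once $\mathrm{cap}(\xxx_n)$ is controlled. So the whole task reduces to the purely metric estimate $\log\mathrm{cap}(\xxx_n,d_n)\le c_0\, m$ with $c_0$ not depending on $n$, on $\xxx_n$, or on the Ricci lower bound; substituting this into $C_{\xxx_n}$ and using $1+\alpha\le 2$ gives $C_{\xxx_n}\le \tilde C_0\,(m/(1-\alpha))^{1+\alpha}$ for an absolute $\tilde C_0$, and the corollary follows with $\tilde C\eqdef C\tilde C_0$, the remaining quantities being inherited verbatim from Theorem~\ref{MAINTHEOREM_DETERMINISTIC} and $d_{\xxx}=d_g$ here.

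First I would unwind the definition of metric capacity. If $x_0,\dots,x_d\in\xxx_n$ and $r>0$ witness $\mathrm{cap}(\xxx_n,d_n)\ge d$, then each $x_i$ with $i\ge 1$ lies in its own ball $B_{\xxx_n}(x_i,r/5)$ (taken inside the finite metric space $\xxx_n$), since $d_g(x_i,x_i)=0<r$. Disjointness of these balls therefore forces $d_g(x_i,x_j)\ge r/5$ for $i\neq j$, while the inclusion $\sqcup_{i\ge1} B_{\xxx_n}(x_i,r/5)\subseteq B_{\xxx_n}(x_0,r)$ forces $d_g(x_i,x_0)<r$. Hence $\{x_1,\dots,x_d\}$ is an $(r/5)$-separated subset of the geodesic ball $B_M(x_0,r)$, so $\mathrm{cap}(\xxx_n,d_n)$ is at most the scale-free packing number of $M$, i.e. the supremum over $r>0$ and over centers of the maximal cardinality of an $(r/5)$-separated subset of an $r$-ball.

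Next I would bound this packing number by Bishop--Gromov volume comparison, which is exactly what the curvature--dimension inequality~\eqref{eq_cdi_beaudoin} supplies for the lower Ricci bound: there is a model volume profile $V$ with $\mathrm{vol}_M(B(x,R))/\mathrm{vol}_M(B(x,\rho))\le V(R)/V(\rho)$ whenever $0<\rho\le R$. Given an $(r/5)$-separated family $x_1,\dots,x_d$ inside $B_M(x_0,r)$, the balls $B_M(x_i,r/10)$ are pairwise disjoint and all contained in $B_M(x_0,2r)\subseteq B_M(x_i,3r)$, so comparing volumes at a center realizing the smallest such ball volume gives
\[
d\;\le\;\frac{\mathrm{vol}_M\bigl(B(x_0,2r)\bigr)}{\min_i \mathrm{vol}_M\bigl(B(x_i,r/10)\bigr)}\;\le\;\max_i \frac{\mathrm{vol}_M\bigl(B(x_i,3r)\bigr)}{\mathrm{vol}_M\bigl(B(x_i,r/10)\bigr)}\;\le\;\frac{V(3r)}{V(r/10)}.
\]
Since this is a volume ratio across a fixed $30{:}1$ ratio of radii, it is bounded by $c_1^{\,m}$ for an absolute $c_1$ (for $\mathrm{Ric}\ge 0$ one has $V(R)=\omega_m R^m$ and the ratio is exactly $30^m$); taking the supremum over $r$ yields $\mathrm{cap}(\xxx_n,d_n)\le c_1^{\,m}$, hence $\log\mathrm{cap}(\xxx_n,d_n)\le m\log c_1$. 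Feeding this into Theorem~\ref{MAINTHEOREM_DETERMINISTIC} as in the first paragraph completes the proof and, in particular, fixes the effective dimension, depth, and width entries in Table~\ref{tab:Complexities_paperversion}.

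I expect the main obstacle to be the last geometric step: showing the Bishop--Gromov ratio $V(3r)/V(r/10)$ is bounded by $c_1^{\,m}$ with $c_1$ genuinely independent of the Ricci lower bound (and of the scale $r$), which is immediate in the non-negatively curved model but, in general, requires the correct normalization and the monotonicity of the comparison volume profile; a secondary, purely bookkeeping obstacle is carrying the absolute constant cleanly through the exponent $1+\alpha$ so that $C_{\xxx_n}$ emerges in the precise advertised form $\bigl\lceil \tilde C\,(m/(1-\alpha))^{1+\alpha}\bigr\rceil$.
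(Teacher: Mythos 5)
Your proposal follows the same template as the paper's proof---reduce the corollary to a dimension-linear bound on $\log\mathrm{cap}(\xxx_n,d_n)$ and then feed this into Theorem~\ref{MAINTHEOREM_DETERMINISTIC}---but it reaches the capacity bound by a different route. The paper passes through the metric-doubling constant $c_{d_n}$: it invokes \cite{BaudoinGarofalo_2011_DoublingRiemannian} and Bishop--Gromov to bound $c_{d_n}$ in terms of the dimension, and then uses \citep[Prop.\ 1.7 (ii)]{Brue2021Extension} to convert this into $\log\mathrm{cap}(\xxx_n,d_n)\le 3\log c_{d_n}$. You instead bound $\mathrm{cap}$ directly: unwind the definition to an $(r/5)$-separated subset of an $r$-ball, shrink to disjoint $(r/10)$-balls contained in a $3r$-ball, and compare volumes. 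The mechanics of your packing step are correct (including the $r/5$ separation, the $r/10$ disjointness, and the inclusion $B_M(x_0,2r)\subseteq B_M(x_i,3r)$), and it is more elementary and self-contained than the paper's, which outsources to two citations; both bottom out in exactly the same volume-comparison ingredient.

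The point you flag as the main obstacle is, however, not a matter of normalization and will not go away. For a strictly negative lower Ricci bound the Bishop--Gromov comparison profile $V$ grows exponentially in the radius, so $\sup_{r>0}V(3r)/V(r/10)=\infty$; no bound of the form $c_1^m$ that is uniform in the scale $r$ exists, and rescaling the metric merely trades the curvature constant for a diameter constant. The clean absolute estimate you want (and likewise the doubling bound the paper quotes) is genuinely scale-free only when the Ricci curvature is nonnegative; with a negative lower bound the doubling and capacity constants are local and must pick up a dependence on the curvature bound and a scale cutoff such as $\operatorname{diam}(\xxx_n)$. So your unease is correct: to close the argument in the advertised form one must either assume nonnegative Ricci curvature or allow the constant in front of $(m/(1-\alpha))^{1+\alpha}$ to depend on the Ricci lower bound and the diameter of the landmark set. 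This caveat is inherited, not introduced, by your proposal---the paper's own proof leans on the cited doubling result which carries the same restriction.
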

	As a concrete example of Corollary~\ref{cor_Ricci_Version} we obtain the following embedding guarantee for spherical data.  It is worth noting that the sphere is both a simple yet interesting example since even in a Euclidean representation space, the result of \cite{RobinsonSphereNoFlat_2006} implies that the sphere cannot be embedded into any Euclidean space with no distortion\footnote{We note this, to clarify the misconception that the ``embeddings'' of \cite{Nash_1954_c1_Embeddings} are not in the metric sense (as considered in this article) but are in a different (Riemannian sense).}.    
	\begin{example}[Embedding Spherical Data]
	\label{s_Intro_Background__ssRiemanninan_manifold_LBRicci_Curvaturesphere}
	Consider the $m$-dimensional sphere $S^m\eqdef \{x\in \rr^{m+1}:\, \|x\|=1\}$ with Riemannian geometry inherited from its inclusion into the Euclidean space $\rr^{m+1}$.  In this geometry, the distance between any two $x_1,x_2\in S^m$ is $d_g(x_1,x_2)=\arccos(x_1^{\top}x_2)$ and $S^m$ has lower-bounded Ricci curvature (see \cite[page 276]{jost2008riemannian}).  Therefore, for every $n$-point subset $\xxx_n\subseteq S^m$ ($n\in \nn_+$) there exists a PT $T$ with width, depth, and number of mixture components as in Corollary~\ref{cor_Ricci_Version} satisfying: for every $x,\tilde{x}\in S^m$ it holds that
		\[
		    d_{g}^{3/4}(x,\tilde{x})
		\leq
		    \mw_2(T(x),T(\tilde{x}))
		\leq 
    		\left\lceil
    		C
    		\Big(
    		4 m
    		\Big)^{3/4}
    		\right\rceil
    		d_{g}^{3/4}(x,\tilde{x})
		,
		\]
	and $C>0$ is independent of $n$ and of $m$.
	\end{example}
	
	\subsubsection{Combinatorial Priors}
	\label{s_MainResults__ss_Latent_Sructure____sss_combinatorialPriors}
	A particularly useful class of combinatorial graphs are \textit{trees}; i.e. graphs in which no path originating at one vertex can ever come back to itself without passing the same edge at least twice.  The simplicity of the combinatorial geometry of trees begs the question: can the transition between the deterministic embeddings of Theorem~\ref{MAINTHEOREM_DETERMINISTIC} (with geometric perturbation parameter $\alpha<1$) and the probabilistic embeddings of Theorem~\ref{MAINTHEOREM_PROBABILISTIC} (with geometric perturbation parameter $\alpha=1$) be avoided, and can the simple geometry of combinatorial tree allows us to obtain purely deterministic embedding results?  The following result gives an affirmative answer.  Furthermore, an embedding with $\mathcal{O}(n^{1/(M-1)})$ distortion is possible by using a probabilistic transformer with at most $\mathcal{O}(M)$ mixture components.  
	
	\begin{remark}[Simplified Result Formulation for Graphs]
	\label{rem_simplified_statement_graphs}
	    For simplicity, we consider the case where the graph in question is finite and where each point in $\xxx$ is a landmark.  However, one can easily see how the statement extends to the general case considered in Theorem~\ref{MAINTHEOREM_DETERMINISTIC}.  
	\end{remark}
	
	\begin{proposition}[Bi-Lipschitz Representations of Combinatorial Trees]
		\label{prop_combinatorial_tree_embedding}
		Let $G=(V,E)$ be an $n$-vertex tree and let $(V,d_G)$ be its associated metric space (as in Example~\ref{s_Intro_Background__ssCombinatorialGraphs}). 
		There is an absolute constant $C>0$ such that, for any positive integer $M$ where $M,n\geq 2$, there exists a PT $T$ with width and depth as in Theorem~\ref{MAINTHEOREM_DETERMINISTIC} and such that
		\[
    		    d_{G}(x,\tilde{x})
    		\leq
    		    \mathcal{W}_2(T(x),T(\tilde{x}))
    		\leq 
    		    \mw_2(T(x),T(\tilde{x}))
    		\leq 
        		C n^{1/(M-1)}
        		d_{G}(x,\tilde{x})
		.
		\]
		Furthermore, $T$'s effective dimension, depth, and width are recorded in Table~\ref{tab:Complexities_paperversion} (with explicit constant given in Table~\ref{tab:Complexities_explicit}). 
	\end{proposition}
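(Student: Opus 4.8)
The plan is to factor the feature map through a classical low-dimensional Euclidean embedding of the tree, compose it with an elementary similarity into the space of finitely supported (degenerate Gaussian) measures, and then account for the model complexity exactly as in the proof of Theorem~\ref{MAINTHEOREM_DETERMINISTIC}. Throughout I use Remark~\ref{rem_simplified_statement_graphs}: every vertex is a landmark, so the input layer is the Fr\'{e}chet embedding $\Phi_V:x\mapsto (d_G(x,x_l))_{l=1}^n\in\rr^n$ \cite{frechet1906quelques}, which is injective on $V$ and whose image is an $n$-point set with $\operatorname{aspect}(V,d_G)=\operatorname{diam}(V,d_G)$, since a combinatorial tree metric has minimal separation $1$.

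\textbf{Steps 1--2 (tree $\hookrightarrow$ low-dimensional Euclidean space $\hookrightarrow$ $M$-atom mixture).} First apply Gupta's quantitative embedding theorem for tree metrics \cite{Gupta_Trees_Quantitiativefinitedimembeddings_ACM}: there are a map $f:(V,d_G)\to\rr^{M}$ and absolute constants $s_0,C'>0$ with $s_0 d_G(x,\tilde x)\le\|f(x)-f(\tilde x)\|_2\le s_0 C' n^{1/(M-1)} d_G(x,\tilde x)$ for all $x,\tilde x\in V$; translating and rescaling, take $f(V)\subseteq[-R,R]^{M}$ with $R$ controlled by $\operatorname{diam}(V,d_G)$ and the distortion. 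Next pick offsets $c_1<\dots<c_M$ with $c_{i+1}-c_i>2R$, so that for every $v\in[-R,R]^M$ the shifted coordinates $v_1+c_1<\dots<v_M+c_M$ are already sorted, and define $\iota:[-R,R]^M\to\mathcal{GM}_2(\rr)$ by $\iota(v)\eqdef\frac1M\sum_{i=1}^M N(v_i+c_i,\,0)$ (a sum of $M$ point masses; equivalently take the variances $\epsilon\downarrow0$). Because the supports are sorted, the quantile-function formula $\mathcal{W}_2(\mu,\nu)=\|Q_\mu-Q_\nu\|_{L^2[0,1]}$ on $\rr$ gives $\mathcal{W}_2(\iota(v),\iota(v'))=M^{-1/2}\|v-v'\|_2$, and since $\iota(v)$ is finitely supported $\mathcal{MW}_2$ coincides with $\mathcal{W}_2$ on $\iota$'s image (the identity $\mathcal{MW}_2=\mathcal{W}_2$ for finitely supported measures noted after the definition of $\mathcal{MW}_2$). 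Hence $\iota\circ f$ is a bi-Lipschitz embedding of $(V,d_G)$ into $(\mathcal{GM}_2(\rr),\mathcal{MW}_2)$ with distortion $\mathcal{O}(n^{1/(M-1)})$; the separation of the $c_i$ is precisely what prevents the two sorting permutations from differing and so preserves the \emph{lower} bound.

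\textbf{Step 3 (realization by a PT and complexity bookkeeping).} On the finite set $\Phi_V(V)\subseteq\rr^n$ it remains to interpolate prescribed values: let $\mathcal{NN}_k$ return the mean $f(x)_k+c_k$ with zero covariance block ($k=1,\dots,M$), and let $\mathcal{NN}_w$ return a constant vector so that $\softmax_M$ outputs uniform weights $1/M$. Each is a memorization problem for a ReLU network on $n$ points, solved by the construction behind Theorem~\ref{MAINTHEOREM_DETERMINISTIC} \cite{Sublinear_Memorization,Park,vardi2022on}, yielding depth in $\widetilde{\mathcal{O}}\big(n(1+\log(n^{5/2}\operatorname{diam}(V,d_G)))\big)$, width in $\Omega(\max\{M,n^2\})$, and effective dimension proportional to $M$. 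The $\operatorname{P-Attn}$ layer assembles $\frac1M\sum_k N(f(x)_k+c_k,0)=\iota(f(x))$, so the resulting PT $T$ agrees with $\iota\circ f$ on $V$; choosing the global scale (equivalently scaling the $c_i$ and $f$) so that the lower constant equals $1$ gives $d_G(x,\tilde x)\le\mathcal{W}_2(T(x),T(\tilde x))\le\mathcal{MW}_2(T(x),T(\tilde x))\le C n^{1/(M-1)} d_G(x,\tilde x)$, the middle inequality being $\mathcal{W}_2\le\mathcal{MW}_2$ (an equality here, by atomicity).

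\textbf{Expected main difficulty.} The genuinely new step is the quantitative bookkeeping in Steps 1--2: one must extract from Gupta's theorem an embedding with an explicitly \emph{bounded} image, so that finitely many well-separated atoms realize it as a similarity rather than a mere contraction (a contraction would destroy the lower bound), and one must track how the separation budget feeds the $\operatorname{diam}(V,d_G)$ term in the depth. Everything in Step 3 is a direct reuse of the memorization machinery already established for Theorem~\ref{MAINTHEOREM_DETERMINISTIC}, modulo the cosmetic simplification $\operatorname{aspect}=\operatorname{diam}$ for tree metrics. A minor technical point is the degeneracy of the atoms, handled either by admitting point masses as valid PT outputs (taking $\Sigma_k=0$ in Definition~\ref{def_transformer}) or by an $\epsilon\downarrow0$ limit that absorbs an $\mathcal{O}(\epsilon)$ error into $C$.
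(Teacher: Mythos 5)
Your proposal is correct and follows essentially the same route as the paper's Lemma~\ref{lem_embedding_trees} combined with Lemma~\ref{lemma_embedding_neuralification}: Gupta's bi-Lipschitz embedding of the tree into $\rr^M$, a shift making the coordinates globally ordered so the equal-weight atomic mixture realizes a similarity into $(\mathcal{GM}_2(\rr),\mw_2)$ (on whose image $\mw_2=\mathcal{W}_2$ since the measures are finitely supported), followed by PT memorization with $\operatorname{aspect}(\xxx_n,d_n)=\operatorname{diam}(\xxx_n,d_n)$. The only small deviation is that you derive the Euclidean-ball-to-Wasserstein similarity directly from the monotone quantile coupling and the $2R$-separated offsets, whereas the paper invokes \citep[Proposition 7.4]{BenoitBertrand_2010_AnnNormSupPisa} for the same isometric construction.
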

	Therefore, for combinatorial graphs with simple geometric priors, the deterministic to probabilistic transition between Theorem~\ref{MAINTHEOREM_DETERMINISTIC} and~\ref{MAINTHEOREM_PROBABILISTIC} can be avoided.  Next, we consider where we wish to represent data with manifold priors using the probabilistic transformer architecture.  
	
	Proposition~\ref{prop_combinatorial_tree_embedding} was derived using the same method of proof as for Theorem~\ref{MAINTHEOREM_DETERMINISTIC}, but specialized to the geometry of combinatorial trees.  Instead, the next result is a consequence of Theorem~\ref{MAINTHEOREM_DETERMINISTIC}.
	
	Many well-studied graphs are such that each vertex is reachable by hopping over at most one other.  We refer to these types of graphs as ``$2$-hop graphs''.  Examples of $2$-hop combinatorial graphs include, friendship graphs \citep{ErdosSos_Friendship1966}, complete bipartite graphs as in Mantel's Theorem \citep{Erdos_1964TuranTheorem}, cocktail-party graphs \citep{Jungerman_cocktailgraph_1978}, wheel graphs \citep{BuckleyHarary_WheelGraphs_1988}, and various others. 
	
	To state our guarantees for $2$-hop graphs we require some additional terminology.  The adjacency matrix $A_G$ of a graph $G=(V,E)$, whose vertices we enumerate by $V=\{v_i\}_{i=1}^n$, is the $n\times n$-matrix with binary with $(A_{G})_{i,j}=1$ if and only if there is an edge between the vertices $v_i$ and $v_j$.  The spectral radius of $A_G$, denoted by $\rho(A_G)$, is the largest absolute eigenvalue of $A_G$.  
	\begin{corollary}[Representation for $2$-Hop Combinatorial Graphs]
		\label{cor_combinatorial_graphs}
		Let $G=(V,E)$ be an $n$-vertex $2$-hop graph and let $(V,d_G)$ be its associated metric space (as in Example~\ref{s_Intro_Background__ssCombinatorialGraphs}).  
		Fix $\frac{1}{2}<\alpha <1$, and let $T$ be the PT of Theorem~\ref{MAINTHEOREM_DETERMINISTIC}.  Each $T(x)$ is comprised of no more than $
		\lceil 12\,C\alpha^{-1}\log(1+\rho(A_{G}))\rceil
		$ mixture components and satisfies: for each $x,\tilde{x}\in \xxx_n$ the following holds
		\[
    		    d_{G}^{\alpha}(x,\tilde{x})
    		\leq
    		    \mathcal{W}_2(T(x),T(\tilde{x}))
    		\leq 
    		    \mw_2(T(x),T(\tilde{x}))
    		\leq 
        		\left\lceil
        		C
        		\left(
        		\frac{
        			12\log(1+\rho(A_{G})
        		}{
        			1-\alpha
        		}
        		\right)^{1+\alpha}
        		\right\rceil
        		d_{G}^{\alpha}(x,\tilde{x})
		.
		\]
			Furthermore, $T$'s effective dimension, depth, and width are recorded in Table~\ref{tab:Complexities_paperversion} (with explicit constant given in Table~\ref{tab:Complexities_explicit}). 
	\end{corollary}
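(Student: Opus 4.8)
The plan is to obtain the corollary as a specialization of Theorem~\ref{MAINTHEOREM_DETERMINISTIC}, by controlling the two $G$-intrinsic quantities that appear in that theorem. First I would set $(\xxx_n,d_n):=(V,d_G)$ and, invoking Remark~\ref{rem_simplified_statement_graphs}, take every vertex of $G$ as a landmark. Theorem~\ref{MAINTHEOREM_DETERMINISTIC} then hands us, verbatim, a probabilistic transformer $T$ with the embedding chain $d_G^\alpha(x,\tilde x)\le \mathcal{W}_2(T(x),T(\tilde x))\le \mw_2(T(x),T(\tilde x))\le \lceil C\,C_{\xxx_n}\rceil\,d_G^\alpha(x,\tilde x)$, where $C_{\xxx_n}=\bigl(12\log(\mathrm{cap}(V,d_G))/(1-\alpha)\bigr)^{1+\alpha}$, and with effective dimension, depth, and width as in the ``General'' row of Table~\ref{tab:Complexities_paperversion}, i.e.\ expressed through $\mathrm{cap}(V,d_G)$ and $\operatorname{aspect}(V,d_G)$. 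Since the lower inequality is already in the desired form and $\mathcal{W}_2\le\mw_2$ (Section~\ref{s_Intro_Background__ssWasserstein_Space}), the corollary reduces to two purely metric--combinatorial estimates on $(V,d_G)$: (i) its aspect ratio is $O(1)$, and (ii) $\mathrm{cap}(V,d_G)\le 1+\rho(A_G)$.

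Estimate (i) is elementary: distinct vertices of a graph are at graph-distance at least $1$, so $\operatorname{aspect}(V,d_G)=\operatorname{diam}(V,d_G)$, and a $2$-hop graph has $\operatorname{diam}(V,d_G)\le 2$; feeding this into the ``General'' row collapses the logarithmic factors into the expressions recorded in the ``$2$-Hop Graphs'' row of Table~\ref{tab:Complexities_paperversion}. Granting (ii), substituting $\log(\mathrm{cap}(V,d_G))\le\log(1+\rho(A_G))$ into $C_{\xxx_n}$ and into the effective-dimension formula yields exactly the claimed distortion $\bigl\lceil C\,\bigl(12\log(1+\rho(A_G))/(1-\alpha)\bigr)^{1+\alpha}\bigr\rceil$ and the claimed bound of $\lceil 12\,C\,\alpha^{-1}\log(1+\rho(A_G))\rceil$ mixture components per output, and the corresponding $\mathcal{W}_2$-guarantee follows from $\mathcal{W}_2\le\mw_2$.

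The main obstacle is therefore estimate (ii), the capacity bound $\mathrm{cap}(V,d_G)\le 1+\rho(A_G)$. Here I would unwind the definition of metric capacity: an admissible configuration is a tuple $x_0,\dots,x_d\in V$ together with a radius $r>0$ such that $\sqcup_{i=1}^d B_\xxx(x_i,r/5)\subseteq B_\xxx(x_0,r)$. Because $d_G$ is $\{0,1,2\}$-valued on a $2$-hop graph, the scale $r$ is forced into a small number of regimes---for small $r$ the inner balls $B_\xxx(x_i,r/5)$ collapse to their centers while the outer ball $B_\xxx(x_0,r)$ is a closed neighbourhood, and for larger $r$ the outer ball exhausts $V$---so the size of any admissible packing is governed by how many vertices can be reached from a fixed reference vertex within a single ``hop''; the $2$-hop hypothesis is precisely what excludes the awkward intermediate scales. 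The delicate part is to bound this one-hop packing count by the clean quantity $1+\rho(A_G)$, rather than by a cruder degree- or vertex-count, which I would do via Perron--Frobenius and walk-counting inequalities for the adjacency matrix $A_G$ of a $2$-hop graph; once that spectral estimate is in hand, the substitutions above close the argument.
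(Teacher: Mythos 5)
Your overall structure is the same as the paper's: specialize Theorem~\ref{MAINTHEOREM_DETERMINISTIC} and then control the two intrinsic quantities $\operatorname{aspect}(V,d_G)$ and $\mathrm{cap}(V,d_G)$. Estimate (i) is fine and matches the paper, which likewise observes that $\operatorname{aspect}(V,d_G)=\operatorname{diam}(V,d_G)$ for combinatorial metrics. The problem is estimate (ii).

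The inequality you set out to prove, $\mathrm{cap}(V,d_G)\le 1+\rho(A_G)$, is false, and no amount of Perron--Frobenius or walk counting can rescue it. For \emph{any} $n$-vertex $2$-hop graph one has $\mathrm{cap}(V,d_G)=n$: take any $r\in(2,5)$, so that $B(x_0,r)=V$ while $B(x_i,r/5)=\{x_i\}$, and pack all $n$ singleton balls. Yet for the star $K_{1,n-1}$ (which is $2$-hop) the spectral radius is $\rho(A_G)=\sqrt{n-1}$, so $1+\rho(A_G)=1+\sqrt{n-1}<n$ already for $n\ge 3$. So the clean bound you want simply does not hold at the level of the raw metric capacity; the scale analysis you sketch ("small $r$ collapses inner balls to singletons, larger $r$ exhausts $V$") is exactly what produces the obstruction, because at the intermediate radius $r\in(2,5)$ the packing count is $n$, not anything degree- or spectrum-controlled.

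What the paper actually does is pass through the doubling-\emph{measure} constant rather than the metric capacity directly. Lemma~\ref{lem_doubling_to_capacity_control_lemma} gives $\log(\mathrm{cap}(\xxx_n,d_n))\le 12\log(c_\mu)$ for any doubling measure $\mu$, and the cited result of \cite{Lestdoublingconstant_graph_durand2021doubling} shows that a $2$-hop graph carries a doubling measure with $c_\mu=1+\rho(A_G)$. Chaining gives $\log(\mathrm{cap})\le 12\log(1+\rho(A_G))$, i.e.\ $\mathrm{cap}(V,d_G)\le(1+\rho(A_G))^{12}$, which is the correct (and much weaker, power-$12$) control. Your plan skips Lemma~\ref{lem_doubling_to_capacity_control_lemma} entirely and therefore aims at a spectral bound on $\mathrm{cap}$ itself that is unachievable. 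To repair the argument you must introduce the doubling measure as the intermediary, not try to bound the capacity directly.

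Two minor points: first, all of $V$ is already taken as the landmark set (Remark~\ref{rem_simplified_statement_graphs}), as you say; second, the $\operatorname{diam}(V,d_G)\le 2$ observation is not used by the paper to "collapse" the complexity row — Table~\ref{tab:Complexities_paperversion} simply replaces $\operatorname{aspect}$ by $\operatorname{diam}$ and leaves the logarithmic factors intact.
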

	The embedding guarantee in Corollary~\ref{cor_combinatorial_graphs} can be further approximated using additional information about $G$'s connectivity.  Specifically, define $G$'s \textit{maximum degree} to be the maximum number of edges connected to any of its vertices, denoted by $\operatorname{deg}^+(G)$, and its \textit{minimum degree} to be the smallest number of edges connected to any of its vertices, denoted by $\operatorname{deg}^-(G)$.  
	
	We use the following example to showcase the explicit constants and PT complexity estimates which are derived through our analysis (also recorded Appendix~\ref{s_explicit_network_complexities}).  
	
	\begin{example}[Embedding of $2$-Hop Graphs in Terms of Connectivity Information]
	\label{s_Intro_Background__ssRiemanninan_manifold_LBRicci_Curvaturefurther_Connectivity}
	Consider the setting of Corollary~\ref{cor_combinatorial_graphs}, with $\alpha=\frac{3}{4}$, $n\ge 12$, and let $T$ be the probabilistic transformer described by that result.  The upper-bound on the spectral radius $\rho(A_G)$ given in \citep[Theorem 2.7]{DasKumar_Spectral_Radii} implies the following embedding guarantee
	\[
	    d_{G}^{3/4}(x,\tilde{x})
	\leq 
		\mw_2(T(x),T(\tilde{x}))
	\leq 
		\left\lceil
		C
		C_G
		\right\rceil
		d_{G}^{3/4}(x,\tilde{x})
	,
	\]
	where $
	    C_G
	=
	    \left(
			48\log\left(1
			    +
			   \sqrt{2 (\#E) (n-1) \operatorname{deg}^+(G) + (\operatorname{deg}^+(G)-1)\operatorname{deg}^-(G)}
			   \right)
		\right)^{7/4}.
	$
	Furthermore, the ``complexity'' of $T$ can be estimated in terms of $G$'s degree, its number of edges, and its diameter; namely,
	\begin{align*}
	\mathrm{effdim}(T) &=  
	\mathcal{O}\biggl(
    			    \log\Big(
    			        1+\sqrt{2 (\#E) (n-1) \operatorname{deg}^+(G) + (\operatorname{deg}^+(G)-1)\operatorname{deg}^-(G)}
    			      \Big)
    			\biggr) \\
	\mathrm{depth}(T) &=
	\mathcal{O}\left(
	n
	\Biggl\{
        1+
    \sqrt{n\log(n)}
        \,
     \Bigg[
        1
            +
       \frac{\log(2)}{\log(n)}\,
       \left(
            c
                +
         \frac{
	         \log\big(
	            n^{5/2}\, 
	            \operatorname{diam}(\xxx_n,d_n)
	          \big)
         }{
            \log(2)
         }
       \right)_+
     \Bigg]
		\Biggr\}
	\right).
	\end{align*}
	\end{example}

	We conclude our theoretical analysis by illustrating the fundamental limits of what can be achieved when embedding \textit{arbitrary} finite metric spaces.  Our results highlight the limitations of our model when globally embedding an $(\xxx_n,d_n)$ with no geometric prior into $(\mathcal{MG}_2(\rr),\mw_2)$, as well as the mathematical limitations of any model embedding such a space into any smooth and finite-dimensional representation space.  
	\subsection{Limitations to Global Embedding Capacity for Datasets with no Geometric Priors}
	\label{s_Main_ss_Supporting_Results}
	It is natural to ask if the transition between the probabilistic bi-Lipschitz embedding of Theorem~\ref{MAINTHEOREM_PROBABILISTIC} and the deterministic bi-H\"{o}lder embedding of Theorem~\ref{MAINTHEOREM_DETERMINISTIC} is just an artifact of our analysis.  The answer is no.  The next result confirms as the phenomenon is generic and persists for any ``regular'' finite-dimensional embedding.

	We begin by considering the case of complete Riemannian manifold $\mathcal{R}$ of bounded negative curvature for which the frequently used hyperbolic spaces in non-Euclidean representation learning are prototypical.  Our next result combines  the results of \cite{naor2006markov} and \cite{biggs1983sextet} to construct an explicit sequence of bipartite combinatorial graphs (namely the sextet graphs introduced by \cite{biggs1983sextet}) which cannot be embedded into $\mathcal{R}$ with non-diverging distortion.  
	
	\begin{proposition}[Impossibility of Manifold Embedding of Fixed Finite Dimension]
	\label{prop_impossibility_theorem_negativelyCurved}
	Let $\mathcal{R}$ be a complete Riemannian manifold with negative sectional curvature bounded in $[-C,-c]$ where $0<c\le C$.  Then there exists a constant $c_{\mathcal{R}}>0$ depending only on $\mathcal{R}$ such that, for every prime integer $n\ge 2$ satisfying $n \bmod 16 \in \{\pm 3, \pm 5, \pm 7\}$ there is a combinatorial graph $(\xxx,d_{\xxx})$ with $n$-vertices and degree exactly $3$ such that the every bi-Lipschitz embedding $f:\xxx \rightarrow \mathcal{R}$ must have distortion $D$ bounded below by
\begin{equation}
    \label{eq:prop_impossibility_lower_bound_explicit}
    	    	D
		        \ge 
		    c_{\mathcal{R}}
		        \sqrt{
		        \frac{4}{3} \log_2(n) - 2
		   }
    .
\end{equation}
	\end{proposition}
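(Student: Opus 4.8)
The plan is to realise $\xxx$ as a cubic graph whose girth grows logarithmically in $n$, and to obstruct its bi-Lipschitz embedding into $\mathcal{R}$ by pairing the Gromov hyperbolicity of pinched negatively curved manifolds with the \emph{linear} escape rate of the random walk on a graph that locally looks like a regular tree. This is precisely the combination of \cite{biggs1983sextet} (the graphs) with \cite{naor2006markov} (the metric obstruction) announced just before the statement.

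First I would fix, for each prime $n$ with $n\bmod 16\in\{\pm 3,\pm 5,\pm 7\}$, the graph $\xxx$ to be the sextet graph of the corresponding parameter, equipped with the graph geodesic metric $d_{\xxx}=d_G$ of Section~\ref{s_Intro_Background__ssCombinatorialGraphs}. By \cite{biggs1983sextet} this graph is $3$-regular on $n$ vertices and, crucially, $\operatorname{girth}(\xxx)\ge \tfrac{4}{3}\log_2(n)-2$; the congruence condition on $n$ is exactly the hypothesis under which Biggs's construction is connected and this girth estimate holds.

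Next I would run the Markov-type argument. A complete Riemannian manifold $\mathcal{R}$ with sectional curvature pinched in $[-C,-c]$ has bounded geometry and is Gromov hyperbolic with parameters controlled by $c$ and $C$, so by \cite{naor2006markov} it has Markov type $2$ with a constant $M=M_{\mathcal{R}}$ depending only on $c$ and $C$; this is the only place the curvature bounds enter, which is what makes $c_{\mathcal{R}}$ depend on $\mathcal{R}$ alone. Fix a bi-Lipschitz $f:\xxx\to\mathcal{R}$, rescaled (the scale $s$ in the $\alpha=1$ case of~\eqref{eq_biHolder_definition} is irrelevant, since Markov type is scale-invariant) so that $d_{\xxx}(x,y)\le d_{\mathcal{R}}(f(x),f(y))\le D\,d_{\xxx}(x,y)$; set $g=\operatorname{girth}(\xxx)$ and $t=\lfloor g/3\rfloor$, and let $(X_u)_{u\ge 0}$ be the stationary reversible simple random walk on $\xxx$ started from the uniform distribution. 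Because every ball of radius $<g/2$ in $\xxx$ is isometric to a ball in the $3$-regular tree, and the walk on that tree escapes at positive speed $\tfrac13$, a large-deviation estimate yields $\mathbb{E}\,d_{\xxx}(X_0,X_t)^2\gtrsim t^2$. Using the lower Lipschitz bound and then Markov type $2$,
\[
  \mathbb{E}\,d_{\xxx}(X_0,X_t)^2
  \;\le\;
  \mathbb{E}\,d_{\mathcal{R}}(f(X_0),f(X_t))^2
  \;\le\;
  M^2\,t\;\mathbb{E}\,d_{\mathcal{R}}(f(X_0),f(X_1))^2
  \;\le\;
  M^2\,t\,D^2,
\]
the last step because adjacent vertices satisfy $d_{\mathcal{R}}(f(X_0),f(X_1))\le D$. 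Hence $t^2\lesssim M^2 t D^2$, i.e. $D\gtrsim \sqrt{g}/M_{\mathcal{R}}$, and substituting $g\ge \tfrac{4}{3}\log_2(n)-2$ gives~\eqref{eq:prop_impossibility_lower_bound_explicit} with $c_{\mathcal{R}}$ an absolute multiple of $1/M_{\mathcal{R}}$.

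The main obstacle is the metric-analytic input of the second step: quoting the hyperbolic-space half of \cite{naor2006markov} so that it yields Markov type $2$ for $\mathcal{R}$ \emph{itself} — which need not be simply connected — with a constant that depends only on the pinching constants and not on $\dim\mathcal{R}$ or $\pi_1(\mathcal{R})$; note that one cannot simply lift $f$ to the universal cover, so this must be argued intrinsically. The remaining items are routine: reconciling the exact vertex count of the sextet family with Biggs's girth bound so that the estimate reads $\tfrac{4}{3}\log_2(n)-2$ in terms of the number of vertices $n$, and writing out the standard concentration estimate for the escape rate of the tree walk that underlies $\mathbb{E}\,d_{\xxx}(X_0,X_t)^2\gtrsim t^2$.
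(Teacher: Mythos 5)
Your argument follows the paper's route step by step: a sextet graph supplies the girth lower bound $\tfrac{4}{3}\log_2(n)-2$, and Markov type $2$ of the pinched negatively curved manifold $\mathcal{R}$, via \cite{naor2006markov}, converts the girth into a distortion lower bound. The only place you deviate is that you re-derive the girth-to-distortion implication from the escape rate of the simple random walk on a locally tree-like graph, whereas the paper simply quotes the ready-made inequality $\operatorname{Dist}(f)\ge\frac{\delta-2}{2M_2(\mathcal{R})}\,\sqrt{g}$ attributed on p.~173 of \cite{naor2006markov} to \cite{LinialMagenNaor_2002_GraphEmbedding}; your version is correct and a bit more self-contained, and in fact Jensen applied to the linear escape rate already yields $\mathbb{E}\,d_{\xxx}(X_0,X_t)^2\gtrsim t^2$ without any further concentration estimate. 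On the worry you raise at the end: the statement only requires $c_{\mathcal{R}}$ to depend on $\mathcal{R}$, not solely on the pinching constants, and the paper discharges the Markov-type step by directly citing \citep[Corollary~6.5]{naor2006markov}, which covers the non--simply-connected case you flag.
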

	
	\begin{remark}[Improved Lower Bound Via Expander Graphs]
	One can explicitly obtain larger lower bounds than in~\ref{eq:prop_impossibility_lower_bound_explicit} by using the result of \cite{margulis1982explicit,margulis1988explicit} (both discovered independently and improving a result of \cite{erdregukre}) stating that: for every $\delta \ge 3$ there is a diverging sequence $\{n_k\}_{k\in \nn}$ and graphs $\{G_{n_k}\}_{n\in \nn}$ each with $n_k$-vertices such that each $G_{n_k}$ has girth at-least 
	$
	\frac{3}{4}\log_{\delta-1}\big(n_k\big)
	.
	$  
	Using this sequence of expander graphs, instead of the sequence of Sextet graphs used in the proof of Proposition~\ref{prop_impossibility_theorem_negativelyCurved}, implies that there is no bi-Lipschitz embedding of $G_{n_k}$ into $\mathcal{R}$ with distortion less that
	\[
	\mathcal{O}\Big( 
	    (\delta - 2)\, \sqrt{\frac{4}{3}\log_{\delta -1}(n)+2}
	\Big)
	.
	\]
	The advantage of the construction in Proposition~\ref{prop_impossibility_theorem_negativelyCurved} is a simple explicit class of graphs.  
	\end{remark}
	
	We complement Proposition~\ref{prop_impossibility_theorem_negativelyCurved} by considering the case of compact Riemannian manifolds or Euclidean spaces.   We again find that there is a sequence of pathological finite metric spaces which cannot be bi-Lipschitz embedded into such representation spaces with non-diverging distortion.

	\begin{proposition}[Impossibility of Manifold Embedding of Fixed Finite Dimension]
		\label{prop_impossibility_theorem_positivecurvature}
	Let $\mathcal{R}$ be a Riemannian manifold which is bi-Lipschitz embedded in the Hilbert space $\ell^2$.  There is a constant $c_{\mathcal{R}}>0$ (depending only on $\mathcal{R})$ such that for every positive integer $n$, there is an $n$-point metric space $(\xxx,d_{\xxx})$ for which every bi-Lipschitz embedded $f:\xxx\rightarrow \mathcal{R}$ has distortion $D$ satisfying
	\[
	    D
	        \ge
	    c_{\mathcal{R}}
	    \frac{\log(n)}{\log\big(\log(n)\big)}
	.
	\]
	\end{proposition}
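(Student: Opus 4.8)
The plan is to reduce the statement to the classical impossibility of low-distortion embeddings into Hilbert space, using that by hypothesis $\mathcal{R}$ sits bi-Lipschitzly inside $\ell^2$ with a distortion that depends only on $\mathcal{R}$.

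\textbf{Step 1 (reduction to $\ell^2$).} Fix once and for all a bi-Lipschitz embedding $\iota:\mathcal{R}\hookrightarrow \ell^2$; such a map exists by hypothesis and has some finite distortion $D_{\mathcal{R}}\ge 1$ depending only on $\mathcal{R}$ (and on the frozen choice of $\iota$). If $f:\xxx\to\mathcal{R}$ is any bi-Lipschitz embedding of distortion $D$, then $\iota\circ f:\xxx\to\ell^2$ is a bi-Lipschitz embedding of distortion at most $D_{\mathcal{R}}\,D$, since distortion is submultiplicative under composition. Hence it suffices to exhibit, for each $n$, an $n$-point metric space whose Euclidean distortion is at least $c_0\,\log(n)/\log(\log(n))$ for an absolute constant $c_0>0$; the proposition then follows with $c_{\mathcal{R}}\eqdef c_0/D_{\mathcal{R}}$. (Note this argument uses nothing about $\mathcal{R}$ beyond the existence of $\iota$; the manifold hypothesis only enters through its role in the hypothesis.)

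\textbf{Step 2 (the pathological spaces).} For this I would invoke Bourgain's lower bound \cite{Bourgain_1985_HilbertSpaceisaNogo}: there is an absolute $c_0>0$ and a family of finite metric spaces $(\xxx_{n_k},d_k)_k$ with $\#\xxx_{n_k}=n_k\to\infty$ such that every bi-Lipschitz embedding of $\xxx_{n_k}$ into $\ell^2$ has distortion at least $c_0\,\log(n_k)/\log(\log(n_k))$. To get such a space of \emph{every} cardinality $n$, for $n_k\le n<n_{k+1}$ I would adjoin to $\xxx_{n_k}$ a further $n-n_k$ points, each at distance $\operatorname{diam}(\xxx_{n_k},d_k)$ from all other points (one checks this is a genuine metric); the result is an $n$-point space containing $(\xxx_{n_k},d_k)$ isometrically, and restricting any embedding of the padded space to that copy shows its Euclidean distortion is again at least $c_0\,\log(n_k)/\log(\log(n_k))$. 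Since the sequence $(n_k)$ in Bourgain's construction can be taken dense enough that $\log(n_{k+1})/\log(n_k)$ is bounded, we have $\log(n_k)\gtrsim\log(n)$ and $\log(\log(n_k))\le\log(\log(n))$ on each block, so the padded space has Euclidean distortion $\gtrsim\log(n)/\log(\log(n))$ after shrinking $c_0$ by a universal factor; for the finitely many $n$ with $\log(\log(n))\le 0$ the asserted inequality is vacuous.

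Combining the two steps gives the bound with a constant depending only on $\mathcal{R}$. The \emph{only} substantial ingredient is Bourgain's lower bound itself — the construction of the extremal metric spaces together with the averaging argument responsible for the $\log\log n$ loss — which is a cited theorem; the rest is the routine composition-of-embeddings reduction and the padding argument, so I do not expect any real obstacle here. (If one wanted the sharper exponent $1$, i.e.\ $D\gtrsim\log n$, one could instead use constant-degree expander graphs and the spectral Poincar\'e-inequality argument of Linial--London--Rabinovich; this is not needed for the statement above.)
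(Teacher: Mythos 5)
Your argument is correct and is essentially the same as the paper's: both reduce to Bourgain's Hilbert-space lower bound by composing a hypothetical low-distortion embedding into $\mathcal{R}$ with the assumed bi-Lipschitz embedding $\mathcal{R}\hookrightarrow\ell^2$, using submultiplicativity of distortion. The paper phrases it as a proof by contradiction while you argue directly (and add a padding step to handle all cardinalities $n$), but the substance is identical.
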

	
	\begin{example}[Compact Manifolds Satisfy Proposition~\ref{prop_impossibility_theorem_positivecurvature}]
    If $\mathcal{R}$ is a compact Riemannian manifold then Whitney's embedding theorem implies that there is a smooth diffeomorphism (onto its image) $\varphi:\mathcal{R}\rightarrow \mathbb{R}^{2d}$.  Therefore, Rademacher's theorem and the compactness of $\mathcal{R}$ implies that $\varphi$ is a bi-Lipschitz embedding.  Thus, $\mathcal{R}$ satisfies the conditions of Proposition~\ref{prop_impossibility_theorem_positivecurvature} by the Whitney embedding theorem.  
	\end{example}
	
	The PAC-type bound of Theorem~\ref{MAINTHEOREM_PROBABILISTIC} can be reformulated as a function of the distortion $D$ (instead of the probability $\delta$).  Doing so allows us to compare our bi-Lipschitz embedding results with the above two impossibility results.  We now record this inverted formulation.    
	
	\begin{remark}[Inverted Form of Theorem~\ref{MAINTHEOREM_PROBABILISTIC} where Distortion is Given Instead of $\delta$]
	\label{remark_PAC_explicit_lowerbound}
	The relation ship between the distortion $D$ and the ``probabilistic level'' $0<\delta<1$ in Theorem~\ref{MAINTHEOREM_PROBABILISTIC} can be inverted.  In that case, our proof implies that for any prespecified distortion level $D>2$ there is a PT $T$ as in Theorem~\ref{MAINTHEOREM_PROBABILISTIC} satisfying:
	\[
    		\pp
    		\Big(
        		s
        		d_{\xxx}(x,\tilde{x})
        		\leq
        		\mathcal{W}_2(T(x),T(\tilde{x}))
        		    \,\mbox{ and }\,
        		\mw_2(T(x),T(\tilde{x}))
        		\leq 
        		s
        		D(D-1)
        		d_{\xxx}(x,\tilde{x})
    		\Big)
		\ge 
    		n^{-2+2{
    		\theta_D
    		}}
		,
		\]
		where $\theta_D$ is the unique solution to
		$
		    \frac{2}{D}
		        =
		    (1-\theta)^{\theta/(1-\theta)}
		$ over $[1-2 e/
        D,1)$. 
	\end{remark}
	
	Juxtaposing either of the impossibility results in Proposition~\ref{prop_impossibility_theorem_positivecurvature} or Proposition~\ref{prop_impossibility_theorem_negativelyCurved} with Theorem~\ref{MAINTHEOREM_DETERMINISTIC} (as formulated in Remark~\ref{remark_PAC_explicit_lowerbound}), we find that, even if there are $n$-point metric spaces which simply cannot be well-represented in most Riemannian representation spaces, we can always identify a subset of \textit{any such} space which \textit{can} be arbitrarily well-represented in $\mathcal{MW}_2(\rr)$.  Furthermore, the embeddings are explicitly implemented by PT models whose required number of parameters we now know.  The distortion $D>2$ controls the size of this subspace we are willing to accept.  For instance when comparing against~\eqref{prop_impossibility_theorem_negativelyCurved} we can select any distortion $D\in \big( 2, c_{\mathcal{R}}\sqrt{\frac{4}{3} \log_2(n) - 2}\big]$ and always deduce the existence of an embedding on a subset of those sextet graphs (this is not necessarily the case for a complete Riemannian representation space with pinched negative curvature).

		\section{Experiments}
	\label{s_Experiments}

	In this section we complement the theoretical embedding guarantees from Section \ref{s_MainResults} by preliminary computer experiments on synthetic data. We show that the proposed feature maps can indeed be trained in a standard deep learning framework, that the theoretical advantages of PT mixture-Wasserstein embeddings over Euclidean and hyperbolic carry over to practice, and that the PT-based feature maps generalize beyond $\mathcal{X}_n$.	We compare our proposed representation maps to learned representation maps in Euclidean and hyperbolic space when representing trees, random graphs and graphs ``sampled from'' Riemannian manifolds.\footnote{More precisely, on neighborhood graphs constructed over points sampled from manifolds.}${}^,$\footnote{The Python codes used to produce the results of this section are available at \url{https://github.com/swing-research/Universal-Embeddings}.}  
        
    Our first experiment compares the capacity of deep neural embedding in the space $(\mathcal{MG}_2(\mathbb{R}), \mathcal{WG}_2(\mathbb{R}))$ to embed combinatorial trees against the state of the art; namely embeddings into the hyperbolic plane.  
    %
    Our second experiment examines the efficiency with which PTs utilize dimension.  We compare how well the PT embeds points on a simple high-dimensional compact Riemannian manifold, namely the sphere, to how well a standard feed-forward network can embed those same points into a Euclidean space of equal dimension.  
    

    
    The probabilistic transformer we implement is illustrated in Figure \ref{fig:transformer_MG}. 
    The weights of the network are computed following \cite{WassersteinGaussianMixtures}, by minimizing
    	\begin{equation}
    	\label{eq_objective_MixtureGaussian}
    	\tag{Loss}
    	\mathcal{L}_{\mathrm{PT}}(\theta) = \sum_{\mathclap{x,y\in\mathcal{X}_{\mathrm{train}}}} \left( \mw_ 2^2(T_\theta(x), T_\theta(y)) - d_\mathcal{X}(x,y)^{2\alpha}  \right)^2.
    \end{equation}
    In practice we set $\alpha=1$ as it does not have a strong influence on empirical performance.  
    We conjecture that this is due to the influence of sampling and optimization which result in an ``error floor''. 
    
     \paragraph{Numerical parameters} All networks are trained by the Adam optimizer in \texttt{pytorch}, with weight decay parameter $10^{-6}$, initial learning rate $10^{-4}$ and final learning rate $10^{-6}$.
    
    \subsection{Hyperbolic vs.\ Gaussian Mixtures Geometry: Embedding Trees} 
    \label{s_Numerics__ss_Trees}
    
    Hyperbolic embeddings are the state-of-the-art when embedding metric trees.  Indeed, results such as \cite{sarkar2011low} show that two dimensions is enough to embed any metric tree with low distortion whereas finite trees can only be embedded into $d$-dimensional Euclidean space with low distortion if $d$ is large \cite{Gupta_Trees_Quantitiativefinitedimembeddings_ACM}.  
    
    We compare the ability of $(\mathcal{MG}_2(\rr),\mw_2)$'s geometry to accommodate deep neural embeddings of metric trees against that of the hyperbolic plane. We implement the embedding maps using (universal \cite{kratsios2021_GDL,acciaio2022metric}) overparamterized neural network models, in an effort to minimize the chance that differences in expressivity obscure the geometric effects. As we show below, the additional flexibility of $(\mathcal{MG}_2(\rr),\mw_2)$ over the hyperbolic plane is manifested even when embedding regular binary trees.  


    \subsubsection*{Embedding into the Hyperbolic Plane}

    We first consider the hyperbolic plane \cite{ganea2018hyperbolic,Ganea_embeddin_2021_AAAI} which theoretically embeds trees with arbitrarily small distortion \cite[Theorem 6]{sarkar2011low}. In the next section we look at a higher-dimensional hyperbolic space of the same dimension as the number of the degrees of freedom of our mixture-Wasserstein embedding. 
    
    The hyperbolic space has several isometric (up to a scalar) representations. In this section, we first rely on its canonical (in the sense of \cite{ChenstovTheorem_ExponentialFamilies2018}) \textit{information geometric} representation\footnote{Because it shares many similarities with our embedding space $(\mathcal{MG}_2(\rr),\mw_2)$}. The set of \textit{non-degenerate Gaussian} probability measures on $\rr$ is defined by 
    \[
        \mathcal{G}_1^+ \eqdef  \left\{ \mathcal{N}(\mu,\sigma)\in \mathcal{P}_1(\rr) :\, \frac{N(\mu,\sigma)}{dx} \propto \exp\left( -\frac{(x - \mu)^2}{2\sigma^2}\right),  \mu \in \rr, \sigma \in (0,\infty) \right\}.
    \]
    This set can be made into a Riemannian manifold with the Riemannian metric defined by the \textit{Fisher information matrix} \cite{amari2016information}.
    As shown in \citep[Equation (7)]{COSTA201559}, the geodesic distance (called the Fisher--Rao distance) between two Gaussian probability measures $N(\mu_1,\sigma_1),N(\mu_2,\sigma_2) \in \mathcal{G}_1^+$ in this Riemannian geometry is
    \[
    	d_F\left( N(\mu_1,\sigma_1) , N(\mu_2,\sigma_2) \right) =  \sqrt{2}\ln\left(\frac{\|(\frac{\mu_1}{\sqrt{2}},\sigma_1)-(\frac{\mu_2}{\sqrt{2}},-\sigma_2)\|+\|(\frac{\mu_1}{\sqrt{2}},\sigma_1)-(\frac{\mu_2}{\sqrt{2}},\sigma_2)\|}{\|(\frac{\mu_1}{\sqrt{2}},\sigma_1)-(\frac{\mu_2}{\sqrt{2}},-\sigma_2)\|-\|(\frac{\mu_1}{\sqrt{2}},\sigma_1)-(\frac{\mu_2}{\sqrt{2}},\sigma_2)\|}\right).
    \]
    Moreover, importantly for our comparison, $d_F$ is (up to a constant factor of $\sqrt{2}$) equal to the hyperbolic distance on $\mathbb{H}^2\eqdef\rr\times (0,\infty)$ which is precisely the parameter space of $\mathcal{G}_1^+$ as well as the upper half-plane model of the hyperbolic space. There are two reasons to first look at the 2D hyperbolic embeddings. First, they are easy to visualize; second, it allows us to some extent to dissociate the effects of geometry and expressivity and ease of optimization of neural networks.
    We benchmark our representations against the metric transformer model of \cite[Example 11]{acciaio2022metric} (which for clarity we call the $\mathcal{G}_1^+$-transformer) since that model is universal \cite[Theorem 3.6]{acciaio2022metric}. It is a variant of hyperbolic neural networks \cite{Ganea_embeddin_2021_AAAI}, which are known to be universal \citep[Corollary 3.16]{kratsios2020non}.  
    
    The weights of the metric transformer $T_\theta$ are obtained by minimizing
    \begin{equation}
    	\label{eq_objective_hyperbolic}
    	L_{\mathcal{G}_1^+}(\theta) = \sum_{\mathclap{x,y\in\mathcal{X}_{\mathrm{train}}}} \Big( d_F( T_\theta(x), T_\theta(y))^2 - d_\mathcal{X}(x,y)^2  \Big)^2.
    \end{equation}  
    The main difference with our model is that the metric transformer makes convex combinations in the parameter space of Gaussian measures, not in the space of probability measures itself.
    
    \def\sz{3.5cm}
    \def\szsub{4cm}
    \begin{figure*}[ht]
    	\centering
    	\begin{subfigure}[t]{0.9\textwidth}
    		\centering
        	\includegraphics[origin=c,height=\szsub]{Graphics/Transformer_MG.pdf}
        	\caption{}  
        	\label{fig:transformer_MG}
    	\end{subfigure}
    	\begin{subfigure}[t]{0.6\textwidth}
    		\centering
        	\includegraphics[width=1\textwidth]{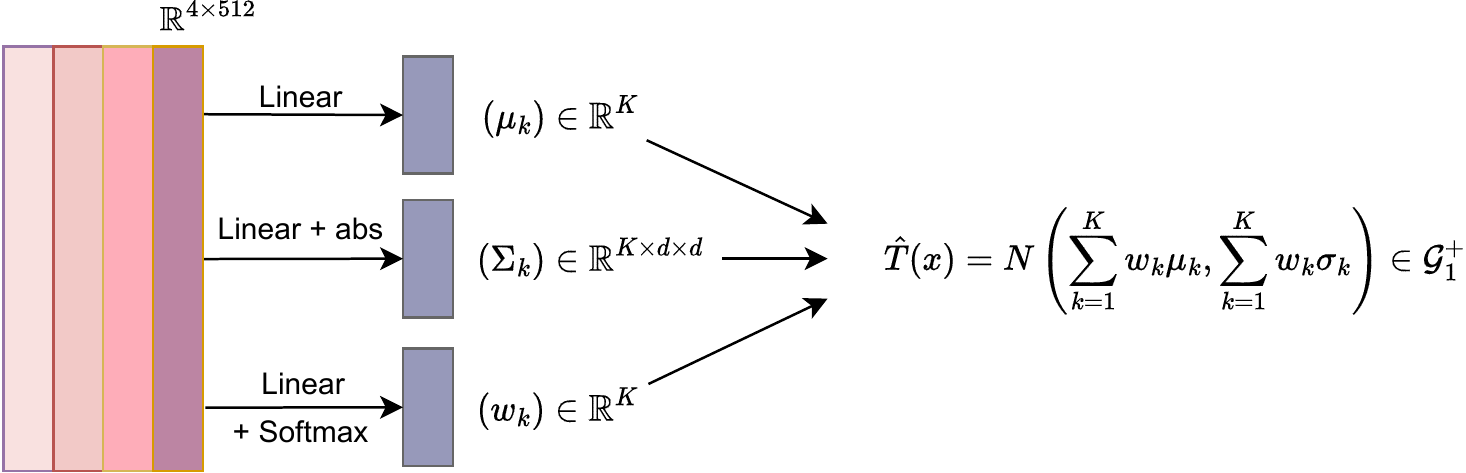}
        	\caption{}  \label{fig:transformer_Gplus}
    	\end{subfigure}
    	\begin{subfigure}[t]{0.35\textwidth}
    		\centering
        	\includegraphics[width=.95\textwidth]{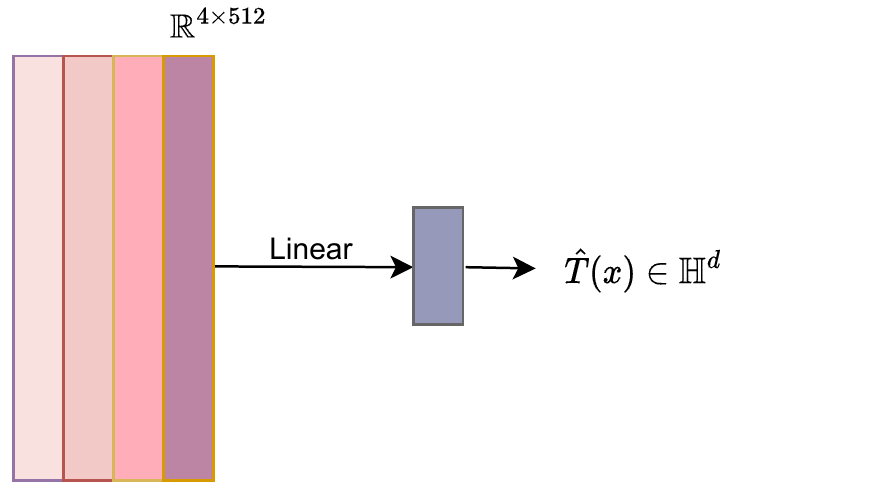}
        	\caption{}  \label{fig:transformer_Hyperbolic}
    	\end{subfigure}
    	\caption{Two different networks implementing feature (embedding) maps. The first block (violet dashed line) is common to all networks; the output blocks change to accommodate different target spaces.
    	a) Probabilistic transformer of \eqref{eq__def_transformer}.
    	b) $\mathcal{G}_1^+$-transformer.
    	c) $\mathbb{H}^d$-transformer.}
       \end{figure*}

    \subsubsection*{Embedding into the $d$-Dimensional Hyperbolic Space}
    Even though the hyperbolic plane is theoretically sufficient to embed trees with small distortion \cite{sarkar2011low}, experimental evidence indicates that neural networks representations into higher dimensional hyperbolic space perform better \cite{digiovanni2022heterogeneous}.  We therefore also compare our representations with representations in $d$-dimensional hyperbolic space represented by the Poincar\'e upper-half space model, whose underlying set it
    \begin{equation*}
        \mathbb{H}^d = \{ x\in \R^{d+1} \text{ such that } x_{d+1}>0  \},
    \end{equation*}
    and whose geodesic distance is given by
    \begin{equation*}
        d_{\mathbb{H}^d}(x,y) = \mathrm{arccosh} \left( 1+ \frac{\|x-y\|_2^2}{2x_n y_n}  \right), \forall x,y \in\mathbb{H}^d.
    \end{equation*}

    We adapt the output of the probabilistic transformer to return vectors in $\mathbb{H}^d$, by swapping its last layer for a trainable linear readout map (see Figure \ref{fig:transformer_Hyperbolic}) and modifying the embedding objective to minimize the distortion with respect to the hyperbolic distance. 
    The $d$-dimensional hyperbolic model is then trained using the following loss function:
    \begin{equation}
    	\label{eq_objective_hyperbolic_dimd}
    	L_{\mathbb{H}^d}(\theta) = \sum_{\mathclap{x,y\in\mathcal{X}_{\mathrm{train}}}} \Big( d_{\mathbb{H}^d}( T_\theta(x), T_\theta(y))^2 - d_\mathcal{X}(x,y)^2  \Big)^2.
    \end{equation}  
    
    \subsubsection*{Experimental Results}
    
    We consider a regular binary tree $\mathcal{X}=(V, E)$ (Figure \ref{fig:binaryTree})
    of depth six with a total of $\vert V \vert = 127$ vertices. 
    The distance $d_\mathcal{X}$ is the shortest path distance.
    We partition the vertices $V$ into training and testing sets, $V_{\text{train}} \cup V_{\text{test}} = V$, with $\vert V_{\text{train}} \vert = 111$ and $\vert V_{\text{test}} \vert = 16$.
    The test vertices (colored white in Figure \ref{fig:binaryTree}) are used to evaluate the quality of out-of-sample representations (that is to say, the generalization) computed by the different representation maps. We designate $L=20$ training vertices as landmarks (colored black in Figure \ref{fig:binaryTree}). The two probabilistic transformers take as input the distances between a vertex and the $L$ landmarks. 
    The neural networks contains more than $7,000$ parameters with only the last layer being adapted to produce an output in the desired space. We use $K=5$ mixture components and $d=15$ for the dimension of the hyperbolic space to ensure a fair comparison with the probabilistic transformer's effective dimension.
    
    \begin{table}[!htbp]%
    \centering
    \begin{tabular}{@{}llccc@{}}
    \toprule
    & & $\boldsymbol{\mathcal{GM}_2(\mathbb{R})}$ & $\boldsymbol{\mathbb{H}^2}$  & $\boldsymbol{\mathbb{H}^{15}}$ \\
    \midrule
    \multirow{2}{*}{\textbf{Training points}} & Mean error & 0.03 & 0.15 & 0.04\\
    & Max. error  & 0.25  & 0.44 & 0.26 \\[2mm]
    \multirow{2}{*}{\textbf{Testing points}} & Mean error & 0.02 & 0.19 & 0.05\\
    & Max. error  & 0.25  & 0.44 & 0.26\\
    \bottomrule
    \end{tabular}
        \caption{Average and maximum relative embedding errors for binary tree representations.}
        \label{tab:XP_tree}
    \end{table}

    Table \ref{tab:XP_tree} shows that our neural representations in the space of Gaussian mixtures perform better than $\mathbb{H}^2$ even when embedding simple trees, and that they perform on par or slightly better than $\mathbb{H}^{15}$. The difference in performance between $\mathbb{H}^2$ and $\mathbb{H}^{15}$ epitomizes the difficulty in verifying theoretical facts with computer experiments: even though theoretically both spaces can achieve low distortion, it is easier to approximate a good feature map for $\mathbb{H}^{15}$. We report the average relative absolute difference between the pairwise distances in the binary tree as compared to embeddings in the respective representation spaces.
    The training set evaluates the embedding quality of the given set of landmarks.  The role of the test set is to check whether the learned representation still provides a good embedding using the set of landmarks in the binary graph\footnote{Our theorems give ``memorization'' guarantees, or, phrased differently, approximation guarantees for embeddings of finite metric spaces. In computer experiments we thus deliberately explore the generalization aspect. The related theoretical questions remain open.}.
    
    We visually assess the quality of the computed representations in Figure \ref{fig:XP_binary_tree}: Figure \ref{fig:distr_MG} displays the probability density functions of the $\mathcal{GM}(\mathbb{R})$ representations and Figure \ref{fig:fr_hyperbolic} displays the information-theoretic hyperbolic representations using Gaussian measures. In both cases the networks learn representations that change continuously with the shortest path distance in $\mathcal{X}$.

    \begin{figure}[ht!]%
    	\centering
    	\begin{subfigure}[t]{1\textwidth}
    		\centering
        	\includegraphics[origin=c,width=.5\linewidth]{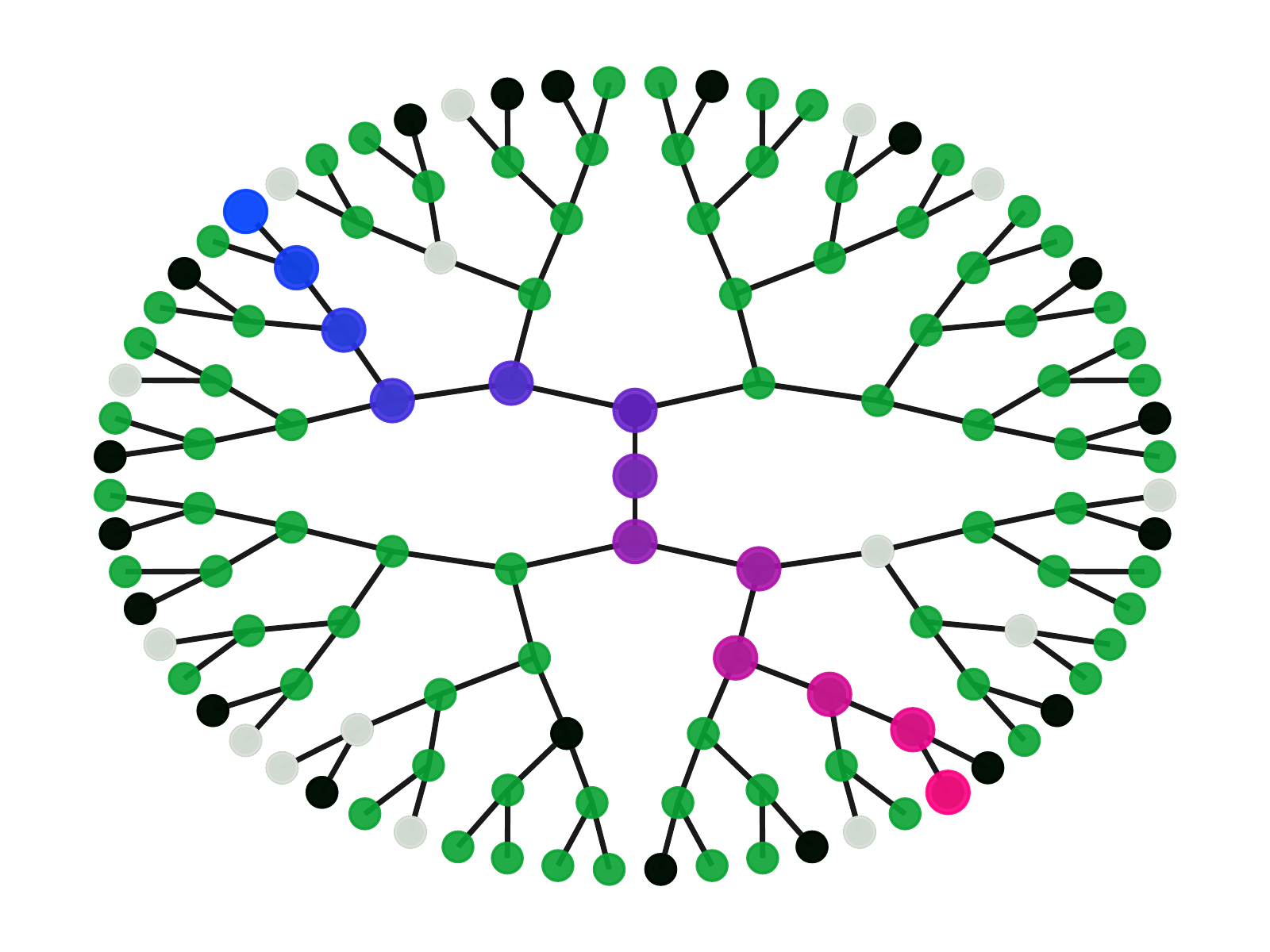}
        	\caption{\label{fig:binaryTree}}  
    	\end{subfigure}
    	\centering
    	\begin{subfigure}[t]{0.32\textwidth}
    		\centering
            \includegraphics[width=\linewidth]{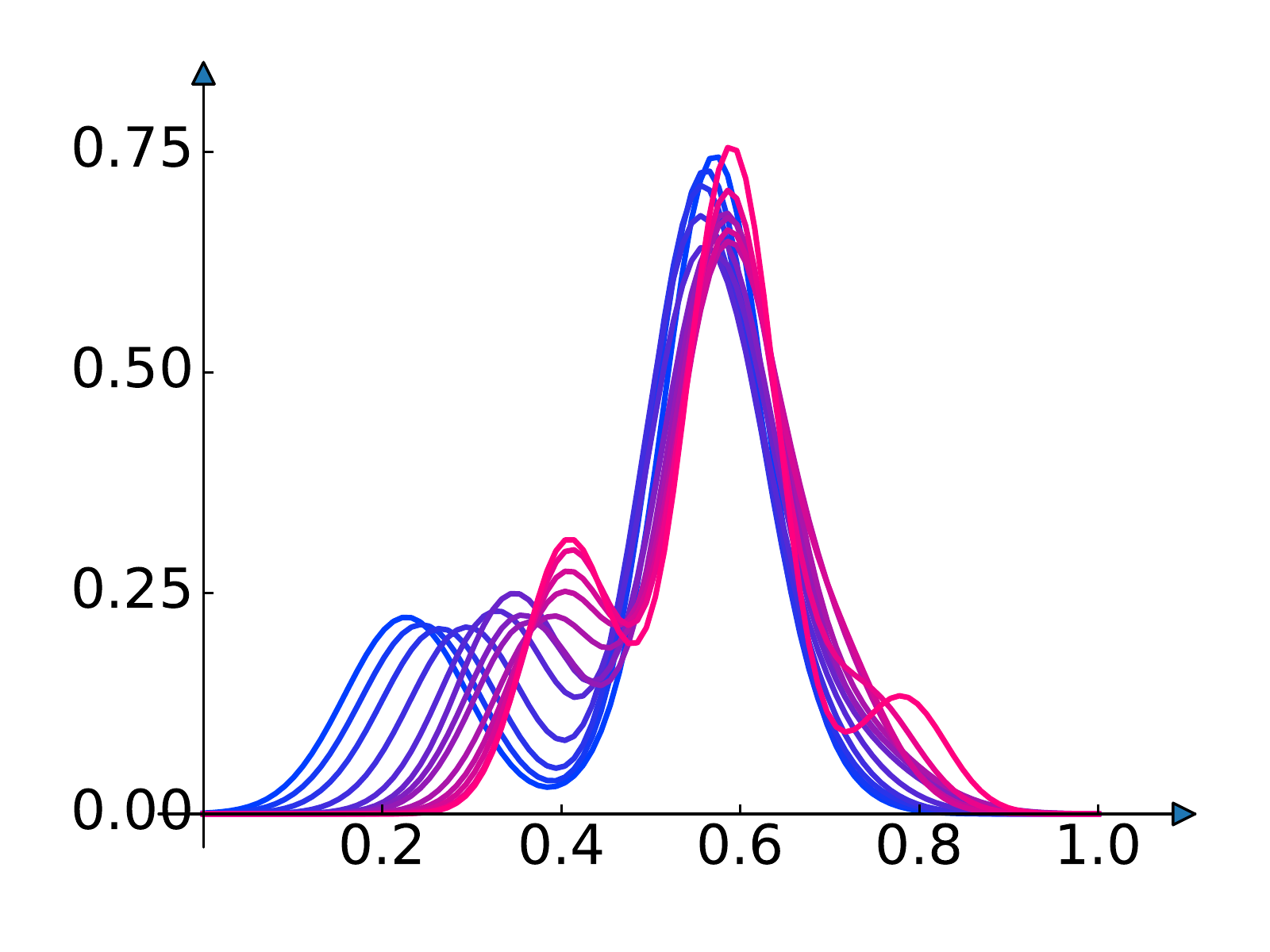}
    		\caption{\label{fig:distr_MG}} 
    	\end{subfigure}
    	\begin{subfigure}[t]{0.32\textwidth}
    		\centering
            \includegraphics[width=\linewidth]{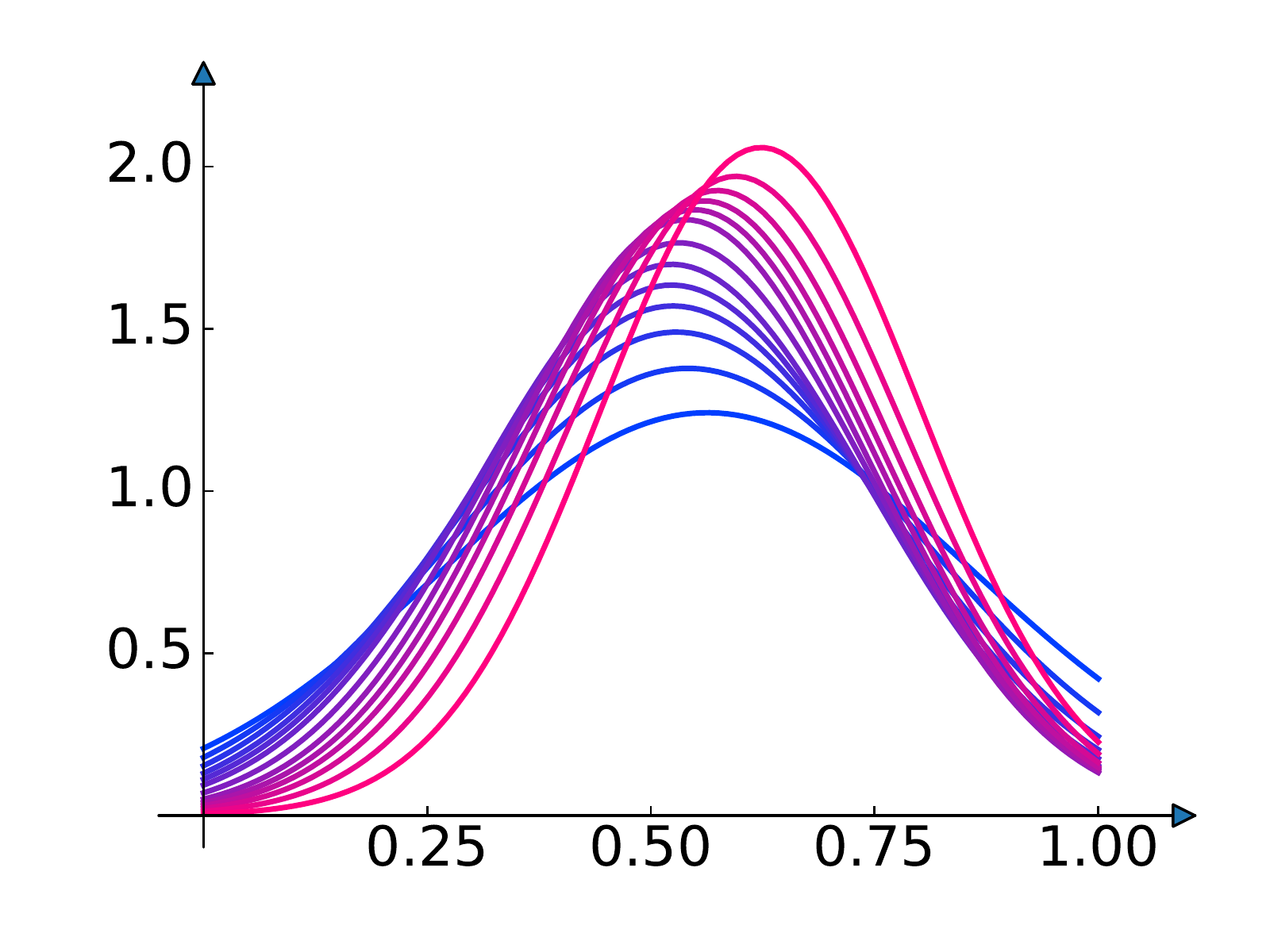}
    		\caption{\label{fig:fr_hyperbolic}}
    	\end{subfigure}
    	\caption{Two embeddings of colored vertices in the binary tree $\mathcal{X}$ shown in (a); white vertices are not seen during training; landmarks are colored black. We apply the transformation $\sigma \mapsto \log_{10}(\sigma+2.1)$ to aid visualization. (b) Mixture Gaussian Wasserstein embeddings represented by their density. (c) hyperbolic (univariate Gaussian) embeddings represented by their density. \label{fig:XP_binary_tree}}
    \end{figure}
   
   Figure~\ref{fig:XP_binary_tree} visually examines the stability of the embeddings produced by the PT against those learned by its hyperbolic counterpart.  Since both output univariate probability distributions, a visual comparison can easily be drawn.  When comparing Figure \ref{fig:fr_hyperbolic} to Figure \ref{fig:distr_MG} we see that the embedding of the path in Figure \ref{fig:XP_binary_tree} is only a bit more contorted than the embedding produced by the PT's hyperbolic counterpart.  This can explain why both embeddings generalize similarly well, but that the gap in embedding quality is explained by the richness of the geometry of $(\mathcal{GM}_2(\rr),\mw_2)$ as compared to the rigidity of the hyperbolic plane's geometry.

    \subsection{PTs Leverage Dimension Efficiently: Manifold Embeddings}
    \label{s_efficiency_EuclideanDim_EffectiveDimension}
    
    We now compare the capacity of the Euclidean space with $(\mathcal{GM}_2(\rr),\mw_2)$ to represent smooth geometries using few effective dimensions. 
    We represent datasets from $N$-dimensional spheres equipped with the geodesic distance.  We remark that, as shown in \cite{robinson2006sphere}, there exist no bi-Lipschitz embeddings of the sphere into any Euclidean space with arbitrarily low distortion.%
    \footnote{We note that this is different from positive results on Riemannian embeddings of the sphere in Euclidean spaces as exemplified by Nash's theorems \cite{Nash_1954_c1_Embeddings}. These ``embeddings'' view the sphere as a submanifold of the Euclidean space but make no guarantees that the embedding preserves the geodesic distance.}%
    ~In this case our findings corroborate known facts and also show that the PT leverages the rich geometry of $(\mathcal{GM}_2(\rr),\mw_2)$ to produce better  low-dimensional embeddings standard feedforward neural networks taking values in high-dimensional Euclidean spaces.

    \subsubsection*{Embedding into Euclidean Space}

    Despite negative theoretical results, Euclidean representations are attractive in practice thanks to the wealth of available machine learning and numerical tools. It is thus desirable to compare them with our proposed representations in the space of measures. To this end we modify the output of our probabilistic transformer to output vectors in $\rr^d$, by swapping its last layer for a trainable linear readout map (see Figure~\ref{fig:transformer_R}) and modifying the embedding objective to minimize the distortion with respect to the Euclidean distance $\|\cdot - \cdot \|_{\ell^2_d}$.  
    Naively, one may expect a well-performing Euclidean representation for sufficiently high embedding dimension $d$. However, as shown below, even for very large $d$ there is a hard limit to representation quality, reflecting the introductory theoretical notes.
    
    We let $\mathcal{S}^N\eqdef\{z\in \rr^{N+1}:\,\|z\|\leq 1\}$ denote the $N$-dimensional sphere and equip it with the geodesic distance  
    $
        d_{\mathcal{S}^N}(x,\tilde{x}) = \cos^{-1}\left( x^{\top}\tilde{x}\right)
    $, 
    $x,\tilde{x}\in \mathcal{S}^N$ \cite{MR3852654,fletcher2003statistics}.
    We randomly sample data points $\{x_i\}_{i=1}^n$ from the uniform probability measure on $\mathcal{S}^N$.
    \begin{figure*}[b!]
    		\centering
        	\includegraphics[width=0.3\linewidth]{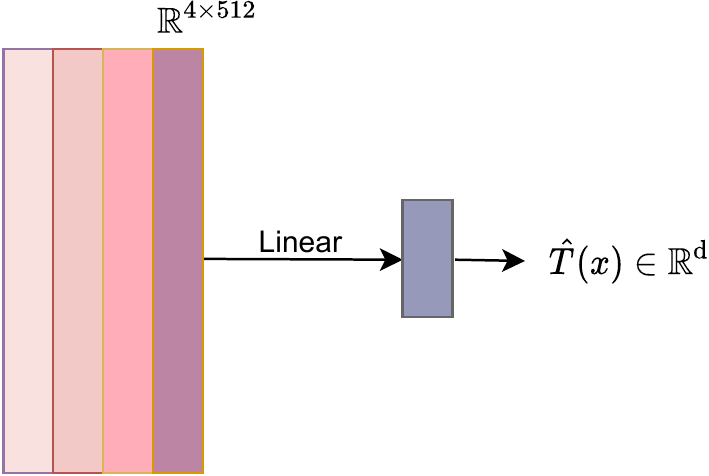}
    	\caption{Replacing the PT's final layer by a linear readout layer yields our feedforward network benchmark.
    	}
    	\label{fig:transformer_R}
   \end{figure*}
    The resulting Euclidean model is a feedforward network with a suitable Euclidean loss function,
	\begin{equation}
    	\label{eq_objective_Euclidean}
    	\mathcal{L}_{\mathrm{Euclid}}(\theta) = \sum_{\mathclap{x,y\in\mathcal{X}_{\mathrm{train}}}} \left( \| T_\theta(x) - T_\theta(y) \|_{\ell^2_d}^2 - d_\mathcal{X}(x,y)^{2}  \right)^2.
    \end{equation}

    \subsubsection{2-dimensional sphere \texorpdfstring{$\mathcal{S}^2$}{S$^2$}}
    \label{s_Numerics__ss_Manifold___sss_2Sphere}

	We begin by visualizing the $\mathcal{GM}_2(\rr)$ embeddings of points on the 2-sphere to develop intuition. A good representation should achieve comparably low distortion across the entire manifold and vary continuously with respect to the geodesic distance on $\mathcal{S}^2$. The learned feature map should generalize well from a set of landmarks to arbitrary points on the sphere.

    We train a PT for $160$ iterations with the Adam optimizer \cite{DBLP:journals/corr/KingmaB14}. In each iteration, we use random batch of 32 points among the 10,000 fixed training points chosen from a uniform measure on the sphere. The $13$ landmarks are not updated during training. The network contains about 200,000 parameters and returns a Gaussian mixture with $K=5$ components.
    
	The results are shown in Figure~\ref{fig:sphere_3d}. For each point $x_k$, colors in Figures~\ref{fig:sphere_3d_MEAN} and~\ref{fig:sphere_3d_MAX} represent the quantities
    \[
    \begin{aligned}
        &\frac{1}{K}\sum_{j=1}^K \Big \vert  d_\M(x_{k},x_{j})^2 - \mw_2^2( \hat T_\theta(x_{k}), \hat T_\theta(x_{j})) \Big \vert  
        &\text{and\quad} 
        & \sup_{1\leq i,j \leq K} 
        \Big \vert 
            d_\M(x_{i},x_{j})^2 - \mw_2^2( \hat T_\theta(x_{i}), \hat T_\theta(x_{j})) 
        \Big \vert .    
    \end{aligned}
    \]
    We can see from the figure that the distances in the representation space faithfully represent those on the sphere. This is further detailed in Figure~\ref{fig:sphere_3d_WorstPoints} where color of a point encodes the embedding error between the point and the red cross. Notice that even if the maximum error between points may seem large (about 0.35), it is reached only for a very small number of point pairs; most pairs have a relative error bellow $10^{-2}$.

    \begin{figure}[!htb]
    	\begin{subfigure}[t]{0.29\textwidth}
    		\centering
    		\includegraphics[width=\linewidth]{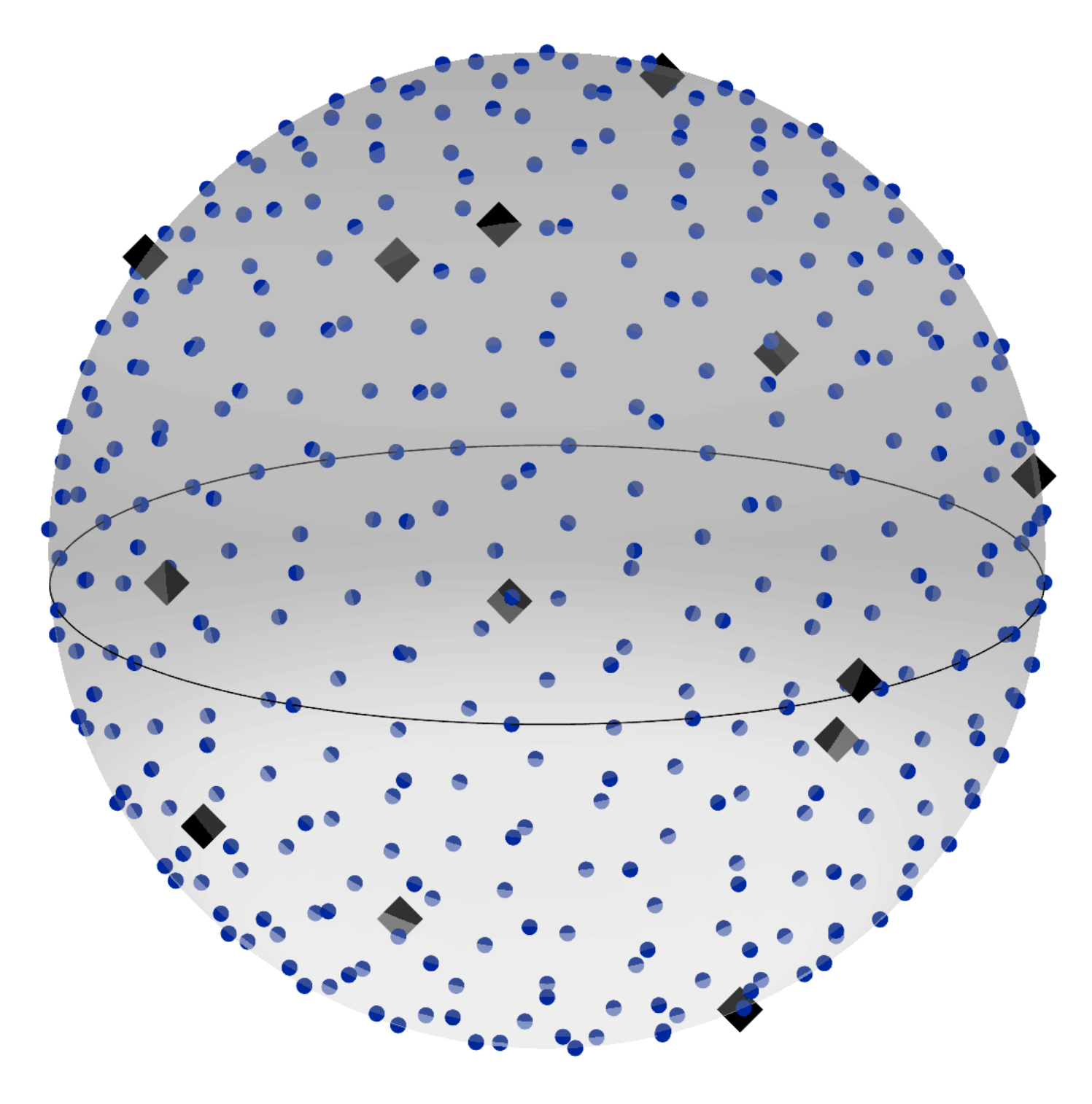}
    		\caption{}
    		\label{fig:sphere_3d_MEAN}
    	\end{subfigure}
    	\begin{subfigure}[t]{0.29\textwidth}
    		\centering
    		\includegraphics[width=\linewidth]{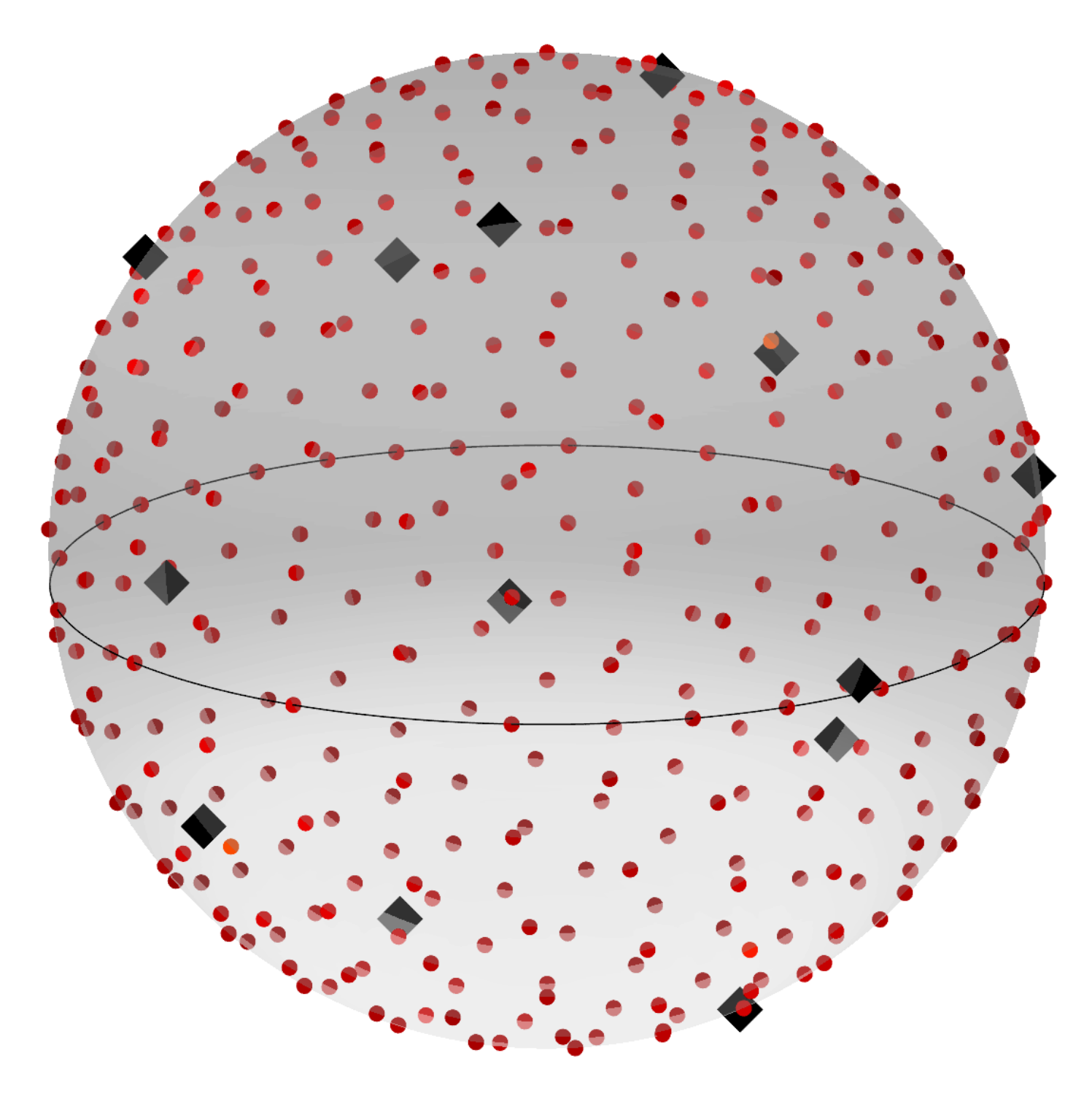}
    		\caption{}
    		\label{fig:sphere_3d_MAX}
    	\end{subfigure}
    	\hspace{0.1cm}
    	\centering
    	\begin{subfigure}[t]{0.39\textwidth}
    		\centering
    		\includegraphics[width=\linewidth]{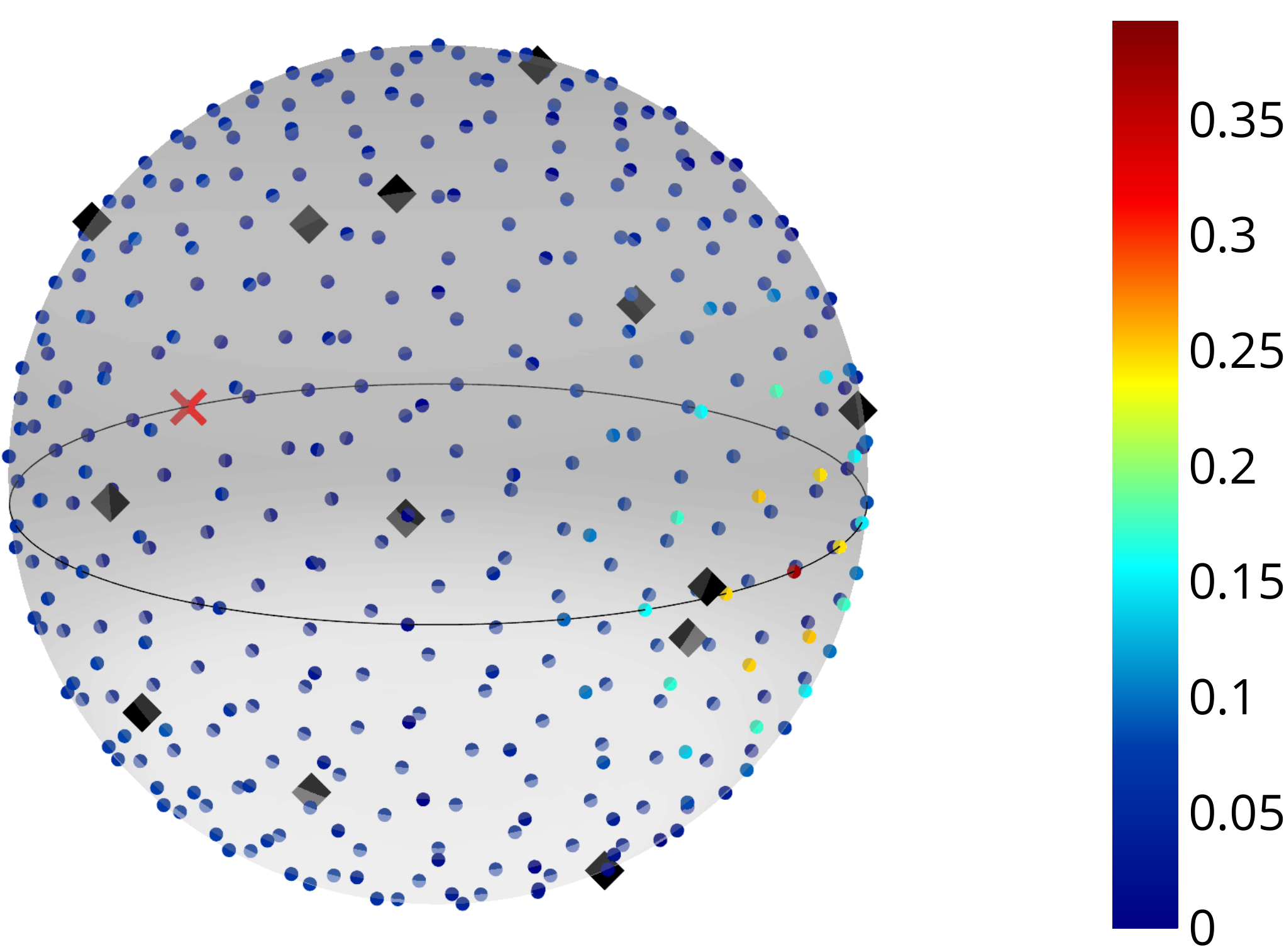}
    		\caption{\hspace{1.7cm}\textcolor{white}{.}}
    		\label{fig:sphere_3d_WorstPoints}
    	\end{subfigure}
    	\caption{Embedding quality of 2-dimensional sphere $\mathcal{S}^2$ in $\mathcal{GM}_2(\rr)$.
    	a) Average error of a point and all the other elements.
    	b) Maximum error of a point and all the other elements.
    	c) Error between the point marked by a red cross and all other points.
    	\label{fig:sphere_3d}} 
    \end{figure}
    
    We then assess the continuity of the embedding in Figure~\ref{fig:sphere_3d2}. We plot the probability density function of a sequence of points spiralling up from the bottom of the sphere to the top, which we colour-code as progressing from red to blue. As expected, the Gaussian mixtures representing points on the sphere progressively change. We also see that only a few significant mixture components are required to perform the embedding.  To aid visualization, we normalize the support of the densities and transform the standard deviation of the Gaussian components using the mapping $\sigma \mapsto \log_{10}(\sigma+2.1)$.
    
    \def\size{5.5cm}
    \begin{figure}[!htb]%
    		\centering
    		\includegraphics[height=\size]{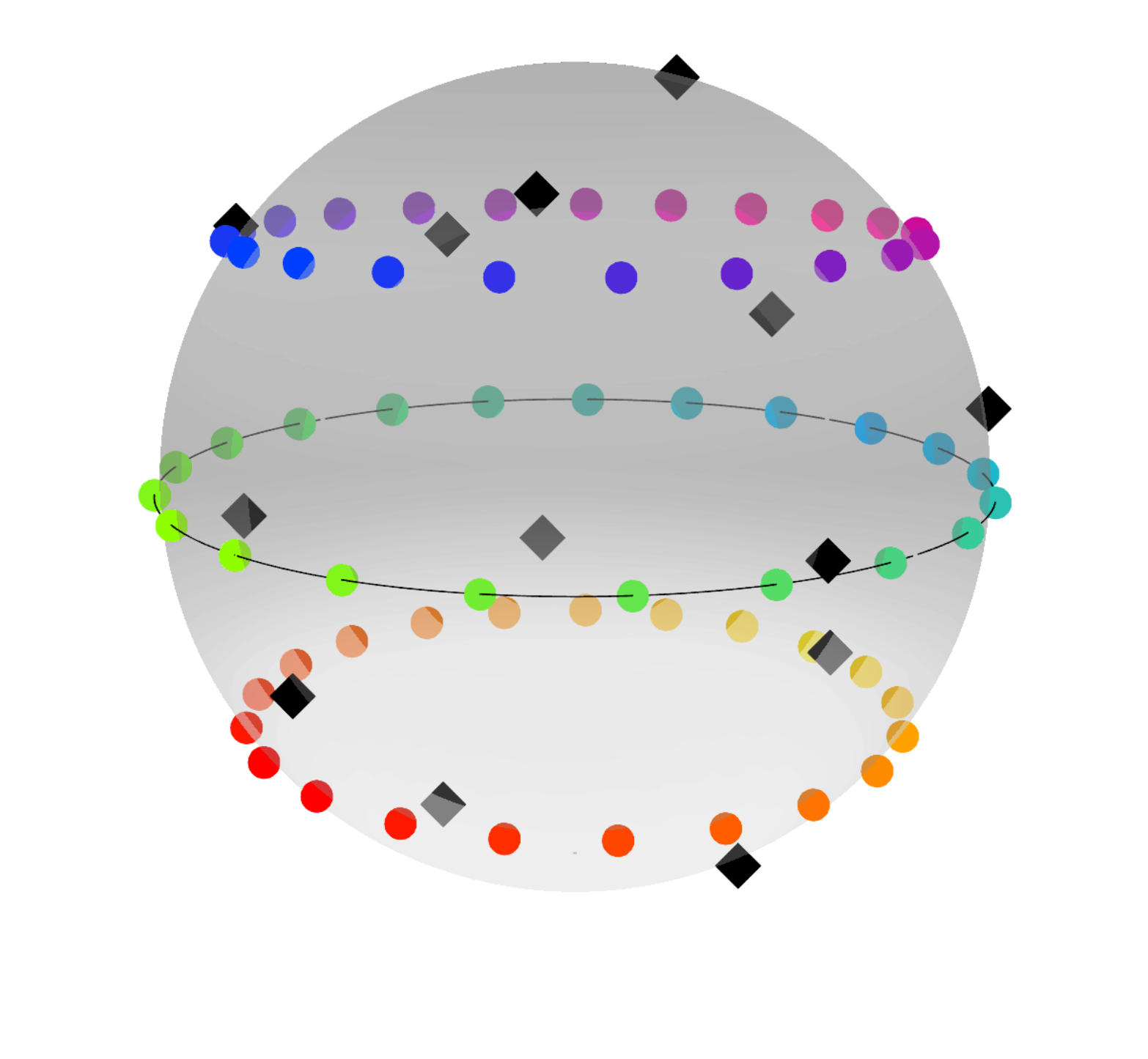}
    		\includegraphics[height=\size]{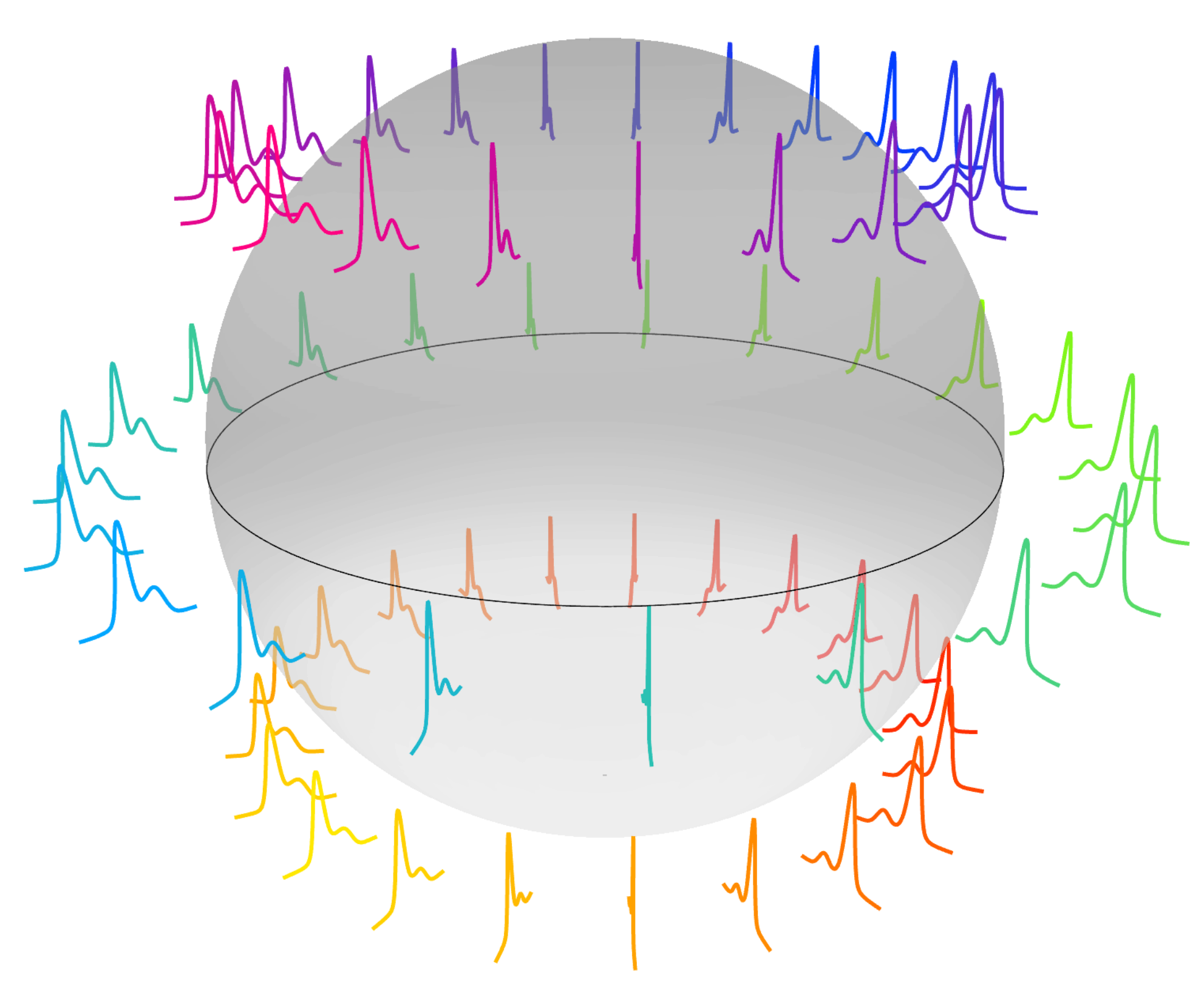}
    	\caption{Probability density functions of the embeddings of points on $\mathcal{S}^2$ into $\mathcal{GM}_2(\rr)$.} \label{fig:sphere_3d2}
    \end{figure}
    
    \subsubsection{Embedding the $N$-Dimensional Sphere}
    The motivation to use Gaussian mixtures instead of Euclidean embeddings is that they easily capture complex geometries; achieving comparable distortion would either require a very high-dimensional Euclidean embedding or is simply impossible due to curvature. To illustrate this, we set up an experiment on $N$-spheres where we vary $N$. We fix the number of mixture components at the output of the PT to $K = 5$. The Euclidean network outputs vectors in $\mathbb{R}^{15}$, thus having the same number of degrees of freedom.  We also compare these two networks with embedding in the hyperbolic space $\mathbb{H}^{15}$. All networks contain about 200,000 parameters. We set the number of landmarks to $L = N + 10$ (note that we need at least $N$ landmarks to uniquely localize a point on $\mathcal{S}^N$). We distribute the landmarks quasi-uniformly on the $N$-sphere adapting a procedure from \cite{debarnot2022deep}.
    
    \begin{figure}[H]
        \centering
        \includegraphics[width=0.4\textwidth]{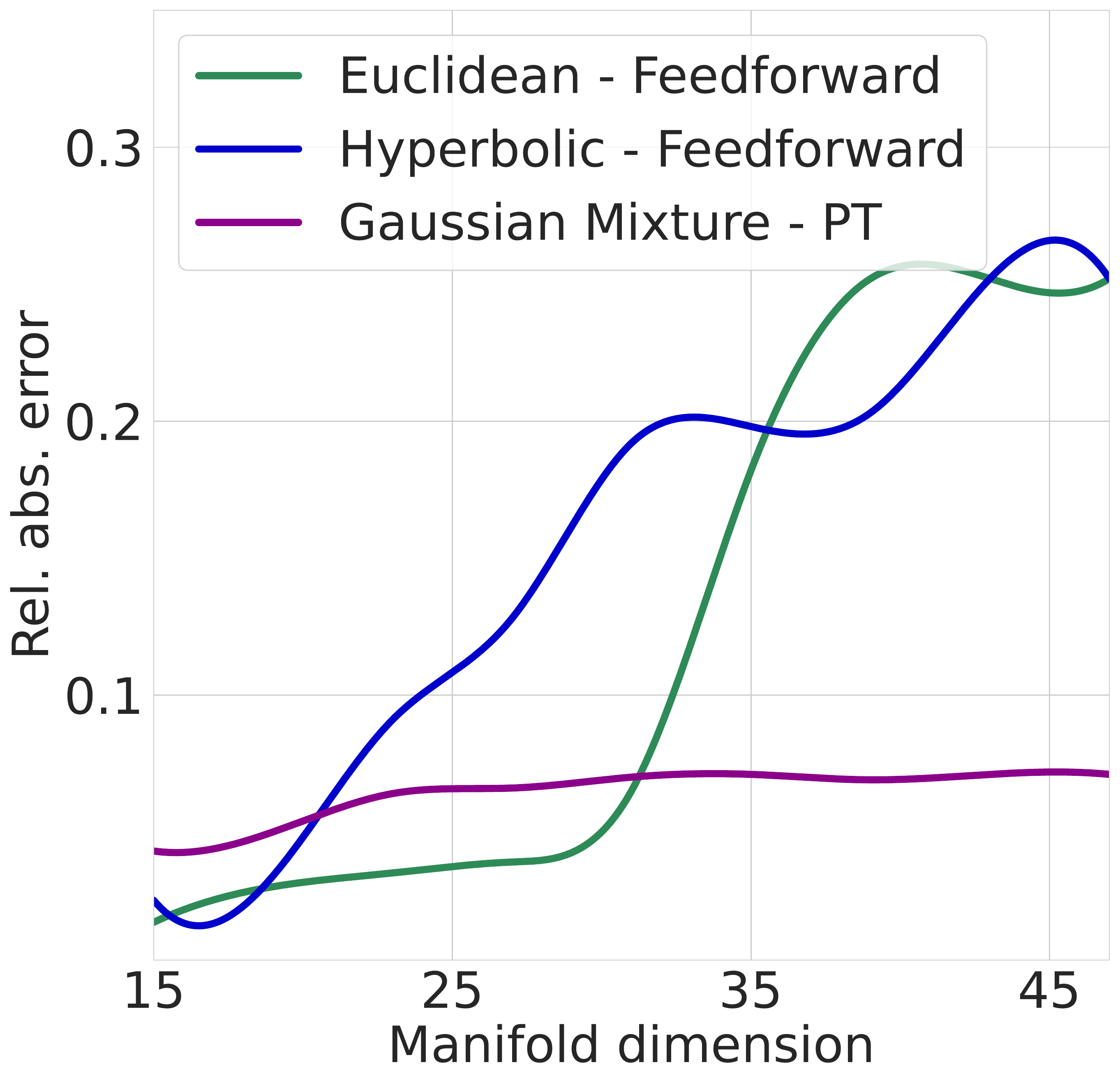}
        \caption{Impact of dimension.\label{fig:sphere_n}}
        \vspace{-0cm}
    \end{figure}

    Figure~\ref{fig:sphere_n} shows the results. As the manifold dimension $N$ increases, the quality of the Euclidean  and Hyperbolic embeddings implemented by the deep feedforward networks rapidly deteriorate.  In contrast, the distortion of the embedding implemented by the probabilistic transformer into the space $(\mathcal{GM}_2(\rr),\mw_2)$ of \cite{WassersteinGaussianMixtures} remains stable.

	\section{Outline of the Proofs of Main Results}
	\label{s_Main_Proof_Outline}
	This section overviews the main steps toward proving Theorem~\ref{MAINTHEOREM_DETERMINISTIC}.  Similar techniques are used in deriving the probabilistic Theorem~\ref{MAINTHEOREM_PROBABILISTIC}.  The latter proof is more technical as it relies on the non-linear Dvoretzky theorem \cite{NaorTao_2012_Dvoretzky_ISJM} which is a metric version of the classical result of Dvoretzky \cite{Dvoretzky_OG_1960}; we defer it to Appendix~\ref{a_Proofs}.   
	
	Our proofs use a novel two-step approach. In \textit{step one}, we use a key lemma (Lemma~\ref{lemma_embedding_neuralification} below) to show that certain probability-measure-valued functions can be implemented by metric transformers with few parameters. In \textit{step two}, given any of the three aforementioned ``geometric priors'', we demonstrate the existence of ``optimal'' embeddings of the required form.  Each of the proofs then concludes by applying the key lemma to show that the embedding can be \textit{exactly} implemented (memorized) by the probabilistic transformer.  The proof is completed by remarking that the PT we have built is defined on the entire metric space $\xxx$.  
	
    We note that the $d$-dimensional Gaussian measure with mean $\mu\in \mathbb{R}^d$ and $d\times d$ covariance matrix $\Sigma$ is be denoted by $N_d(\mu,\Sigma)$.  
	\begin{lemma}[Key Lemma: Probabilistic Transformers Memorize Empirical Markov Kernels]
	\label{lemma_embedding_neuralification}
		Let $d,K$ be a positive integers, let $1\leq p<\infty$, $\#\xxx_n=n>2$, and let $\sigma\eqdef \operatorname{ReLU}$.
		Let $(\xxx_n,d_n)$ be an $n$-point metric space and for $k=1,\dots,K$, let $\mu_k:\xxx_n\rightarrow \rr^d$.  
		Define the map $\varphi:\xxx_n \rightarrow \mathcal{GM}_2(\rr^d)$ by
		\[
		\varphi(x)
		\eqdef 
		\frac1{K}
		\sum_{k=1}^K
		N_d(\mu_k(x),0)
		.
		\]
		Then, there exists a probabilistic transformer with unnormalized graph attention $T:\xxx_n\rightarrow \mathcal{GM}_2(\rr^d)$, which exactly implements $\varphi$; i.e.:
		\[
		\varphi(x) = T(x)
		\qquad
		\mbox{ for every $x \in \xxx_n$}
		\]
		such that
		\begin{align}
		\mathrm{width}(T) &= 
		\max\{
                n
            , 
                K
            , 
                d^2
            ,
                n(n-1) + \max\{d,12\}
        \},
			\\
		\mathrm{depth}(T) &= 
    	\mathcal{O}\left(
	n\left\{
		1+
		    \sqrt{n\log(n)}
		        \,
		     \left[
		        1
		            +
		       \frac{\log(2)}{\log(n)}\,
		       \left(
		            C_n
		                +
		         \frac{
			         \log\big(
			            n^{5/2}\, \operatorname{aspect}(\xxx_n,d_n)
			          \big)
		         }{
		            \log(2)
		         }
		       \right)_+
		     \right]
			\right\}
		\right) \\
		\mathrm{effdim}(T) &\leq K\, (d+d^2) \\
		\mathrm{par}(T) 
		&= \mathcal{O}\Bigg(
                K n \Big( \tfrac{11}{4} \max\{n, d\} \, n^2 -  1 \Big) 
		        \Bigg\{d + \max\{d, 12\}^2 \sqrt{n\log(n)}    \\
    				&\hspace{10mm}\times
    			     \left[
    			        1
    			            +
    			       \frac{\log(2)}{\log(n)}\,
    			       \left(
    			            C_n
    			                +
    			         \frac{
        			         \log\big(
        			            n^{5/2}\,
        			            \operatorname{aspect}(\xxx_n,d_n)
        			          \big)
    			         }{
    			            \log(2)
    			         }
    			       \right)_+
    			 \right]
    		\Bigg\}
		\Bigg).\nonumber
		\end{align}
    Moreover, the ``dimensional constant'' $C_n>0$ is defined by $
    	C_n
    \eqdef
    	\frac{
                2\log(5 \sqrt{2\pi})
            + 
                \frac{3}{2}
                \log(n)
            -
                \frac1{2}\log(n+1)
        }{
            2\log(2)
        }
    $.
	\end{lemma}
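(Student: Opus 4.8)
The plan is to reduce the statement to a single $\operatorname{ReLU}$-network memorization claim, where all the work lies, and to dispose of everything else directly. First I would take every point of $\xxx_n$ to be a landmark, so $L=n$ and the (unnormalized graph attention) input layer is $\Phi\colon x\mapsto (d_n(x,x_1),\dots,d_n(x,x_n))\in\rr^n$. By the triangle inequality $\Phi$ embeds $(\xxx_n,d_n)$ isometrically into $(\rr^n,\ell^\infty)$, so the points $u_i\eqdef\Phi(x_i)$ ($i=1,\dots,n$) are pairwise distinct and $\operatorname{aspect}(\{u_i\}_i,\ell^\infty)=\operatorname{aspect}(\xxx_n,d_n)$ (finite and positive since $\xxx_n$ is a finite metric space). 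Next I would take $\mathcal{NN}_w$ to be the constant map $u\mapsto\mathbf{0}\in\rr^K$ (a single affine layer with vanishing weights and bias), so that $\softmax_K(\mathcal{NN}_w(u))=(1/K,\dots,1/K)$ \emph{exactly} for every $u$. Finally, since the probabilistic attention mechanism forms $N(\mu_k,\Sigma_k^\top\Sigma_k)$ and $N_d(\mu,0)=\delta_\mu$, it then suffices to build, for each $k$, a $\operatorname{ReLU}$ network $g_k\colon\rr^n\to\rr^d$ with $g_k(u_i)=\mu_k(x_i)$ for all $i$, and to set $\mathcal{NN}_k\eqdef(g_k,0)$, padding the output with $d^2$ identically-zero coordinates for the covariance square root. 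With these choices,
\[
T(x_i)=\operatorname{P-Attn}\big(\mathcal{NN}_w(u_i),(\mathcal{NN}_k(u_i))_{k=1}^K\big)=\sum_{k=1}^K\frac{1}{K}\,N_d(\mu_k(x_i),0)=\varphi(x_i),
\]
and since $\Phi$ and the networks are defined on all of $\rr^n$, the resulting $T$ is a probabilistic transformer on $\xxx$ (hence on $\xxx_n$), as required.

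\textbf{Step 2: the memorization construction.} It remains to produce $g_1,\dots,g_K$ within the stated width, depth, and parameter budgets; this is the substantive part. I would use an explicit two-block $\operatorname{ReLU}$ construction. The first block maps the finite set $\{u_i\}_i\subset\rr^n$ injectively onto $n$ distinct reals $t_1,\dots,t_n$, with $\min_{i\ne j}|t_i-t_j|$ and $\max_i|t_i|$ controlled by $\min_{i\ne j}\|u_i-u_j\|_\infty$ and $\max_i\|u_i\|_\infty$; this is what produces the $n(n-1)$ pairwise-comparison contribution to the width and the $\log(n^{5/2}\operatorname{aspect}(\xxx_n,d_n))$ term (essentially the number of bits needed to resolve $n$ points at the given aspect ratio). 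The second block is a one-dimensional memorization sub-network of bit-extraction / interpolation type (in the spirit of \cite{Park,Sublinear_Memorization}) that sends each $t_i$ to $\mu_k(x_i)$ \emph{exactly}, with depth $\widetilde{\mathcal{O}}(n\sqrt{n\log n})$ and width $\mathcal{O}(\max\{n,d\})$ once one accounts for the $\max\{d,12\}$ working/output coordinates and the dimensional constant $C_n$. Concatenating the two blocks, appending the zero covariance block, and taking the maximum of the resulting widths and depths over $\mathcal{NN}_w,\mathcal{NN}_1,\dots,\mathcal{NN}_K$ yields the claimed $\mathrm{width}(T)$ and $\mathrm{depth}(T)$; summing the per-network parameter counts (each $g_k$ has $\widetilde{\mathcal{O}}(n\sqrt{n\log n})$ layers of width $\mathcal{O}(\max\{n,d\}\,n^2)$, times the $d+\max\{d,12\}^2\sqrt{n\log n}(\cdots)$ factor) gives $\mathrm{par}(T)$; and $\mathrm{effdim}(T)=K(d+d^2)$ is immediate from the definition.

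\textbf{Main obstacle.} The hard part will be Step 2: extracting the \emph{exact} (not merely approximate) interpolation with the precise width $n(n-1)+\max\{d,12\}$ and the precise depth involving $\sqrt{n\log n}$, $C_n$, and $\log(n^{5/2}\operatorname{aspect}(\xxx_n,d_n))$ requires a careful piecewise-linear encoding of the index $i$ from $t_i$ followed by an exact linear readout of $\mu_k(x_i)$, and one must track how $n$ and $\operatorname{aspect}(\xxx_n,d_n)$ propagate through the number of extracted bits and through the width/depth trade-off. By contrast, the \emph{probabilistic} content of the statement is cost-free: uniform mixing weights come from feeding $\mathbf{0}$ to a softmax, and Dirac components come from forcing the covariance square-root block to be identically zero, so no error is incurred in the transport distances and the identity $T=\varphi$ on $\xxx_n$ holds on the nose.
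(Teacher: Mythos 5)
Your Step 1 is correct and is essentially the reduction the paper performs: take $L=n$, use the Fr\'echet/unnormalized-attention layer $\Phi$ as an isometric embedding into $(\rr^n,\ell^\infty)$, force the mixture weights to $(1/K,\dots,1/K)$ by making $\mathcal{NN}_w$ a zero affine layer, force the covariance outputs to zero, and reduce everything to a vector-valued memorization problem $g_k(u_i)=\mu_k(x_i)$. The paper also applies the norm equivalence $\|\cdot\|_\infty\le\|\cdot\|_2\le\sqrt n\|\cdot\|_\infty$ to convert the $\ell^2$-aspect ratio that appears in the memorization lemma into $\sqrt n\cdot\operatorname{aspect}(\xxx_n,d_n)$, which is exactly where the $n^{5/2}$ inside the logarithm (rather than $n^2$) comes from; you did note this informally but it is the one bookkeeping step in Step~1 worth making explicit.

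Step 2, however, takes a genuinely different route from the paper's, and it is precisely there that the proposal is underspecified. The paper's Lemma~\ref{lem_memory_capacity_deep_ReLU_regressors} does \emph{not} project the data to $\rr$ and then run a scalar interpolator. Instead, after centering via Jung's theorem it invokes \citep[Theorem 3.1]{vardi2022on} to build, for each data point $x$, a width-$12$ class-memorization network $\tilde{\mathcal{NN}}_x$ that outputs $1$ at $x$ and $0$ on the other points; it then scales each by $f(x)$ (width $\max\{d,12\}$), and finally combines the $N$ networks with the ``deep diagonalized parallelization'' of \citep[Proposition 5]{Florian2_2021}, which is what produces both the extra factor $N$ in the depth and the $n(N-1)+\max\{d,12\}$ width. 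Your explanation that the $n(n-1)$ width term comes from ``pairwise comparisons'' inside a first dimension-reduction block is therefore a misattribution: a single affine projection to $\rr$ has width $\mathcal{O}(n)$, not $\mathcal{O}(n^2)$, and a subsequent scalar bit-extraction sub-network (in the spirit of \cite{Park,Sublinear_Memorization,vardi2022on}) is narrow by design. As you yourself flag, the proposal does not actually derive the width budget $n(n-1)+\max\{d,12\}$, the $\sqrt{n\log n}$-type depth, or the $\tfrac{11}{4}\max\{n,d\}n^2$ parameter factor; these constants trace to the specific parallelization-of-indicators strategy the paper uses, and a direct 1D-projection-plus-interpolation scheme would need its own accounting (and a guarantee that a generic linear projection preserves the separation / aspect ratio of the $u_i$ up to controllable factors, which you do not address). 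So Step~1 stands, but Step~2 is a plausible alternative sketch rather than a proof, and its complexity claims are asserted rather than obtained.
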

	Before stating the next lemma, let us recall the notion of a \textit{doubling metric space}.  Briefly, these are metric spaces for which a maximal ratio determines the number of open balls of finite radius required to cover an open ball of twice their radius.  
	More precisely, a metric space $(\xxx,d)$ is said to be doubling if and only if, there is a constant $C_d>0$ for every point $x\in \xxx$ and every radius $r>0$ the open metric ball about $x$ of radius $r$, i.e. the set $B(x,r)\eqdef \{u\in \xxx:\, d(u,x)<r\}$, there are at most $C_d$ points $x_1,\dots,x_n\in \xxx$ (i.e. $n\leq C_d$) such that the collection of open metric balls $\{B(x_i,r/2)\}_{i=1}^n$ cover $B(x,r)$.  
	The smallest constant $C_d>0$ for which the above this relation holds is called the \textit{doubling constant} of $(\xxx,d)$.  
	Examples of doubling metric spaces are any finite metric space, Carnot Groups such as the Heisenberg group \cite{PansuHeisenbergGroup_1989}, and more generally, any metric space that can be bi-H\"{o}lder embedded into some Euclidean space (see \citep[Theorem 12.2]{heinonen2001lectures} and \cite{naor2012assouad}).  
	
	The doubling constant $C_d$ of a doubling metric space $(\xxx,d)$ plays the role of dimension; for example, for a compact $n$-dimensional Riemannian manifold $\log_2(C_d)$ is of the order $\mathcal{O}(n)$ and the same is true of the $n$-dimensional Euclidean space.  
	However, in general it can be difficult to compute the doubling constant of $(\xxx,d)$.  Thus, we turn to the notion of \textit{metric capacity} as defined in \cite{Brue2021Extension}.  
    The reason is that, as shown in \citep[Proposition 1.7]{Brue2021Extension} the doubling constant and the metric capacity are proportional (we refer the reader to that paper or the proof of Lemma~\ref{lem_doubling_to_capacity_control_lemma} for details).  
	
	Our second main embedding lemma concerns the case where $(\xxx_n,d_n)$ has its points drawn from a doubling metric space.  Since the logarithm of the doubling constant of such spaces plays the role of topological dimension for manifolds (see \citep[Chapter 12]{heinonen2001lectures}), it is intuitive that the number of (parameterized) point masses required to embed $(\xxx_n,d_n)$ into the Wasserstein space depends on the logarithm of the doubling constant.  
	\begin{lemma}[{Datasets in Doubling Metric Spaces bi-H\"{o}lder Embed into $(\mathcal{MG}_2(\rr),\mw_2)$}]
		\label{lem_embedding_doubling_metric_spaces}
		Let $(\xxx,d)$ be a doubling metric space with doubling constant $C_d>0$, $\xxx_n\subseteq \xxx$, and $d_n\eqdef d_{\xxx_n}$.  There are constants $c,C>0$, such that for any $\frac{1}{2}<\alpha<1$, there exists an injective map $\varphi:(\xxx_n,d_n)\hookrightarrow (\mathcal{GM}_2(\rr),\mw_2)$ of the form
		\[
		\varphi(x) = \frac1{
			\lceil
			c
			\alpha^{-1}
			\log(C_d)
			\rceil
		} \sum_{k=1}^{
			\lceil 
			c
			\alpha^{-1}
			\log(C_d)
			\rceil
		}\, 
		N_1(\phi(x)_k,0)
		,
		\]
		satisfying: for every $x,\tilde{x}\in \xxx$ the following holds
		\[
		    d_{\xxx}^{\alpha}(x,\tilde{x})
		\leq
		    \mathcal{W}_2(\varphi(x),\varphi(\tilde{x}))
		\leq 
    	    \mw_2(\varphi(x),\varphi(\tilde{x}))
		\leq 
    		\left\lceil
    		C
    		\left(
    		\frac{\log(C_d)}{(1-\alpha)}
    		\right)^{1+\alpha}
    		\right\rceil
    		d_{\xxx}^{\alpha}(x,\tilde{x})
		,
		\]
		where $\phi:(\xxx_n,d_n)\rightarrow (\rr^{\lceil c\log(C_d)\rceil},\ell_2)$ is a $\alpha$-H\"{o}lder with $\alpha$-H\"{o}lder constant 
		$
		\left\lceil
		C
		\left(
		\frac{\log(C_d)^2}{(1-\alpha)}
		\right)^{1+\alpha}
		\right\rceil
		$
		.
	\end{lemma}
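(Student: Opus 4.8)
The plan is to realize $\varphi$ as the composition of a \emph{binned} Euclidean snowflake embedding of $\xxx_n$ with the map that sends a vector to the uniform empirical measure on its coordinates; the point of the binning is that it makes the $\mw_2$-geometry of the image coincide with a faithfully rescaled Euclidean geometry, so that all the metric content is carried by the Euclidean embedding.

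\emph{Step 1 (Euclidean snowflake embedding).} Apply the quantitative form of Assouad's embedding theorem \cite{Assouad,naor2012assouad} to the doubling space $(\xxx,d)$: there are absolute constants $c,C'>0$ so that, for every $\tfrac12<\alpha<1$, there is a map $\phi_0:\xxx\to\rr^N$ with $N\eqdef\lceil c\,\alpha^{-1}\log(C_d)\rceil$ which, after rescaling so that its lower distortion constant equals $1$, satisfies for all $x,\tilde x\in\xxx$
\[
d(x,\tilde x)^\alpha\;\le\;\|\phi_0(x)-\phi_0(\tilde x)\|_{\ell^2}\;\le\;D_\alpha\,d(x,\tilde x)^\alpha,\qquad D_\alpha\eqdef C'\Big(\tfrac{\log C_d}{1-\alpha}\Big)^{1+\alpha}.
\]
Fix $x_0\in\xxx_n$ and translate $\phi_0$ so that $\phi_0(x_0)=0$; then $\|\phi_0(x)\|_\infty\le\|\phi_0(x)\|_{\ell^2}\le D_\alpha\operatorname{diam}(\xxx_n,d_n)^\alpha\eqdef R$ for every $x\in\xxx_n$.

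\emph{Step 2 (binning).} Pick any $L>4RN$ and set $\phi(x)_k\eqdef\sqrt N\,(\phi_0(x))_k+kL$ for $k=1,\dots,N$. The offsets $kL$ cancel in differences, so $\|\phi(x)-\phi(\tilde x)\|_{\ell^2}=\sqrt N\,\|\phi_0(x)-\phi_0(\tilde x)\|_{\ell^2}$; hence $\phi$ is $\alpha$-H\"older with constant $\sqrt N\,D_\alpha$, which for a suitable absolute $C$ is at most $\lceil C(\tfrac{(\log C_d)^2}{1-\alpha})^{1+\alpha}\rceil$ (as $\sqrt N\le(\log C_d)^{1+\alpha}$ up to constants in the relevant range). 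The choice of $L$ also guarantees that for each fixed $x$ the scalars $\phi(x)_1<\dots<\phi(x)_N$ are strictly increasing and that $\phi(x)_k$ is confined to the ``bin'' $I_k\eqdef[\,kL-R\sqrt N,\;kL+R\sqrt N\,]$, with distinct bins disjoint.

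\emph{Step 3 (passage to $\mw_2$ and conclusion).} Put $\varphi(x)\eqdef\tfrac1N\sum_{k=1}^N N_1(\phi(x)_k,0)=\tfrac1N\sum_{k=1}^N\delta_{\phi(x)_k}\in\mathcal{GM}_2(\rr)$. For two uniform $N$-atom probability measures on $\rr$, $\mathcal{W}_2^2$ equals $\tfrac1N$ times the squared $\ell^2$-distance between their atoms listed in increasing order. Since the choice of $L$ makes the atoms of $\varphi(x)$ increasing in $k$ for every $x$, listing them in increasing order is listing them by bin index, so
\[
\mathcal{W}_2^2(\varphi(x),\varphi(\tilde x))=\tfrac1N\sum_{k=1}^N\big(\phi(x)_k-\phi(\tilde x)_k\big)^2=\|\phi_0(x)-\phi_0(\tilde x)\|_{\ell^2}^2.
\]
Because $\varphi(x),\varphi(\tilde x)$ are finitely supported, $\mw_2$ coincides with $\mathcal{W}_2$ on them (Section~\ref{s_Intro_Background__ssWasserstein_Space}); together with Step 1 this gives $d(x,\tilde x)^\alpha\le\mathcal{W}_2(\varphi(x),\varphi(\tilde x))=\mw_2(\varphi(x),\varphi(\tilde x))\le D_\alpha\,d(x,\tilde x)^\alpha$, and enlarging $C$ to absorb $C'$ and the ceiling yields the stated distortion $\lceil C(\tfrac{\log C_d}{1-\alpha})^{1+\alpha}\rceil$. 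Injectivity of $\varphi$ is immediate from the lower bound, since $d_n$ separates points.

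\emph{Main obstacle.} The delicate part is the binning reduction in Step~3. Without the offsets $kL$ the empirical-measure map can collapse distinct points, because one-dimensional optimal transport silently re-sorts atoms (e.g.\ $\tfrac12\delta_0+\tfrac12\delta_1$ and $\tfrac12\delta_1+\tfrac12\delta_0$ are at $\mathcal{W}_2$-distance $0$), which would destroy the lower bound $d^\alpha\le\mathcal{W}_2$. The offsets, together with the verification that a sufficiently large $L$ forces the optimal transport to decouple bin-by-bin (so that the sorted matching is the bin-preserving matching), are precisely what turns $\mw_2$ into a faithful rescaled Euclidean norm. A secondary technical point is invoking the quantitative Assouad embedding in exactly the form used here — tracking how the target dimension $N$ and the distortion $D_\alpha$ depend on $C_d$ and $(1-\alpha)^{-1}$ — so that the advertised constants emerge.
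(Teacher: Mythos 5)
Your proof is correct and follows the same architecture as the paper's: apply the quantitative Naor--Neiman form of Assouad's theorem to get a bi-H\"older Euclidean embedding in dimension $O(\log C_d)$, convert the Euclidean vector to a uniform empirical measure via a shifted/rescaled coordinate map so that one-dimensional $\mathcal{W}_2$ becomes a (rescaled) $\ell^2$-distance, and then observe that $\mw_2=\mathcal{W}_2$ on finitely supported measures. The one place you diverge is that the paper delegates the crucial ``ordering/shift'' step to Kloeckner's isometry lemma (\citep[Prop.~7.4]{BenoitBertrand_2010_AnnNormSupPisa}) plus an algorithmic construction of the bias term $b$, whereas you make that step self-contained with the explicit binning offsets $kL$ and the verification that the sorted optimal matching respects bins; this is essentially a direct proof of the cited lemma and is a legitimate (and arguably more transparent) substitute, at the cost of being restricted to the finite set $\xxx_n$ rather than an entire Euclidean ball.
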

	
	Since every \textit{finite} metric space is a doubling space, then Lemma~\ref{lem_embedding_doubling_metric_spaces} applies to any finite subset $\xxx_n$ of any general metric space $\xxx$.  Consequentially, Theorem~\ref{MAINTHEOREM_DETERMINISTIC} is obtained by applying Lemma~\ref{lemma_embedding_neuralification} to the embedding in Lemma~\ref{lem_embedding_doubling_metric_spaces}.  
	
	\section{\textbf{Limitations and Possible Improvements}}
	\label{s_discussion}
	
	As mentioned in the introduction, our results are the first theoretical guarantees for deep neural representations of finite datasets from arbitrary metric spaces. We prove several embedding guarantees which pleasingly reflect the impact of the ambient geometry of space on the complexity of the representations and the number of parameters of the feature map. These results can be seen as deep neural approximation theorems for embeddings of finite metric spaces which are usually studied in metric geometry non-constructively. To obtain these results we developed a new proof technique which combines elements of metric embedding theory and memorization theory for deep neural networks; this technique may by a tool of independent interest.

	At the same time, the preludial nature of our work makes it likely that a number of improvements are possible. One natural question is whether we can extend the $n$-point guarantees to the whole $\mathcal{X}$ when $\mathcal{X}$ has a regular geometry (for example, when it is a compact Riemannian manifold). While this may require rephrasing some of our questions and using different mathematical tools, it does seem plausible. 

	Further, there are several aspects of the proofs, for embeddings into the space of Univariate Gaussian mixtures, which suggest improvements are possible: 1) a key step passes through a high-dimensional Euclidean space, but we know experimentally that Euclidean embeddings under-perform and theoretically (Naor, Andoni, Neiman \cite{NaorNeimanAndoniSnowflake2018AnnalsSENSQ}) that this is not necessary; 2) the Gaussian mixtures are often degenerated into empirical measures with equal atomic weights, but numerical experiments suggest a clear advantage of using proper Gaussian mixtures. While at the moment we do not know how to leverage these observations, they remain a subject of ongoing work.

	\subsubsection*{Funding}
	This research was supported by the European Research Council (ERC) Starting Grant 852821-SWING.  Anastasis was also supported by his McMaster University Startup Grant.  
	
	\subsubsection*{Acknowledgments}
	The authors would like to thank Giulia Livieri for her very helpful feedback.

\begin{appendices}

	\section{Explicit Network Complexities: Theorems and Examples}
	\label{s_explicit_network_complexities}
	The next table summarizes the precise complexities of the probabilistic transformer representing the metric spaces with generic $n$-point embedding guarantees, which are covered in any one of the theoretical results in this paper.  This table is an explicit analogue of Table~\ref{tab:Complexities_paperversion}.
	
	\begin{table}[ht!]
    \centering
    \caption{Complexity of the probabilistic transformers performing $n$-point embeddings with explicit constants, with $c =
		\frac{
            2\log(5 \sqrt{2\pi})
        + 
            \frac{3}{2}
            \log(n)
        -
            \frac1{2}\log(n+1)
    }{
        2\log(2)
    }$.}
        \resizebox{\columnwidth}{!}{%
    		\begin{tabular}{@{}lllll@{}}
    			\cmidrule[0.3ex](){1-5}
    			\textbf{Latent Geometry} & \textbf{Effective Dimension} $d_{T}$ & \textbf{Depth} & \textbf{Width} & \textbf{Result}
    			\\
    			\midrule
    		    General
    		    & 
    		        $
    		        2\lceil
        			12
        			C
        			\alpha^{-1}
        			\left(
        			\log(\mathrm{cap}(\xxx_n,d_n))
        			\right)
        			\rceil
    		        $
    		    &
    		        $
    		        \mathcal{O}\left(
    			n\left(
    		        1+
    		    \sqrt{n\log(n)}
    		        \,
    		     \Big(
    		        1
    		            +
    		       \frac{\log(2)}{\log(n)}\,
    		       \left[
    		            c
    		                +
    		         \frac{
    			         \log\big(
    			            n^{5/2}\, \operatorname{aspect}(\xxx_n,d_n)
    			          \big)
    		         }{
    		            \log(2)
    		         }
    		       \right]_+
    		     \Big)
            		\right)
            	\right)
    		        $
    		    & 
    		        $\max\{
    		        d_T
                ,
                    n(n-1) + 12
                \}$
    		   &
    		   Theorem~\ref{MAINTHEOREM_DETERMINISTIC}
    		\\
    			    General
    			& 
    			    $\mathcal{O}\big(
                    (D-2)
        			^{-2}
                    \theta_D
        			\log_2(n)
        			\big)
    			$
    			&
    			    $
    			    \mathcal{O}\left(
                	n^{\theta_{D}}
                	    \left(
                		1+
                		    \sqrt{\theta_D\,\log(n)}
                		        \,
                		     \Big(
                		        1
                		            +
                		       \frac{\log(2)}{\theta_D\,\log(n)}\,
                		       \left[
                		            c
                		                +
                		         \frac{
                			         3 \theta_D\,
                			         \log(
                			            n)
                			       +
                			       \log\big(
                			       \operatorname{aspect}(\xxx_n,d_n)
                			          \big)
                		         }{
                		            \log(2)
                		         }
                		       \right]_+
                		     \Big)
                		\right)
                	\right)
    			    $
    			& 
    			$\mathcal{O}\Big(
        			\max\{
        			    d_T
                    ,
                        n^{2\theta_{D}}  + 12
                    \}
                \Big)$
                & 
                Theorem~\ref{MAINTHEOREM_PROBABILISTIC}
    		\\
    		General - Multivariate Mixtures & 
    		$
    		12\,n(n-1)/2\, (5\frac{5n^4}{2(D-1)} \, \operatorname{aspect}(\xxx_n,d)^2 +2)
    		$
    		& 
    		 $\mathcal{O}\left(
    	    n\left\{
    		1+
    		    \sqrt{n\log(n)}
    		        \,
    		     \left[
    		        1
    		            +
    		       \frac{\log(2)}{\log(n)}\,
    		       \left(
    		            C_n
    		                +
    		         \frac{
    			         \log\big(
    			            n^{5/2}\, \operatorname{aspect}(\xxx_n,d_n)
    			          \big)
    		         }{
    		            \log(2)
    		         }
    		       \right)_+
    		     \right]
    			\right\}
    		\right)
    		$
    		& 
            $
            \max\biggl\{
                n
            , 
                \frac{25n^5(n-1)}{4(D-1)} \, (\operatorname{aspect}(\xxx_n,d)^2 +2)
            , 
                9
            ,
                n(n-1) + 12
            \biggr\}
            $ 
            & 
            Theorem~\ref{theorem:high_dimension_asymptotic_no_distortion}
            \\
    		\midrule
    		Discrete: Trees
    		    & 
    		        $2M$
    		    &
    		        $
    		        \mathcal{O}\left(
    	n\left(
    		1+
    		    \sqrt{n\log(n)}
    		        \,
    		     \Big(
    		        1
    		            +
    		       \frac{\log(2)}{\log(n)}\,
    		       \left[
    		            c
    		                +
    		         \frac{
    			         \log\big(
    			            n^{5/2}\, 
    			            \operatorname{diam}(\xxx_n,d_n)
    			          \big)
    		         }{
    		            \log(2)
    		         }
    		       \right]_+
    		     \Big)
    		\right)
    	\right)
    		        $
    		    & 
    		        $\max\{
                    d_T
                ,
                    n(n-1) + 12
                \}$
    		   &
    		   Proposition~\ref{prop_combinatorial_tree_embedding}
    		\\
    		    Discrete: $2$-Hop Graphs
    		    & 
    		        $2\lceil 
    		            12\,C\alpha^{-1}\log(1+\rho(A_{G}))
    		        \rceil$
    		    &
    		        $
    		        \mathcal{O}\left(
    			n\left(
    		        1+
    		    \sqrt{n\log(n)}
    		        \,
    		     \Big(
    		        1
    		            +
    		       \frac{\log(2)}{\log(n)}\,
    		       \left[
    		            c
    		                +
    		         \frac{
    			         \log\big(
    			            n^{5/2}\, 
    			            \operatorname{diam}(\xxx_n,d_n)
    			          \big)
    		         }{
    		            \log(2)
    		         }
    		       \right]_+
    		     \Big)
            		\right)
            	\right)
    		        $
    		    & 
    		        $\max\{
                    d_T
                ,
                    n(n-1) + 12
                \}$
    		   &
    		   Corollary~\ref{cor_combinatorial_graphs}
    		\\
    		\midrule
    		    Manifold: Riemannian, Bounded Curvature
    		    & 
    		        $
    		        2\big\lceil\tilde{C}\frac{m^{1+\alpha}}{\alpha (1-\alpha)^{1+\alpha}}\big\rceil
    		        $
    		    &
    		        $
    		        \mathcal{O}\left(
                    	n\left(
                    		1+
                    		    \sqrt{n\log(n)}
                    		        \,
                    		     \Big(
                    		        1
                    		            +
                    		       \frac{\log(2)}{\log(n)}\,
                    		       \left[
                    		            c
                    		                +
                    		         \frac{
                    			         \log\big(
                    			            n^{5/2}\, \operatorname{aspect}(\xxx_n,d_n)
                    			          \big)
                    		         }{
                    		            \log(2)
                    		         }
                    		       \right]_+
                    		     \Big)
                    		\right)
                    	\right)
    		        $
    		    & 
    		        $
    		            \max\big\{
                    d_T
                ,
                    n(n-1) + 12
                    \big\}
    		        $
    		   &
    		   Corollary~\ref{cor_Ricci_Version}
    		\\
    		    Manifold: Riemannian $\&$ Compact
    		    & 
    		        $4d$
    		    &
    		        $
    		          \mathcal{O}\left(
                	n\left(
                		1+
                		    \sqrt{n\log(n)}
                		        \,
                		     \Big(
                		        1
                		            +
                		       \frac{\log(2)}{\log(n)}\,
                		       \left[
                		            c
                		                +
                		         \frac{
                			         \log\big(
                			            n^{5/2}\, \operatorname{aspect}(\xxx_n,d_n)
                			          \big)
                		         }{
                		            \log(2)
                		         }
                		       \right]_+
                		     \Big)
                		\right)
                	\right)
    		        $
    		    & 
    		        $\max\{
    		        d_T
                ,
                    n(n-1) + 12
                    \}$
    		   &
    		   Proposition~\ref{proposition_manifoldprior}\\
    			\bottomrule
    		\end{tabular}
    		}
        \label{tab:Complexities_explicit}
    \end{table}
	
	
	\begin{remark}[Approximate Complexity of $T$ For Low-Distortion Embeddings]
	\label{rem_breakdown_n_param}
	The proof of Theorem~\ref{MAINTHEOREM_PROBABILISTIC} and a remark on \citep[page 492]{NaorTao_2012_Dvoretzky_ISJM} shows if $D\approx 2$ (but $D>2$) then, the probabilistic transformer
    $T$ in Theorem~\ref{MAINTHEOREM_PROBABILISTIC} 
    has depth about the order%
    \footnote{We use the notation $\tilde{\mathcal{O}}$ to hide logarithmic factors.}%
    $
    \tilde{\mathcal{O}}\big(
        n^{
            \frac{11(D-2)}{\log(1/(D-2)^4)} 
                + 
            \mathcal{O}\big(
            	\frac{
            		11(D-2)
            		\log\log(1/(D-2))
            	}{
            		2 \log(1/(D-2))^2
            	}
            \big)
        }
    \big)
    $.  When $n$ is also large then it has width of the order $
    \tilde{\mathcal{O}}\big(
        n^{
            \frac{D-2}{\log(D-2)} 
            +
            \mathcal{O}\big(
            	\frac{
            		(2D-4)
            		\log\log(1/(D-2))
            	}{
            		\log(1/(D-2))^2
            	}
            \big)
        }
    \big)
    $. 
	\end{remark}

	\section{Proof Details}
	\label{a_Proofs}
	This section contains the paper's main proofs.  
	
	We begin with the following memorization lemma, on which the proof of Lemma~\ref{lemma_embedding_neuralification} is developed.  This result extends the quantitative ``class memorization'' result of \cite{vardi2022on} to a quantitative vector-valued interpolation result.  The next Lemma can be contrasted with the VC-bounds of \cite{JLMLR_BartlessHAveyLiawMagrabian_2019_VCBoundsReLUffNN} which imply that a ReLU feedforward network of depth $D$ memorizing how to match $N$ inputs in $\rr^n$ with $C$ classes must have at-least $\Omega\big(\frac{D}{\ln(D)}\big)$ parameters.  
    We recall that the following convenient notation is used in the next lemma.  For each $u\in \rr$ we denote $[u]_+\eqdef \max\{u,1\}=\operatorname{ReLU}(u-1)$.
	\begin{lemma}[Memory Capacity of Deep ReLU Regressors]
	\label{lem_memory_capacity_deep_ReLU_regressors}
		Let $n,d,N\in \nn_+$, let $f:\rr^n \rightarrow \rr^d$ be some function, and consider distinct $x_1,\dots,x_N\in \rr^n$.  
		There exists a deep ReLU network $\mathcal{NN}:\rr^n\rightarrow \rr^d$ satisfying
		\[
		\mathcal{NN}(x_i) 
		= 
		f(x_i)
		,
		\]
		for every $i=1,\dots,N$.  
		Furthermore, we have the following quantitative model complexity estimates:
		\begin{align*}
			\mathrm{width}(\mathcal{NN}) 
			&= 
			n(N - 1) + \max \{ d, 12\}, \\
			\mathrm{depth}(\mathcal{NN})
			&=
			\mathcal{O}\left(
			N\left\{
    			1+
    			    \sqrt{N\log(N)}
    			        \,
    			     \left[
    			        1
    			            +
    			       \frac{\log(2)}{\log(N)}\,
    			       \left(
    			            C_n
    			                +
    			         \frac{
        			         \log\big(
        			            N^2\,
        			            \operatorname{aspect}(\xxx_N,\|\cdot\|_2)
        			          \big)
    			         }{
    			            \log(2)
    			         }
    			       \right)_+
    			     \right]
    			\right\}
			\right), \\
			\mathrm{par}(\mathcal{NN})
			&=
			\mathcal{O}\left(
                N
		        \Big(
		        \frac{11}{4} 
		        \max\{
		            n
		        ,
		            d
		        \}	 
		        \,
		        N^2
		        -
		        1
		        \Big) 
		        \,
		        \left\{
    				d+
    			    \sqrt{N\log(N)} 
    			        \,
    			     \left[
    			        1 +
    			       \frac{\log(2)}{\log(N)} \,
    			       \right. \right. \right. \\
    			       &\hspace{10mm} \times
    			       \left. \left. \left.
    			       \left(
    			            C_n
    			                +
    			         \frac{
        			         \log\big(
        			            N^2\,
        			            \operatorname{aspect}(\xxx_N,\|\cdot\|_2)
        			          \big)
    			         }{
    			            \log(2)
    			         }
    			       \right)_+
    			 \right]
    			 \,
    			    \max\{d,12\}
    			    \Big[1 + \max\{d,12\}\Big]
    		\right\}
		\right).
		\end{align*}
	The ``dimensional constant'' is defined by $
		    C_n
		        \eqdef 
		\frac{
		            2\log(5 \sqrt{2\pi})
		        + 
		            \frac{3}{2}
		            \log(n)
		        -
		            \frac1{2}\log(n+1)
		    }{
		        2\log(2)
		    }
	>
	    0
	$
	.
	\end{lemma}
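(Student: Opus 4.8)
The plan is to reduce the vector-valued exact-interpolation problem to a scalar combinatorial memorization that can be handled by a bit-extraction network in the style of \cite{vardi2022on}, and then to decode the extracted information into the exact target vectors by a continuous piecewise-linear lookup. First I would choose an affine functional $x\mapsto\langle w,x\rangle+c$ injective on $\{x_1,\dots,x_N\}$ --- e.g.\ via a pigeonhole argument over a Vandermonde family $w=(1,\delta,\dots,\delta^{n-1})$ --- producing distinct reals $t_1<\dots<t_N$, while tracking how $\min_{i\ne j}|t_i-t_j|$ and $\max_i t_i-\min_i t_i$ degrade relative to $\operatorname{aspect}(\xxx_N,\|\cdot\|_2)$. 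Guaranteeing and quantifying this separation requires the pairwise-difference structure of the $x_i$, which is the source of the $n(N-1)$ contribution to the width and of the $\operatorname{aspect}$ dependence of all later estimates. After this step it suffices to memorize a map $g$ on $\mathbb{R}$ with $g(t_i)=f(x_i)$.

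Next I would rescale the $t_i$ onto a dyadic grid of mesh $2^{-b}$ with $b=\Theta\big(C_n+\log_2(N^2\operatorname{aspect}(\xxx_N,\|\cdot\|_2))\big)$ --- just enough bits to separate them --- and invoke a quantitative form of the bit-extraction memorization of \cite{vardi2022on}, with $C=N$ ``classes'', to build a ReLU sub-network of absolute constant width mapping $t_i$ to a binary encoding of its index $i$, carrying a copy of the input along unused coordinates. The depth/parameter accounting of that recursion, tracked as a function of the bit budget $b$ and of $N$, is exactly what produces the $\sqrt{N\log N}$ factor and the bracketed term $1+\tfrac{\log 2}{\log N}\big(C_n+\log_2(N^2\operatorname{aspect})\big)$, with the outer factor $N$ coming from the number of point-processing stages.

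Exactness is then cheap: I would append a continuous piecewise-linear lookup network $h:\mathbb{R}\to\mathbb{R}^d$ with $h(i)=f(x_i)$ for $i=1,\dots,N$, realized by a width $O(\max\{d,12\})$ net that sweeps through the indices accumulating the correct affine correction on each unit interval; since agreement is required only on integers, arbitrary real vector outputs are reproduced with no approximation error. Composing $h$ with the index map, and that with the functional of the first step, yields $\mathcal{NN}$ with $\mathcal{NN}(x_i)=f(x_i)$. The stated complexity then follows by routine bookkeeping: the width is the separation layer $n(N-1)$ in parallel with the constant-width gadget and the $d$-wide decoder, i.e.\ $n(N-1)+\max\{d,12\}$; the depth is the composition, dominated by the $N\sqrt{N\log N}$ bit-extraction stage times the bit budget; and $\mathrm{par}$ is obtained by summing $\|A^{(j)}\|_0+\|b^{(j)}\|_0$ over layers, a few of which are wider (producing the $\tfrac{11}{4}\max\{n,d\}N^2$ factor).

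The main obstacle I anticipate is making the bit-extraction step rigorous with the right bookkeeping. The construction of \cite{vardi2022on} is phrased for classification with bounded-precision thresholds, so it must be re-derived so that it (i) outputs an \emph{exact} integer index rather than an approximate thresholded value, (ii) has its depth and parameter count expressed explicitly in terms of the bit budget $b$, and (iii) has $b$ --- and hence the whole bound --- controlled by $\operatorname{aspect}(\xxx_N,\|\cdot\|_2)$. This hinges on the delicate quantitative point in the first step: the projection must keep both the minimum separation and the diameter of $\{t_i\}$ within a controlled polynomial factor of the original aspect ratio, rather than letting them blow up with $n$ or $N$.
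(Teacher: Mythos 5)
Your proposal is a genuinely different construction from the one in the paper, so let me first describe the paper's route and then flag where your bookkeeping and one structural step go astray.

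The paper does \emph{not} project to a line, extract an index, and decode. It works in two conceptually disjoint phases. First, for each $x\in\xxx_N$ it invokes \cite[Theorem 3.1]{vardi2022on} as a black box with $C=2$ to obtain a width-$12$ ReLU network $\tilde{\mathcal{NN}}_x:\rr^n\to\rr$ that outputs \emph{exactly} $1$ at $x$ and \emph{exactly} $0$ at every other point of $\xxx_N$; it then sets $\mathcal{NN}_x(u)=f(x)\cdot\tilde{\mathcal{NN}}_x(W^{-1}u)$. Second, it stacks these $N$ sub-networks with the deep diagonalized parallelization of \cite[Proposition~5]{Florian2_2021} and sums via a block matrix. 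All of the quantities you are trying to reverse-engineer come from this second phase. In particular, the $n(N-1)$ term in the width is exactly the cost of routing the original $n$-dimensional input along the diagonal through $N-1$ stages of the sequential parallelization --- not, as you suggest, anything to do with the ``pairwise-difference structure of the $x_i$.'' Likewise, the outer factor of $N$ in the depth is the parallelization multiplying the depth of a single indicator network by $N$; there are no ``$N$ point-processing stages'' in your pipeline, so your construction would not produce this factor. An affine projection $x\mapsto\langle w,x\rangle+c$ is a width-$1$ layer; it does not contribute $n(N-1)$ width.

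This mismatch matters because your plan, if carried through, would yield asymptotically \emph{smaller} width and depth than the lemma states (roughly $\mathcal{O}(\max\{N,d\})$ width from the lookup and $\mathcal{O}(\sqrt{N\log N}\cdot(\cdots))$ depth without the outer $N$), yet you assert the paper's bounds and attribute them to sources that don't exist in your construction. A smaller network is of course permissible, but the verification cannot proceed by matching terms that have no origin in your steps. Separately, your Step~2 requires re-deriving the internals of \cite{vardi2022on} so that it emits an \emph{exact integer index} for $C=N$ classes, which you correctly identify as the hard part but do not resolve; the paper deliberately avoids this by invoking the theorem only for $C=2$ (exact $\{0,1\}$ outputs are then multiplied by $f(x)$), so no re-derivation of bit extraction is needed. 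If you wish to carry out your alternative, the proof obligations are: (i) a quantitative version of Vardi et al.\ producing exact indices, with explicit dependence on the bit budget $b$; (ii) a bound relating the separation of $\{t_i\}$ after the 1D projection to $\operatorname{aspect}(\xxx_N,\|\cdot\|_2)$, without losing a factor worse than the $n^{1/2}N^2$-type losses in the paper's $R$; and (iii) an honest recomputation of width/depth/$\mathrm{par}$ from the resulting network, which will not recover $n(N-1)$ or the outer $N$ factor.
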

	\begin{proof}[{Proof of Lemma~\ref{lem_memory_capacity_deep_ReLU_regressors}}]
	\textbf{Step 1: Centering Data}\hfill\\
	Let $\xxx_N \eqdef \{x_i\}_{i=1}^N$ and for any $\bar{x}\in \rr^n$ and any $r>0$ denote $\overline{B_2(\bar{x},r)}
	    \eqdef 
	    \big\{
	        u\in \rr^n:\,
	        \|u-\bar{x}\|_2
	            \leq 
	        r
	    \big\}
	$.  Since $\xxx_N$ is finite it is compact and therefore by Jung's Theorem (see \cite{Jung1901}) there exists some $\bar{x}\in \rr^n$ such that
	\[
	    \xxx_N
	        \eqdef
	    \overline{
    	    B_2\Big(
    	        \bar{x},
    	        r
    	    \Big)
	    }
	    \mbox{ and }
	    r
	        \eqdef 
	    \operatorname{diam}(\xxx_N,\|\cdot\|_2)
	    \frac{n^{1/2}}{(2(n+1))^{1/2}}
	   ,
	\]
	where $\operatorname{diam}(\xxx_N,\|\cdot\|_2)\eqdef \max_{1\leq i,j\leq N}\, \|x_i-x_j\|_2$.  
	Thus, the (affine) isometry $W:\rr^n\rightarrow\rr^n$ sending any $x\in \rr^n$ to $x-\bar{x}$, bijectively maps $\xxx_N$ to the subset $\tilde{\xxx}_N\eqdef \{u\in \rr^n:\, (\exists x_i\in \xxx_N)\, u=x_i-\bar{x}\}$ of $\overline{B_2(0,r)}$.  In particular, since $W$ is an isometry then 
	\[
	   \operatorname{aspect}(\xxx_N,\|\cdot\|_2)
	        =
	   \operatorname{aspect}(\tilde{\xxx}_N,\|\cdot\|_2) 
	  .
	\]
	Since $W$ is an affine map then, the pre-composition of any deep feedforward network by $W$ yields a deep feedforward network of the same depth and width.  
	
	\textbf{Step 2: Independently Memorizing Classes}\hfill\\
	By \citep[Theorem 3.1]{vardi2022on}, for every $x\in \xxx_N$ there exists a feedforward network $\tilde{\mathcal{NN}}_{x}:\rr^n\rightarrow \rr$ with ReLU activation function satisfying the ``class memorization property''
	\begin{equation}
			\label{PROOF_6_individual_memorizers}
			\tilde{\mathcal{NN}}_{x}(u) 
			=     
			\begin{cases}
				1 & : \, \mbox{if } u =x-\bar{x}\\
				0 & : \, \mbox{if } u \neq x-\bar{x}
			\end{cases}
	\end{equation}
	for every $u\in \tilde{\xxx}_N$.  
	Furthermore, $\tilde{\mathcal{NN}}_{x}$ has width $12$ and depth
		\begin{equation}
			\label{PROOF_5_individual_memorizers_depth__directquote}
			\mathcal{O}\left(
			    \Big(
			        N\log(N)
			    \Big)^{1/2}
			        +
			    \Big(
			        \frac{N}{\log(N)}
			    \Big)^{1/2}
			    \max\big\{
			        \log(R)
			            ,
			        \log(C)
			    \big\}
			\right)
			,
		\end{equation}
		where $C=2$ and $R=10 \pi^{1/2}n^{1/2}\, N^2 \, r \delta^{-1}$ and where $\delta
		    \eqdef 
		\min_{1\leq i,j\leq N;\,i\neq j}\, \|(x_i-\bar{x})-(x_j-\bar{x})\|_2
		    =
		\min_{1\leq i,j\leq N;\,i\neq j}\,
		\|x_i-x_j\|_2
		$.
		We may therefore rewrite~\eqref{PROOF_5_individual_memorizers_depth__directquote} as
		\begin{equation}
		\label{PROOF_5_individual_memorizers_depth}
			\mathcal{O}\left(
			    \sqrt{N\log(N)}
			        \,
			     \Big(
			        1
			            +
			       \frac{\log(2)}{\log(N)}\,
			       \left[
			            C_n
			                +
			         \frac{
    			         \log\big(
    			            N^2\,
    			            \operatorname{aspect}(\xxx_N,\|\cdot\|_2)
    			          \big)
			         }{
			            \log(2)
			         }
			       \right]_+
			     \Big)
			\right)
		,
		\end{equation}
		where the ``dimensional'' constant $C_n>0$ is defined by 
		$
		    C_n
		        \eqdef 
		    \frac{
		            2\log(5 \sqrt{2\pi})
		        + 
		            \frac{3}{2}
		            \log(n)
		        -
		            \frac1{2}\log(n+1)
		    }{
		        2\log(2)
		    }
		$ and where, for any $u\in \rr$ we define $[u]_+\eqdef \max\{u,1\}$.
		
		\textbf{Step 3: Parallel Interpolating ReLU Network for each $x$ in $\xxx_N$}\hfill\\
		For every $x\in \xxx_N$, we view the vector $f(x)\in \rr^d$ as a $d\times 1$ matrix in $\rr^{d\times 1}$.  Now, since the composition of affine maps is again affine and since the first and last layers of each $\tilde{\mathcal{NN}}_x$ are affine maps then, for every $x\in \xxx_N$, the map $\mathcal{NN}_x:\rr^n\rightarrow\rr^d$ defined by
		\begin{equation}
			\label{PROOF_6a_individual_memorizers_modified_memorization_vectorialoutput}
			\mathcal{NN}_{x} (u)\eqdef  f(x) \, [\tilde{\mathcal{NN}}_{x}\circ W^{-1}(u)],
		\end{equation}
		is a ReLU feedforward network satisfying
		\begin{enumerate}
			\item $\mathrm{depth}(\mathcal{NN}_{x}) = \mathrm{depth}(\tilde{\mathcal{NN}}_{x})$ where each $\mathrm{depth}(\tilde{\mathcal{NN}}_{x})$ is as in~\eqref{PROOF_5_individual_memorizers_depth},
			\item $\mathrm{width}(\mathcal{NN}_{x}) = \max\{d,\mathrm{width}(\tilde{\mathcal{NN}}_{x})\} = \max\{d,12\}$,
			\item $\mathrm{par}(\mathcal{NN}_{x}) \leq d + \mathrm{par}(\tilde{\mathcal{NN}_{x}})
			.
			$
		\end{enumerate}
	Moreover, by construction, for each $x\in \xxx_N$ it holds that
	\begin{equation}
			\label{PROOF_6_individual_memorizers__memorizer_no_tilde}
			\mathcal{NN}_x(u)
			=     
			\begin{cases}
				f(x) & : \, \mbox{if } u = x\\
				0 & : \, \mbox{if } u \neq x
				,
			\end{cases}
	\end{equation}
	for every $u\in \xxx_N$.
	Furthermore, we can bound the number of parameters $\mathrm{par}(\mathcal{NN}_{x})$ defining each of the ReLU networks $\mathcal{NN}_x$ above by
		\begin{equation}
			\label{lem_memory_capacity_deep_ReLU_regressors_eq1}
			\begin{aligned}
				\mathrm{par}(\mathcal{NN}_{x}) 
				& \leq 
				d+ 
				\mathrm{par}(\tilde{\mathcal{NN}_{x}}) \\
				& 
				\leq 
				\mathrm{depth}(\tilde{\mathcal{NN}}_{x_1}) 
				\mathrm{width}(\tilde{\mathcal{NN}}_{x_1})(\mathrm{width}(\mathcal{NN}_{x_1})+1) + d
				\\
				& 
				\leq
				\mathcal{O}\Biggl(
				d+
			    \sqrt{N\log(N)}
			        \,
			     \Big(
			        1
			            +
			       \frac{\log(2)}{\log(N)}\,
			       \left[
			            C_n
			                +
			         \frac{
    			         \log\big(
    			            N^2\,
    			            \operatorname{aspect}(\xxx_N,\|\cdot\|_2)
    			          \big)
			         }{
			            \log(2)
			         }
			       \right]_+
			 \Big)
			 \times 
		\\
		& 
			\Big(
			    \max\{d,12\}
			    (\max\{d,12\}+1)
			\Big)
			\Biggr)
				.
			\end{aligned}
		\end{equation}
        
        \textbf{Step 4: Assembling the Parallel Interpolating ReLU Networks}\hfill\\
        Next, we assemble each of the ReLU networks $\{\mathcal{NN}_x\}_{x\in \xxx_N}$ into a single ReLU network interpolating $f$ on $\xxx_N$.   
		Since $\operatorname{ReLU}(x)\eqdef \max\{0,x\}$ has the $2$-identity property \textit{(as defined in \citep[Definition 4]{Florian2_2021} and discussed on page 3 of that manuscript)} then, \citep[Proposition 5]{Florian2_2021} implies that there exists a ``deep diagonalized parallelization''; that is, a map $\|\mathcal{NN}:\mathbb{R}^n\rightarrow \rr^d$ satisfying the following for every $u\in \rr^n$
        \begin{equation}
        \label{PROOF_definition_parallelization}
    		\|\mathcal{NN}(u) 
		= 
		\begin{pmatrix}
			\mathcal{NN}_{x_1} (u)\\
			\vdots\\
			\mathcal{NN}_{x_N} (u)
		\end{pmatrix}
		.
        \end{equation}
		Furthermore, \citep[Proposition 5]{Florian2_2021} shows that $\mathcal{NN}$ has width, depth, and number of non-zero parameters are estimated by
		\begin{enumerate}
			\item \textbf{Width:} $\|\mathcal{NN}(u)$ has width  
			$
			n(N-1) + \max\{d,12\}
			$
			\item \textbf{Depth:} $\|\mathcal{NN}(u) $ has depth
			\small
			\begin{equation}
			\label{PROOF_5_individual_memorizers_depth_1}
			\mathcal{O}\left(
			N\left(
    			1+
    			    \sqrt{N\log(N)}
    			        \,
    			     \Big(
    			        1
    			            +
    			       \frac{\log(2)}{\log(N)}\,
    			       \left[
    			            C_n
    			                +
    			         \frac{
        			         \log\big(
        			            N^2\,
        			            \operatorname{aspect}(\xxx_N,\|\cdot\|_2)
        			          \big)
    			         }{
    			            \log(2)
    			         }
    			       \right]_+
    			     \Big)
    			\right)
			\right)
				,
			\end{equation}
			\normalsize
			\item \textbf{Number of non-zero parameters:}  The number of non-zero parameters determining $\|\mathcal{NN}$ is $
		        \Big(
		        \frac{11}{4} 
		        \max\{
		            n
		        ,
		            d
		        \}	 
		        \,
		        N^2
		        -
		        1
		        \Big)
		        \,
		        \sum_{i=1}^N\,
		            \mathrm{par}(\mathcal{NN}_x)
			.
			$
			When expanded, this can be rewritten as
			\[\resizebox{1\textwidth}{!}{$
			\begin{aligned}
            \mathcal{O}\left(
                N
		        \Big(
		        \frac{11}{4} 
		        \max\{
		            n
		        ,
		            d
		        \}	 
		        \,
		        N^2
		        -
		        1
		        \Big)
		        \,
		        \left(
    				d+
    			    \sqrt{N\log(N)}
    			        \,
    			     \left(
    			        1
    			            +
    			       \frac{\log(2)}{\log(N)}\,
    			       \left[
    			            C_n
    			                +
    			         \frac{
        			         \log\big(
        			            N^2\,
        			            \operatorname{aspect}(\xxx_N,\|\cdot\|_2)
        			          \big)
    			         }{
    			            \log(2)
    			         }
    			       \right]_+
    			 \right)
    			 \,
    			\Big(
    			    \max\{d,12\}
    			    (\max\{d,12\}+1)
    			\Big)
    		\right)
		\right)
			.
			\end{aligned}
			$}\]
		\end{enumerate}
		
		Consider the $d\times (N d)$ block-matrix $A\eqdef (I_d,\dots,I_d)$.  Multiplying the outputs of $\|\mathcal{NN}(u)$ on the left by the matrix $A$ defines a map $\mathcal{NN}^{\star}:\rr^n\rightarrow \rr^d$; i.e.\ for every $u\in \rr^n$ the map $\mathcal{NN}^{\star}$ is defined by
		\begin{equation}
		\label{PROOF_summingparallelizednetworkup}
    		\mathcal{NN}^{\star}(u)
    		\eqdef  
    		A
    		\|\mathcal{NN}(u)
    	.
		\end{equation}
		Furthermore, since the post-composition of ReLU feedforward networks is again a ReLU feedforward network then $\mathcal{NN}^{\star}$ is a ReLU feedforward network.  Moreover, by construction, the width, depth, and number of parameters defining $\mathcal{NN}^{\star}$ satisfy the same estimates as $\|\mathcal{NN}$, respectively.  
		
		\textbf{Step 5: Verifying that $\mathcal{NN}^{\star}$ Implements $f$ on $\xxx_N$}\hfill\\
		It remains only to verify that $\mathcal{NN}^{\star}$ equals to $f$ on $\xxx_N$.  Upon combining the identities in~\eqref{PROOF_6_individual_memorizers__memorizer_no_tilde},~\eqref{PROOF_summingparallelizednetworkup}, and~\eqref{PROOF_definition_parallelization} we conclude that: for each $x \in \xxx$ the following holds
		\begin{equation}
			\label{PROOF_8_efficient_parallelization_memorization}
			\mathcal{NN}^{\star}(x)
			=
			\sum_{i=1}^N
			\mathcal{NN}_{x_i}
			= 
			\sum_{i=1}^N\,
			    f(x_i)I_{x_i=x}\,
			=
			f(x)
			.
		\end{equation}
		Relabeling ``$\mathcal{NN}\eqdef \mathcal{NN}^{\star}$'' yields the lemma's statement.  
	\end{proof}
	
	We now move to the proof of the main memorization lemma, namely Lemma~\ref{lemma_embedding_neuralification}, which guarantees that probabilistic transformer networks can memorize ``conditionally Gaussian'' Markov kernels for any finite number of inputs.  
	In the following, we make use of the finite-dimensional Fr\'{e}chet embedding
	\begin{equation}
	\label{eq_Frechet_embedding}
	    \tilde{\iota}: (\xxx_n,d_n)\ni x \mapsto (d(x,\tilde{x}))_{\tilde{x}\in \xxx_n}.
	\end{equation}

    \begin{proof}[{Proof of Lemma~\ref{lemma_embedding_neuralification}}]
    Set $L\eqdef n$ (where $L$ is as in~\eqref{eq:feature_map}) and identify the space of $d\times d$-matrices with Fr\"{o}benius norm $\rr^{d\times d}$ with $\rr^{d^2}$ with Euclidean norm \textit{(NB, these are isometric metric spaces)}.
    Then, $\tilde{\iota}= \Phi_{\{x_l\}_{l=1}^N}$, where $\tilde{\iota}$ is the Fr\'{e}chet isometric embedding defined in~\eqref{eq_Frechet_embedding}.  
    
    For each $k=1,\dots,K$ define the maps $\tilde{\mu}_k\eqdef \mu_k\circ \iota^{-1}:\iota(\xxx_n)\rightarrow \rr^d$.  By Lemma~\ref{lem_memory_capacity_deep_ReLU_regressors}, for every $k=1,\dots,K$  there exists a ReLU network $\hat{\mu}_k: \rr^n \rightarrow \rr^d$ satisfying the following
    \begin{equation}
    \label{PROOF_lemma_embedding_neuralification__eq_mu_memorization}
        \hat{\mu}_k (x) = \tilde{\mu}_k(x) = \mu_k\circ \iota^{-1}(x)
        ,
    \end{equation}
    for every $x\in \xxx_n$.  
    The following equivalence of the norms $\|\cdot\|_{\infty}\le \|\cdot\|_2 \le \sqrt{n} \|\cdot\|_{\infty}$ and the fact that $\iota$ is an isometric embedding of $(\xxx_n,d_n)$ into $(\mathbb{R}^n,\|\cdot\|_{\infty})$ implies
    \begin{equation}
    \label{PROOF_lemma_embedding_neuralification__complexity_estimate_muk_interpolator}
    \begin{aligned}
        \operatorname{aspect}(\xxx_n,d_n)
            \eqdef &
        \frac{
            \max_{1\leq i,j\leq n}\,
                d_n(x_i,x_j)
        }{
            \min_{1\leq k,s\leq n;\,k\neq s}\,
                d_n(x_k,x_s)
        }
            \\
            = &
        \frac{
            \max_{1\leq i,j\leq n}\,
                \|\iota(x_i) - \iota(x_j)\|_{\infty}
        }{
            \min_{1\leq k,s\leq n;\,k\neq s}\,
                \|\iota(x_k)-\iota(x_s)\|_{\infty}
        }
            \\
            \leq &
        \sqrt{n}\,
        \frac{
            \max_{1\leq i,j\leq n}\,
                \|\iota(x_i) - \iota(x_j)\|_{2}
        }{
            \min_{1\leq k,s\leq n;\,k\neq s}\,
                \|\iota(x_k)-\iota(x_s)\|_{2}
        }
            \\
            = &
        \sqrt{n}\,
        \frac{
            \max_{u,\tilde{u} \in \iota(\xxx_N)}\,
                \|u-\tilde{u}\|_{2}
        }{
            \min_{v,\tilde{v}\in \iota(\xxx_N);\, v\neq \tilde{v}}\,
                \|v-\tilde{v}\|_{2}
        }
            \\
            \eqdef &
        \sqrt{n}\,
        \operatorname{aspect}(\iota(\xxx_N),\|\cdot\|_2)
        .
    \end{aligned}
    \end{equation}
    Thus, Lemma~\ref{lem_memory_capacity_deep_ReLU_regressors} (i)-(iii) and~\eqref{PROOF_lemma_embedding_neuralification__complexity_estimate_muk_interpolator} imply that for each $k=1,\dots,K$ the ReLU network $\hat{\mu}_k$ satisfies
    \begin{enumerate}
	\item[(i)] \textbf{Width:} $\mathcal{NN}$ has width $
	    n(n-1) + \max\{d,12\}
	,
	$
	\item[(ii)] \textbf{Depth:} $\mathcal{NN}$ has depth of the order of
	\[
	\mathcal{O}\left(
	n\left(
		1+
		    \sqrt{n\log(n)}
		        \,
		     \Big(
		        1
		            +
		       \frac{\log(2)}{\log(n)}\,
		       \left[
		            C_n
		                +
		         \frac{
			         \log\big(
			            n^{5/2}\, \operatorname{aspect}(\xxx_n,d_n)
			          \big)
		         }{
		            \log(2)
		         }
		       \right]_+
		     \Big)
		\right)
	\right)
	,
	\]
	\item[(iii)] \textbf{Number of non-zero parameters:} The number of non-zero parameters in $\mathcal{NN}$ is at most
	\[\resizebox{0.99\linewidth}{!}{$
	\mathcal{O}\left(
        n
        \Big(
        \frac{11}{4} 
        \max\{
            n
        ,
            d
        \}	 
        \,
        n^2
        -
        1
        \Big)
        \,
        \left(
			d+
		    \sqrt{n\log(n)}
		        \,
		     \left(
		        1
		            +
		       \frac{\log(2)}{\log(n)}\,
		       \left[
		            C_n
		                +
		         \frac{
			         \log\big(
			            n^{5/2}
			            \,
			            \operatorname{aspect}(\xxx_n,d_n)
			          \big)
		         }{
		            \log(2)
		         }
		       \right]_+
		 \right)
		 \,
		\Big(
		    \max\{d,12\}
		    (\max\{d,12\}+1)
		\Big)
	\right)
\right)
		,
		$}\]
\end{enumerate}
and where the ``dimensional constant'' $C_d>0$ is defined by 
	$
	    C_d 
	        \eqdef 
	    \frac{
	            2\log(5 \sqrt{2\pi})
	        + 
	            3
	            \log(d)
	        -
	            \log(d+1)
	    }{
	        2\log(2)
	    }
	$.
    
    Next, for every $k=1,\dots,K$, define the ReLU network $\hat{\Sigma}_k:\rr^n\rightarrow \rr^{d^2}$ by
    \begin{equation}
    \label{PROOF_lemma_embedding_neuralification__eq_Sigma_memorization}
        \hat{\Sigma}_k(x) 
            \eqdef 
        \bar{0}_{d^2}
        \operatorname{ReLU}\bullet
        \Big(
            \bar{0}_{d^2 \times n}
                x
            +
            \bar{0}_{d^2}
        \Big)
            +
            \bar{0}_{d^2}
    ,
    \end{equation}
    where $\bar{0}_{d^2 \times n}$ is the $d^2\times n$-matrix with all entries $0$, $\bar{0}_{d^2}\in \rr^{d^2}$ is the vector with all entries $0$.  NB, by construction 
    \begin{equation}
        \label{eq_parameter_count_Sigma_nets}
        \max_{k=1,\dots,K}\,
        \mathrm{par}(\hat{\Sigma}_k) 
	    =
	    0
	 .
    \end{equation}
    Similarly, define the ReLU network $\hat{w}:\rr^n\rightarrow \rr^K$ by
    \begin{equation}
    \label{PROOF_lemma_embedding_neuralification__eq_w_memorization}
        \hat{w}_k(x) 
            \eqdef 
        \bar{0}_{K}
        \operatorname{ReLU}\bullet
        \Big(
            \bar{0}_{K \times n}
                x
            +
            \bar{0}_{K}
        \Big)
            +
            \bar{0}_{K}
    .
    \end{equation}
    Combining~\eqref{PROOF_lemma_embedding_neuralification__eq_mu_memorization},~\eqref{PROOF_lemma_embedding_neuralification__eq_Sigma_memorization}, and~\eqref{PROOF_lemma_embedding_neuralification__eq_w_memorization} with the fact that $\Phi_{\{x_l\}_{l=1}^N}=\tilde{\iota}=\tilde{\iota}\Big \vert _{\tilde{\iota}(\xxx_n)}^{-1}$ we have that: for every $x\in \xxx_n$ the following holds
    \allowdisplaybreaks
    \[
    \begin{aligned}
        T(x) 
    \eqdef &
        \sum_{k=1}^K
        [\softmax_K\circ \hat{w}\circ \Phi_{\{x_l\}_{l=1}^N}(x)]_k \times \\
        \qquad &N_d\left(
                \hat{\mu}_k\circ \Phi_{\{x_l\}_{l=1}^N}(x)
            ,
                \hat{\Sigma}_k\circ \Phi_{\{x_l\}_{l=1}^N}(x)
        \right)
    \\
        = &
        \sum_{k=1}^K
        \softmax_K(0)_k\,
        N_d\left(
                \mu_k\circ \tilde{\iota}^{-1}\circ \tilde{\iota}(x)
            ,
                0
        \right)
    \\
        = &
        \sum_{k=1}^K
        \frac{
        e^{0}
        }{
        \sum_{j=1}^K \, e^{0}
        }
        \,
        N_d\left(
                \mu_k\circ \tilde{\iota}^{-1}\circ \tilde{\iota}(x)
            ,
                0
        \right)
    \\
        = &
        \sum_{k=1}^K
        \frac1{K}
        \,
        N_d\left(
                \mu_k\circ \tilde{\iota}^{-1}\circ \tilde{\iota}(x)
            ,
                0
        \right)
    \\  
        = & 
        \phi(x)
    .
    \end{aligned}
    \]
    
    It remains only to compute the width, depth, and number of non-zero parameters determining $T$.  Since $T$ has width 
    \[\max\left\{
        \mathrm{width}(\hat{w}), 
        \max_{k=1,\dots,K}\, \mathrm{width}(\Sigma_k),
        \max_{k=1,\dots,K}\,
        \mathrm{width}(\mu_k)
        \}
    \right\}.
    \]
    Therefore, we deduce that
    \[%
    \begin{aligned}
            \mathrm{width}(T)
    =  &
        \max\{
                \max\{n,K\}
            , 
                \max\{n,d^2\}
            ,
                n(n-1) + \max\{d,12\}
        \}
    \\ 
    = &
        \max\{
                n, K
            , 
                d^2
            ,
                n(n-1) + \max\{d,12\}
        \}
    ,
    \end{aligned}
    \]
    and $T$ has depth $
            \max\left\{
                \mathrm{depth}(\hat{w})
            , 
                \max_{k=1,\dots,K}\,
                \mathrm{depth}(\Sigma_k)
            , 
                \max_{k=1,\dots,K}\,
                \mathrm{depth}(\mu_k)
        \right\}
    $.  Since $\mathrm{depth}(\hat{w})=\max_{k=1,\dots,K}\,\mathrm{depth}(\Sigma_k)=1$ then, $T$ has depth of the following order
    \[
    	\mathcal{O}\left(
	n\left(
		1+
		    \sqrt{n\log(n)}
		        \,
		     \Big(
		        1
		            +
		       \frac{\log(2)}{\log(n)}\,
		       \left[
		            C_n
		                +
		         \frac{
			         \log\big(
			            n^{5/2}\, \operatorname{aspect}(\xxx_n,d_n)
			          \big)
		         }{
		            \log(2)
		         }
		       \right]_+
		     \Big)
		\right)
	\right)
	.
	\]
    Furthermore, the number of parameters defining $T$ equals
    \[
    \begin{aligned}
        \mathrm{par}\Big(
                        T
                    \Big)
                \eqdef &
            \mathrm{par}\Big(
                                \hat{w}
                                \Big)
            + 
            \sum_{k=1}^K
            \left[
                \mathrm{par}\Big(
                        \hat{\mu}_k
                    \Big)
                +
                \mathrm{par}\Big(
                        \hat{\Sigma}_k
                    \Big)
            \right]
        \\
        = &
            0
                + 
            K
            \left[
                0
                +
                \mathrm{par}\Big(
                        \hat{\Sigma}_1
                    \Big)
            \right]
        .
        \end{aligned}
        \]
        Thus, the number of non-zero parameters defining $T$ is 
        \[
        \resizebox{0.99\linewidth}{!}{$
        \begin{aligned}
		\mathcal{O}\left(
                Kn
		        \Big(
		        \frac{11}{4} 
		        \max\{
		            n
		        ,
		            d
		        \}	 
		        \,
		        n^2
		        -
		        1
		        \Big)
		        \,
		        \left(
    				d+
    			    \sqrt{n\log(n)}
    			        \,
    			     \left(
    			        1
    			            +
    			       \frac{\log(2)}{\log(n)}\,
    			       \left[
    			            C_n
    			                +
    			         \frac{
        			         \log\big(
        			            n^{5/2}\,
        			            \operatorname{aspect}(\xxx_n,d_n)
        			          \big)
    			         }{
    			            \log(2)
    			         }
    			       \right]_+
    			 \right)
    			 \,
    			    \max\{d,12\}^2
    		\right)
		\right)
		.
		\end{aligned}
	    $}\]
    \end{proof}	
	We now turn our attention to the proof of Theorem~\ref{MAINTHEOREM_PROBABILISTIC}. 
	\subsection{Proof of PAC-Type Representation Theorem for ``Unstructured Case''}
	\label{s_Proofs__ss___UnstructuredEmbeddingTheorem}
	The following argument uses the notation of an \textit{ultrametric space}; briefly, an ultrametric space $(\mathcal{Z},d_{\mathcal{Z}})$ is a metric space which satisfies the \textit{strong triangle inequality}: for all $z_1,z_2,z_3\in \mathcal{Z}$ one has $d_{\zzz}(z_1,z_2)\leq \max\{d_{\zzz}(z_1,z_2),d_{\zzz}(z_2,z_3)\}$.  Note that, $\max\{d_{\zzz}(z_1,z_2),d_{\zzz}(z_2,z_3)\} \leq d_{\zzz}(z_1,z_2)+d_{\zzz}(z_2,z_3)$ so the strong triangle inequality implies the familiar triangle inequality.  
	\begin{proof}[{Proof of Theorem~\ref{MAINTHEOREM_PROBABILISTIC}}]
	Let $n\ge 2$ and fix an $n$-point metric subset $\xxx_n$ of $\xxx$.  Denote $(\xxx_n,d_n)\eqdef (\xxx_n,d_{\xxx})$.  
		\hfill\\
		\textbf{Step 1 - Embedding of a Dvoretzky-Type Subspace into Low-Dimensional Euclidean Space}
		\hfill\\
		Fix $\epsilon\in (0,\infty]$ and let $(\xxx_n,d_n)$ be an $n$-point metric space.  By \citep[Theorem 1.2]{NaorTao_2012_Dvoretzky_ISJM} there exists a $\theta_{\epsilon} \in (0,1)$ (depending only on $\epsilon$) and subset $\tilde{\xxx}_n\subseteq \xxx_n$ of cardinality
		\begin{equation}
			\label{PROOF_eq__lemma_Dvoretzy_Type___cardinality_bound_DvoretzkySubset}
			    \#\tilde{\xxx}_n 
			\ge
			    n^{\theta_{\epsilon}} 
			\ge 
			    n^{1-\frac{2e}{2+\epsilon}}
			,
		\end{equation}
		with the following property: there exists a separable ultrametric space $(\mathcal{Z},d_{\mathcal{Z}})$ and a bi-Lipschitz map $\phi_1:(\tilde{\xxx}_n,d_n) \hookrightarrow (\mathcal{Z},d_{\mathcal{Z}})$ such that for every $x,\tilde{x}\in \tilde{\xxx}_n$ it holds that
		\begin{equation}
			\label{PROOF_eq__lemma_Dvoretzy_Type___TaoNaorDvoretzky}
			d_n(x,\tilde{x})
			\le\,
			d_{\mathcal{Z}}(\phi_1(x),\phi_1(\tilde{x}))
			\le\,
			(2+\epsilon)\,
			d_n(x,\tilde{x})
			.
		\end{equation}
		Since $(\mathcal{Z},d_{\mathcal{Z}})$ is a separable ultrametric space then \citep{Vestfrid_SeparableUltrametricSpacesEmbedinHilbertSpace_1979} (or \cite{shkarin2004isometric}) implies that there exists an isometric embedding $\varphi_2:(\mathcal{Z},d_{\mathcal{Z}})\hookrightarrow (\ell_2^{\infty},\ell_2)$, where the norm $\|\cdot \|_{\ell^2}$ is defined for any $x\in \rr^{\nn}$ by $\|x\|_{\ell^2}^2\eqdef \sum_{k=1}^{\infty}\, x_k^2$, $\ell_2^{\infty}\eqdef \{x\in \rr^{\nn}:\, \|x\|_{\ell^2}<\infty\}$, and thus $\big(\ell_2^{\infty},\ell_2\big)$ the infinite dimensional separable Hilbert space.  Define $\varphi_3\eqdef\varphi_2\circ \varphi_1$.  It follows from~\eqref{PROOF_eq__lemma_Dvoretzy_Type___TaoNaorDvoretzky} that: for all $x,\tilde{x}\in \tilde{\xxx}_n$
		\begin{equation}
			\label{PROOF_eq__lemma_Dvoretzy_Type___TaoNaorDvoretzky_1}
			d_n(x,\tilde{x})
			\le\,
			\left\|
			\phi_3(x)-\phi_3(\tilde{x})
			\right\|_{\ell_2}
			\le\,
			(2+\epsilon)
			d_n(x,\tilde{x})
			.
		\end{equation}
		By the Johnson-Lindenstrauss Flattening Theorem \citep[Theorem 15.2.1]{Matouvsek_2002_LecturesDiscreteGeo} there exist an $s>0$ and a map $\phi_4:(\ell_2^{\infty},\ell_2)\rightarrow (\ell_2^{n_{\epsilon}},\ell_2)$, where $n_{\epsilon}\in \mathcal{O}(\epsilon^{-2}\log_2(\#\tilde{\xxx}_n))
		= 
		\mathcal{O}(
		\epsilon^{-2}\theta_{\epsilon}\log_2(n)))
		$ (thus $n_{\epsilon}$ is at-most $\mathcal{O}(\epsilon^{-2}\log_2(n) $ since $\theta_{\epsilon}<1$) satisfying
		\begin{equation}
			\label{PROOF_eq__lemma_Dvoretzy_Type___JL_Lemma}
			s
			\left\|
			z-\tilde{z}
			\right\|_{\ell_2}
			\leq 
			\left\|
			\phi_4(z)-\phi_4(\tilde{z})
			\right\|_{\ell_2}
			\leq 
			s
			(1+\epsilon)
			\left\|
			z-\tilde{z}
			\right\|_{\ell_2}
			,
		\end{equation}
		for all $z,\tilde{z}\in \phi_3(\tilde{\xxx}_n)$.  Set $\phi\eqdef\phi_4\circ \phi_3:(\tilde{\xxx}_n,d_n) \hookrightarrow (\rr^{n_{\epsilon}},\ell_2)$.  Together~\eqref{PROOF_eq__lemma_Dvoretzy_Type___TaoNaorDvoretzky_1} and~\eqref{PROOF_eq__lemma_Dvoretzy_Type___JL_Lemma} imply that: for every $x,\tilde{x}\in \tilde{\xxx}_n$ it holds that
		\begin{equation}
			\label{PROOF_eq__lemma_Dvoretzy_Type___JL_Lemma_1}
			s
			d_n(x,\tilde{x})
			\leq 
			\left\|
			\phi_5(x)-\phi_5(\tilde{x})
			\right\|_{\ell_2}
			\leq 
			s
			(1+\epsilon)(2+\epsilon)
			d_n(x,\tilde{x})
			.
		\end{equation}
		
		\hfill\\
		\textbf{Step 2 - Constructing The Embedding into $(\mathcal{MG}(\rr)_2,\mw_2)$}
		\hfill\\
		Since $\tilde{\xxx}_n$ is finite, then $(\tilde{\xxx}_n,d_n)$ is compact.  Since $\phi$ is an isometry then it is continuous and therefore by \citep[Theorem 26.5]{munkres2014topology} $\phi(\tilde{\xxx}_n)$ is a compact subset of $(\rr^{n_{\epsilon}},\ell_2)$.  By the Heine-Borel theorem \citep[Theorem 27.3]{munkres2014topology}, there exists an $r>0$ such that $\phi(\tilde{\xxx}_n)\subseteq \overline{\operatorname{Ball}_{\rr^{n_{\epsilon}},\ell_2}(0,r)}$; where, $\overline{\operatorname{Ball}_{\rr^{n_{\epsilon}},\ell_2}(0,r)}\eqdef \{u\in \rr^{n_{\epsilon}}:\,\|x\|_{\ell_2}\leq r\}$.  
		Since $\phi(\tilde{\xxx}_n) \subseteq \overline{\operatorname{Ball}_{(\rr^{n_{\epsilon}},\ell_2)}(0,r)}$ then \citep[Proposition 7.4]{BenoitBertrand_2010_AnnNormSupPisa} applies.  More specifically, as formulated in \citep[Example 5.5]{BenoitBertrand_2012_JTopAnal})
		there exists some $b\in \rr^{n_{\epsilon}}$ (which can be computed explicitly using Algorithm~\ref{Algorithm_RandomPartition}) such that the map
		\[
		\iota: 
		(\overline{\operatorname{Ball}_{(\rr^{n_{\epsilon}},\ell_2)}(0,r)},\ell_2)
		\ni 
		x
		\mapsto 
		\frac1{n_{\epsilon}} \sum_{k=1}^{n_{\epsilon}}
		\,
		\delta_{
		    \sqrt{n_{\epsilon}}
		(x +b)_k} 
		\in (\mathcal{P}_2(\mathbb{R}),\mathcal{W}_2)
		,
		\]
		is an isometry. Since the composition of isometries is itself an isometry then, the map $\varphi\eqdef \iota\circ \phi:(\tilde{\xxx}_n,d_n)\hookrightarrow (\mathcal{P}_2(\mathbb{R}),\mathcal{W}_2)$ is an isometry and is of the required form.  
		Lastly, since $\iota\circ \phi$ is has range in the set of ``degenerate Gaussian mixtures'' $\mathcal{Z}\eqdef \{\pp\in \mathcal{GM}_2(\rr):\,\exists N\in \nn_+\,\exists w\in \Delta_N\,\exists \mu_1,\dots,\mu_N\in \rr\,s.t.\, \pp=\sum_{n=1}^N\,w_n \delta_{\mu_n}\}$ then, we obtain the conclusion by applying the comparison result in \citep[Proposition 6]{WassersteinGaussianMixtures} which states that the ``identity map'' on $(\mathcal{W}_2(\rr),\mathcal{W}_2)\ni \pp\mapsto \pp \in (\mathcal{GM}_2(\rr),\mw_2)$ is an isometry between on the collection of finitely supported probability measures (i.e.\ ``degenerate Gaussian mixtures'').  
		Applying Lemma~\ref{lemma_embedding_neuralification} to the map $\phi\circ [x\mapsto (d(x,\tilde{x})_{\tilde{x}\in \xxx_n})]$ yields the desired tranformer $T$ satisfying
		\[
		s
		d_{n}(x,\tilde{x})
		\leq
		\mw_2(T(x),T(\tilde{x}))
		\leq 
		s
		(2+\epsilon)^2
		d_n(x,\tilde{x})
		,
		\]
		for all $x,\tilde{x}\in \tilde{\xxx}_n$.
		
		\textbf{Comment:} \textit{From here, it now only remains to estimate the probability that a uniformly and independently chosen pair of random points in $\xxx_n$ lie in $\tilde{\xxx}_n$.}
		
		\hfill\\
		\textbf{Step 3 - Estimating Probability of Independently and Uniformly Sampling two Points in the Dvoretzky-type Subspace}
		\hfill\\
		Let $\pp$ be the uniform probability distribution on $\xxx_n$ and let $X,\tilde{X}\sim \pp$ be independent $\xxx_n$-valued random elements.  By independence and the fact that $X$ and $\tilde{X}$ have the same law we can compute the following 
		\begin{equation}
			\label{PROOF_eq__lemma_Dvoretzy_Type___P_inPACestimate}
			\pp\left(
			X \in \tilde{\xxx}_n
			\mbox{ and } 
			\tilde{X} \in \tilde{\xxx}_n
			\right)
			= 
			\pp\left(
			X \in \tilde{\xxx}_n
			\right)
			\pp\left(
			\tilde{X} \in \tilde{\xxx}_n
			\right)
			=
			\pp\left(
			X \in \tilde{\xxx}_n
			\right)^2
			.
		\end{equation}
		Together, the lower-bound in~\eqref{PROOF_eq__lemma_Dvoretzy_Type___cardinality_bound_DvoretzkySubset}, the definition of $\pp$, and~\eqref{PROOF_eq__lemma_Dvoretzy_Type___P_inPACestimate} imply that
		\begin{equation}
			\label{PROOF_eq__lemma_Dvoretzy_Type___P_inPACestimate____completed}
			\begin{aligned}
				\pp\left(
				X \in \tilde{\xxx}_n
				\mbox{ and } 
				\tilde{X} \in \tilde{\xxx}_n
				\right)
				&=
				\pp\left(
				X \in \tilde{\xxx}_n
				\right)^2
				\\
				&=   \left(\frac{\#\tilde{\xxx}_n}{\#\xxx_n}\right)^2
				\\
				&\ge  \left(\frac{n^{\theta_{\epsilon}}}{n}\right)^2
				\\
				&\ge 
				\left(\frac{n^{1-(2 e/2+\epsilon)}}{n}\right)^2
				\\
				&=  n^{-4 e/(2+\epsilon)}.
			\end{aligned}
		\end{equation}
		Moreover, by \citep[Theorem 1.2]{NaorTao_2012_Dvoretzky_ISJM} the precise value of $\theta_{\epsilon}$ is given by the unique solution $\theta\in (0,1)$ to $\frac{2}{2+\epsilon} = (1-\theta)\theta^{\theta/(1-\theta)}$; nb,
		$\theta_{\epsilon} \in [1-2 e/(2+\epsilon),1)
		.
		$
		Labeling $D\eqdef 2+\epsilon$, $\theta_D\eqdef \theta_{\epsilon}$, observing that $T$ is defined on all of $\xxx$, and noting that by construction $d_T=k$ yields the theorem's statement and concludes our proof.
		
	\textbf{Step 4 - Inverting the relation between $D$ and $\theta_D$:} \hfill\\
	Fix $\delta\in (0,1)$ such that $n^{-2+2\theta_D} = \delta$.  Solving for $\theta_D$ yields
	\begin{equation}
	\label{eq_PROOOF___MAINTHEOREM_PROBABILISTIC___Sovling_forThetaD}
	    \theta_D 
	        = 
	    1 + \log_n(\sqrt{\delta})
	    .
	\end{equation}
	Since $\theta_D$ must belong to $(0,1)$ then~\eqref{eq_PROOOF___MAINTHEOREM_PROBABILISTIC___Sovling_forThetaD} only gives a valid $\theta_D$ if $\delta \in \big(1/e^2,1\big)$; thus, we now impose that constraint.  
	Since $\theta_D$ solves $\frac{2}{D} = (1-\theta)\, \theta^{\theta/(1-\theta)}$ where $\theta \in [1-\frac{2e}{D},1)$ then~\eqref{eq_PROOOF___MAINTHEOREM_PROBABILISTIC___Sovling_forThetaD} implies that
	\begin{equation}
	\label{eq_PROOOF___MAINTHEOREM_PROBABILISTIC___Sovling_forThetaD__end_thresholdingwithmax}
	    D 
	        =
	    - 2\frac{
    	    \big(
    	        1 + \log_n(\sqrt{\delta})
    	    \big)^{
    	        \frac{
    	            1 + \log_n(\sqrt{\delta})
    	        }{
    	            \log_n(\sqrt{\delta})
    	        }
    	    }
    	 }{ 
    	    \log_n(\sqrt{\delta})
    	}
	.
	\end{equation}
	In order for $D$ in~\eqref{eq_PROOOF___MAINTHEOREM_PROBABILISTIC___Sovling_forThetaD__end_thresholdingwithmax} to be a valid distortion (compatible with our argument) we need that $D>2$.  Therefore, we require that $\delta>\delta_n$; where $\tilde{\delta}_n\in [\frac{1}{e^2},1)$ is the unique solution to 
	\[
	        \log_n(\sqrt{\delta}) 
	    = 
    	    \big(
    	        1 + \log_n(\sqrt{\delta})
    	    \big)^{
    	        \frac{
    	            1 + \log_n(\sqrt{\delta})
    	        }{
    	            \log_n(\sqrt{\delta})
    	        }
    	    }
    .
	\]
	Set $\delta_n \eqdef \max\{\frac1{e^{2}},\tilde{\delta}_n\}$.  
	Restricting $\delta \in (\delta_n,1)$ yields a $\delta$ for which $\theta_D$ and $D$ both satisfy the required assumptions of our argument.  This complete the proof.
	\end{proof}
	
	\subsection{Proof of Distortionless Multivariate Embeddings}
	\label{a_Proofs__ss_Embeddings_Distortionless}
	\begin{proof}[{Proof of Theorem~\ref{theorem:high_dimension_asymptotic_no_distortion}}]
	    Fix a positive integer $n$ and let $\xxx_n$ be a $n$-point subset of a metric space $(\xxx,d)$ which we view as an $n$-point metric space $(\xxx_n,d)$. 
	    Let $\iota:(\xxx_n,d)\rightarrow (\mathbb{R}^n,\|\cdot\|_{\infty})$ be the Fr\'{e}chet embedding defined by $\iota(x)\eqdef  (d(x,x_i))_{i=1}^n$ and denote $\tilde{\xxx}_n\eqdef \iota(\xxx_n)$.  Clearly, $(\xxx_n,d)$ and $(\tilde{\xxx}_n,\|\cdot\|_{\infty})$ are isometric with isometry given by $\iota$.  It is therefore sufficient to bi-H\"{o}lder embed $(\title{\xxx}_n,\|\cdot\|_{\infty})$ into $(\mathcal{P}_2(\mathbb{R}^3),\mathcal{W}_2)$; we do this now.  
	    In the proof of \citep[Theorem 1]{NaorNeimanAndoniSnowflake2018AnnalsSENSQ} from \citep[Equation (12) to Equation(22)]{NaorNeimanAndoniSnowflake2018AnnalsSENSQ}, the authors define a map $\tilde{\phi}:\tilde{\xxx}_n \rightarrow (\mathcal{P}_2(\mathbb{R}^3),\mathcal{W}_2)$ with the property that: there is some positive integer $N$ such that for every $x\in \tilde{\xxx}_n$
	    \begin{equation}
	    \label{eq:Naor_representation}
	        \tilde{\phi}(x) 
	       = 
	        \frac1{N}\, \sum_{n=1}^N\, \delta_{u_n(x)}
	    \end{equation}
	    for some set of points $\{u_n(x):\,n=1,\dots,N,\,x\in \tilde{\xxx}_n\}$ in $\mathbb{R}^3$.  Moreover, a careful reading of the construction show that the positive integer $N$ is such bounded above by\footnote{As a brief reference, we note that, in the notation of \cite{NaorNeimanAndoniSnowflake2018AnnalsSENSQ}, $N\le \#\mathcal{C}$, $\mathcal{C}\subseteq \cup_{i,j=1,\dots,n;\,i<j}\, \mathcal{B}_{i,j}$ and $\#\mathcal{B}_{i,j} = 5K +2$; for some hyperparameter $K\in \mathbb{N}_+$ independent of the construction thus far.  }
	    \begin{equation}
	    \label{eq:upper_bound_N__zerodistortionversion}
	        N \le n(n-1)/2\, (5K +2)
	    \end{equation}
	    for some hyperparameter $K\in \mathbb{N}_+$ to be decided upon shortly.  The quantitative version of the result in question, namely \citep[Lemma 15]{NaorNeimanAndoniSnowflake2018AnnalsSENSQ}, shows that for every distortion $D>1$, taking $\frac{5n^4}{2(D-1)} \, \operatorname{aspect}(\tilde{\xxx}_n,d)^2 \le K$ implies that: there is some scale $s>0$ such that for every $x,\tilde{x}\in \tilde{\xxx}_n$ it holds that
	    \begin{equation}
	        \label{eq:distortion_control}
    	        s\,\|x-\tilde{x}\|_{\infty}^{1/2}
	        \le 
    	        \mathcal{W}_2\big(
    	            \tilde{\phi}(x)
    	        ,
    	           \tilde{\phi}(\tilde{x})
    	        \big)
    	    \le 
    	        D\,s\,\|x-\tilde{x}\|_{\infty}^{1/2}
    	   .
	    \end{equation}
	Since $\iota$ is an isometry and every pointmass on $\mathbb{R}^3$ is a Gaussian measure with mean at that point and zero variance then the map $\phi\eqdef \tilde{\phi}\circ \iota^{-1}$ sends $(\xxx_n,d)$ to $(\mathcal{GM}_2(\mathbb{R}^3),\mathcal{MW}_2)$ and it has representation
	\[
	\phi(x) =  \sum_{n=1}^N\, \frac1{N}\, N_3(u_n(x),\boldsymbol{0}),
	\]
	where $\boldsymbol{0}$ is the $3\times 3$-dimensional zero matrix.  Moreover, since $\iota$ is an isometry and by \citep[Proposition 6]{WassersteinGaussianMixtures} the inclusion of the set of finitely supported probability measures in $(\mathcal{P}_2(\mathbb{R}^3),\mathcal{W}_2)$ into $(\mathcal{GM}_2(\mathbb{R}^3),\mathcal{MW}_2)$ is an isometry then~\eqref{eq:distortion_control} implies that: there is a scale $s>0$ such that for every $x,\tilde{x}\in \xxx_n$ it holds that
	\begin{equation}
	        \label{eq:distortion_control__translated}
    	        s\,d(x,\tilde{x})^{1/2}
	        \le 
    	        \mathcal{MW}_2\big(
    	            \phi(x)
    	        ,
    	           \phi(\tilde{x})
    	        \big)
    	    \le 
    	        D\,s\,d(x,\tilde{x})^{1/2}
    	   .
	    \end{equation}
	That is, $\phi$ is a $\frac1{2}$-bi-H\"{o}lder embedding of $(\xxx_n,d)$ into $(\mathcal{GM}_2(\mathbb{R}^3),\mathcal{MW}_2)$.  Applying the Key Lemma, Lemma~\ref{lemma_embedding_neuralification}, and to the map $\phi$ yields the conclusion; upon noting that the probabilistic transformer $T$ is defined on all of $\xxx$.
	\end{proof}
	
	\subsection{Proof of Embedding Lemmata}
	\label{a_Proofs__ss_Embedding_Lemmas}
	
	\begin{proof}[Proof of Lemma~\ref{lem_embedding_doubling_metric_spaces}]
		Since $(\xxx,d)$ is a doubling metric space then, by \citep[Theorem 1.2]{naor2012assouad} there exist constant $c,C>0$ such that for any $0<\epsilon<\frac{1}{2}$ there exists an injective function $\phi:(\xxx,d)\hookrightarrow (\rr^{D},\ell_2)$ satisfying
		\begin{equation}
			\label{PROOF_lem_embedding_doubling_metric_spaces_embedding__Quantitative_Assoud_quote}
			d_{\xxx}^{1-\epsilon}(x,\tilde{x})
			\leq
			\|
			\phi(x)
			-
			\phi(\tilde{x})
			\|_{\ell_2}
			\leq 
			\left\lceil
			\frac{C\log(C_d)^2}{(1-(1-\epsilon))^2}
			\right\rceil
			d_{\xxx}^{1-\epsilon}(x,\tilde{x})
			;
		\end{equation}
		where $D\leq c \log(C_d)$.  Since $\xxx_n\subseteq \xxx$ is finite, then $\xxx_n$ is compact.  By~\eqref{PROOF_lem_embedding_doubling_metric_spaces_embedding__Quantitative_Assoud_quote}, $\phi$ is $(\frac{1}{2},1)\ni (1-\epsilon)$-H\"{o}lder continuous and therefore, by \citep[Theorem 26.5]{munkres2014topology} $\phi(\xxx_n)$ must be a compact subset of $(\rr^D,\ell_2)$.  As before, by appealing to the Heine-Borel theorem (\citep[Theorem 27.3]{munkres2014topology}) we conclude that there must exist some $r>0$ such that $\phi(\xxx_n)\subseteq \overline{\operatorname{Ball}_{\rr^D,\ell_2}(0,r)}$; where $\overline{\operatorname{Ball}_{\rr^D,\ell_2}(0,r)}\eqdef \{u\in \rr^D:\,\|x\|_{\ell_2}\leq r\}$.  Thus, \citep[Proposition 7.4]{BenoitBertrand_2010_AnnNormSupPisa} implies that there exists some $b\in \rr^{D}$ \textit{(which can be constructed using Algorithm~\ref{Algorithm_RandomPartition})} such that the map
		\[
		\iota: 
		(\overline{\operatorname{Ball}_{(\rr^{D},\ell_2)}(0,r)},\ell_2)
		\ni 
		x
		\mapsto 
		\frac1{D} \sum_{k=1}^{D}
		\,
		\delta_{
		\sqrt{D}
		(x +b)_k
		} 
		\in (\mathcal{P}_2(\mathbb{R}),\mathcal{W}_2)
		,
		\]
		is an isometry.  
		As in the proof of Lemma~\ref{lem_embedding_manifold_sampled}, since $\iota\circ \phi$ is has range in the set of ``degenerate Gaussian mixtures''; i.e. belonging to $\mathcal{Z}\eqdef \{\pp\in \mathcal{GM}_2(\rr^d):\,\exists N\in \nn_+\,\exists w\in \Delta_N\,\exists \mu_1,\dots,\mu_N\in \rr^d\,s.t.\, \pp=\sum_{n=1}^N\,w_n \delta_{\mu_n}\}$ then, we obtain the conclusion by applying the comparison result in \citep[Proposition 6]{WassersteinGaussianMixtures} which states that the identity map on $(\mathcal{GM}_2(\rr^d),\mathcal{W}_2)\ni \pp\mapsto \pp \in (\mathcal{GM}_2(\rr^d),\mw_2)$ is an isometry between on the collection of degenerate mixtures of in $\mathcal{Z}$.
		Setting $\phi\eqdef \iota\circ \phi$ and $\alpha\eqdef 1-\epsilon$ concludes the proof.  
	\end{proof}
	
	\subsection{Proof of Main Results}
	\label{a_Proofs__ss_Main_Results}
	
	The proof of Theorem~\ref{MAINTHEOREM_DETERMINISTIC} relies on the following lemma relating the rate at which the \textit{volume of a ball} doubles as a function of its radius to $\xxx_n$'s metric capacity.  To formulate the required lemma, we must first revisit the notion of a \textit{doubling measure} \citep[Page 3 and Chapter 13]{heinonen2001lectures}.   
	
	Let us begin by recalling that, given any $x\in \xxx_n$ and any $r>0$, the metric ball about $x$ of radius $r$ is defined by $B(x,r)\eqdef \{u\in \xxx_n:\,d_{\xxx}(u,z)<r\}$.  
	Briefly, we say that a (finite Borel) measure $\mu$ on $(\xxx_n,d_n)$ is a \textit{doubling measure} with doubling constant $c_{\mu}$ if for every $x\in \xxx_n$ and every $r>0$ it holds that
	\begin{equation}
		\label{eq_ball__measure}
		\mu(B(x,2r)) \leq c_{\mu}\mu(B(x,r))
		.
	\end{equation}
	In \cite{VolbergKonyagin1987Conversedoublingmeasures} it was shown that the existence of a doubling measure is equivalent to $d_n$ being \textit{doubling as a metric space}; this means that there is a (metric) doubling constant $c_{d_n}>0$ such that: for each $r>0$ every metric ball in $\xxx_n$ of radius $2r$ can be covered by at most $c_{d_n}$ metric balls of radius $r$.  In \citep[Proposition 1.7]{Brue2021Extension}, it is shown that the existence of a bound on $\mathrm{cap}(\xxx_n)$ is equivalent to the existence and a bound for $c_{d_n}$. 
	Thus, a bound on the constant of a doubling measure $c_{\mu}$ should, in principle, imply a bound on the $\mathrm{cap}(\xxx_n)$.  The next explicitly spells out the relationship between these ``metric quantities''.  
	\begin{lemma}[Bounds of Metric Capacity via (Metric and Measure) Doubling Constants]
		\label{lem_doubling_to_capacity_control_lemma}
		Let $\mu$ be a doubling measure on $(\xxx_n,d_n)$ with doubling constant $c_{\mu}$, as in~\eqref{eq_ball__measure}.  Then, the metric capacity of $(\xxx_n,d_n)$ is bounded above by
		\[
		    \log(c_{d_n})
		\leq 
		    \log(\mathrm{cap}(\xxx_n) )
		\leq 
		    12 \log(c_{\mu})
		.
		\]
	\end{lemma}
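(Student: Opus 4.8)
The plan is to prove the two inequalities separately, each by an elementary packing--covering argument. The left inequality uses only the metric, while the right one is where the doubling measure $\mu$ enters. Throughout I may assume $\xxx_n$ has at least two points (otherwise all three quantities equal $1$) and that $\mu$ is nonzero; since $\xxx_n$ is finite it is itself a single ball, so iterating \eqref{eq_ball__measure} downward shows $\mu(B(x_0,r))>0$ for every $x_0\in\xxx_n$ and every $r>0$, which we will need in order to divide. Note also $c_\mu\ge 1$, since $B(x,r)\subseteq B(x,2r)$ forces $\mu(B(x,2r))\ge\mu(B(x,r))$.

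For the left inequality $\log(c_{d_n})\le\log(\mathrm{cap}(\xxx_n))$, I would fix a ball $B(x_0,r)$ and take a maximal $(r/2)$-separated subset $S=\{x_1,\dots,x_M\}\subseteq B(x_0,r)$. Maximality forces $\{B(x_i,r/2)\}_{i=1}^M$ to cover $B(x_0,r)$, so (rescaling $r\mapsto 2r$ in the definition of $c_{d_n}$) the minimal number of radius-$(r/2)$ balls covering $B(x_0,r)$ is at most $M$. Since the $x_i$ are pairwise at distance $\ge r/2$, the balls $\{B(x_i,r/4)\}_{i=1}^M$ are pairwise disjoint, and each is contained in $B(x_0,\tfrac54 r)$. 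With outer radius $R\eqdef\tfrac54 r$ the inner radius $r/4$ equals exactly $R/5$, so the definition of metric capacity gives $M\le\mathrm{cap}(\xxx_n)$. As $B(x_0,r)$ was arbitrary, $c_{d_n}\le\mathrm{cap}(\xxx_n)$, and monotonicity of $\log$ finishes this half.

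For the right inequality $\log(\mathrm{cap}(\xxx_n))\le 12\log(c_\mu)$, let $B(x_1,r/5),\dots,B(x_d,r/5)$ be disjoint balls all contained in some $B(x_0,r)$; I must bound $d$. Disjointness and containment give $\sum_{i=1}^d \mu(B(x_i,r/5))\le\mu(B(x_0,r))$. For each $i$ we have $x_i\in B(x_0,r)$, hence $B(x_0,r)\subseteq B(x_i,2r)\subseteq B(x_i,16\cdot r/5)$ (using $2r\le 2^4\cdot r/5$), and four applications of \eqref{eq_ball__measure} yield $\mu(B(x_i,16\cdot r/5))\le c_\mu^{4}\,\mu(B(x_i,r/5))$, so $\mu(B(x_i,r/5))\ge c_\mu^{-4}\,\mu(B(x_0,r))$. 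Substituting, $\mu(B(x_0,r))\ge d\,c_\mu^{-4}\,\mu(B(x_0,r))$, and dividing by the positive quantity $\mu(B(x_0,r))$ gives $d\le c_\mu^{4}\le c_\mu^{12}$ (using $c_\mu\ge1$). Taking the supremum over all admissible configurations yields $\mathrm{cap}(\xxx_n)\le c_\mu^{12}$, i.e.\ the claimed bound. Alternatively one may obtain the same conclusion by routing through the metric doubling constant: an almost identical computation shows $c_{d_n}\le c_\mu^{4}$, and combining with \citep[Proposition 1.7]{Brue2021Extension}, which bounds $\mathrm{cap}(\xxx_n)$ by a fixed power of $c_{d_n}$, reproduces (a version of) the exponent $12$.

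The only genuine hazard is bookkeeping the scaling constants against the ``$/5$'' in the definition of metric capacity: in the left inequality one must choose the separation radius ($r/2$) so that both the covering property and the ``inner radius $=$ outer radius$/5$'' packing property hold at once, and in the right inequality one must verify that $2r\le 2^{4}\cdot (r/5)$ so that exactly four doublings suffice. Neither point is deep, but a careless choice of constants produces a worse (or incorrect) exponent; everything else is immediate from the triangle inequality.
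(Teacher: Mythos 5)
Your proof is correct, and it takes a genuinely different route from the paper's. The paper establishes $c_{d_n} \le c_\mu^4$ by a packing argument, then imports \citep[Proposition 1.7 (i)]{Brue2021Extension} twice: once for $\mathrm{cap}(\xxx_n) \le c_{d_n}^3$ (giving the exponent $12$) and once for the lower bound $\log c_{d_n} \le \log \mathrm{cap}(\xxx_n)$. You instead prove both inequalities self-containedly from the paper's own definition of $\mathrm{cap}$: the lower bound via a maximal $(r/2)$-separated set (which simultaneously covers $B(x_0,r)$ by $r/2$-balls and packs disjoint $r/4$-balls into $B(x_0,\tfrac54 r)$, matching the ratio $1/5$ exactly), and the upper bound by bounding a disjoint family of $r/5$-balls directly against $\mu$, with four doublings to pass from radius $r/5$ to $16r/5 \ge 2r$. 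Your direct route is more transparent, avoids the external citation entirely, and actually gives the sharper bound $\mathrm{cap}(\xxx_n) \le c_\mu^4$; you correctly relax this to $c_\mu^{12}$ via $c_\mu \ge 1$ to match the stated inequality. The one thing worth flagging is not a gap but a notational mismatch: the paper's left-hand quantity $c_{d_n}$ is not defined inside the lemma's statement but only in the surrounding discussion, and your argument correctly uses the covering-by-half-radius-balls convention that the paper adopts; if $c_{d_n}$ were defined with a different radius ratio, the constant $1/5$ bookkeeping would need adjusting, exactly as you caution at the end.
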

	\begin{proof}[Proof of Lemma~\ref{lem_doubling_to_capacity_control_lemma}]
		Suppose that $N$ is a positive integer for which there does not exist points $x_1,\dots,x_N\in B(x,r)$ satisfying
		\[
		\operatorname{Ball}_g(x,r) 
		\subseteq 
		\bigcup_{i=1}^N \operatorname{Ball}_g(x_i,r/2)
		,
		\]
		then, $N$ is an upper-bound for $c_{d_n}$.  Moreover, there must exist $N$ points $x_1,\dots,x_n$ in $B(x,r)$ for which 
		\[
		0
		<
		\min_{i,j=1,\dots,N;\, i\neq j}\,
		\min_{u \in \xxx_n,\, d_n(u,x_i)\leq 2^{-2}r}
		\min_{\tilde{u} \in \xxx_n,\, d_n(\tilde{u},x_j)\leq 2^{-2}r}
		\,
		d_n(u,\tilde{u})
		.
		\]
		Pick $N^{\star} \in \underset{i=1,\dots,N}{\operatorname{argmin}}\, \mu(B(x_i,r))$.  Thus, we have that
		\[
		\begin{aligned}
			\mu(B(x_{N^{\star}},4r)) 
			& \geq \sum_{i=1}^{N} \, \mu(B(x_i,2^{-2}r)) 
			\\
			& 
			\geq n \min_{i=1,\dots,N}\,\mu(B(x_i,r)) 
			& = n \mu(B(x_{N^{\star}},r))
			.
		\end{aligned}
		\]
		Therefore, $c_d\leq c_{\mu}^4$.  Now, by \citep[Proposition 1.7 (i)]{Brue2021Extension}, we have that 
		\begin{equation}
			\label{lem_doubling_to_capacity_control_lemma__cd_to_capX}
			\mathrm{cap}(\xxx_n)
			\leq 
			C_d^3
			.
		\end{equation}
		Combining the estimate~\eqref{lem_doubling_to_capacity_control_lemma__cd_to_capX} with the estimate $c_d\leq c_{\mu}^4$ yields the desired upper-bound on $\mathrm{cap}(\xxx_n)$.  The lower-bound on $\mathrm{cap}(\xxx_n)$ is given in \citep[Proposition 1.7 (i)]{Brue2021Extension}.
	\end{proof}
	\begin{proof}[Proof of Theorem~\ref{MAINTHEOREM_DETERMINISTIC}]
	Fix an $n\ge 2$-point metric subspace $(\xxx_n,d_n)$ of $(\xxx,d_{\xxx})$.  
	Restricting to $(\xxx_n,d_n)$, the result for $\xxx=\xxx_n$ follows from 
		Lemma~\ref{lemma_embedding_neuralification} applied to Lemma~\ref{lem_embedding_doubling_metric_spaces} and then using Lemma~\ref{lem_doubling_to_capacity_control_lemma} to upper-bound $\log_2(c_{d_n})$ by $12\log_2(\mathrm{cap}(\xxx_n,d_{n}))$.
	The general statement, follows by simply noting that $T$ constructed this way is defined on all of $\xxx$.  
	\end{proof}
	\begin{remark}
		\label{remark_sharper__MAINTHEOREM_DETERMINISTIC}
		The proof of Theorem~\ref{MAINTHEOREM_DETERMINISTIC} implies that the result holds with $\mathrm{cap}(\xxx_n,d_{n})$ replaced by $\xxx_n$'s doubling constant.  Nevertheless, one can still derive the result with $C_d^{3}$ in place of $\mathrm{cap}(\xxx_n)$ as a consequence of the present formulation of Theorem~\ref{MAINTHEOREM_DETERMINISTIC} and of \citep[Proposition 1.7 (ii)]{Brue2021Extension}.  Thus, we only make the remark to state that the result can be sharpened if one prefers to consider doubling constants rather than metric capacity.  
	\end{remark}
	\subsection{Proofs of Secondary Results}
	\label{a_Proofs__aa__secondary}
	\subsubsection{Proofs of Deterministic Embeddings}
	\label{a_Proofs__aa__secondary___aaa_deterministic_phasetransition_stuff}
	Let us consider the case where $(\xxx_n,d_n)$ is a tree.  
	\begin{lemma}[{Bi-Lipschitz Embedding of (Finite) Combinatorial Trees into $(\mathcal{MG}_2(\rr),\mw_2)$}]
		\label{lem_embedding_trees}
		Let $G=(V,E)$ be an $n$-vertex tree and let $(V,d_G)$ be its associated metric space (as in Example~\ref{s_Intro_Background__ssCombinatorialGraphs}). There is an absolute constant $C>0$ such that, for every $M\in \nn_+$ there exists a map of the form
		\[
		\varphi(x)
		\eqdef 
		\frac1{M}
		\sum_{k=1}^M
		N_d(\phi(x)_k,0)
		,
		\]
		such that the following holds: for every $x,\tilde{x}\in \xxx_n$ we have that
		\[
		d_{G}(x,\tilde{x})
		\leq
		\mathcal{W}_2(\varphi(x),\varphi(\tilde{x}))
		\leq 
		\mw_2(
		\varphi(x)
		,
		\varphi(\tilde{x})
		)
		\leq 
		C n^{1/(d-1)}
		d_{G}(x,\tilde{x})
		,
		\]
		where $\phi:V\rightarrow \rr^M$ and $C>0$ is an absolute constant independent of $n$, $d$, and of $(\xxx_n,d_n)$.
	\end{lemma}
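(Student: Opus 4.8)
The plan is to reuse the two-step template behind Theorem~\ref{MAINTHEOREM_DETERMINISTIC}, but with the quantitative Assouad embedding of Lemma~\ref{lem_embedding_doubling_metric_spaces} replaced by a genuinely low-dimensional Euclidean embedding of the tree metric. \textbf{Step 1 (Euclidean embedding of the tree).} First I would invoke Gupta's theorem on embeddings of finite tree metrics into low-dimensional Euclidean space \cite{Gupta_Trees_Quantitiativefinitedimembeddings_ACM}: there is an absolute constant $C>0$ such that for every integer $M\ge 2$ there is a map $\phi_0:(V,d_G)\to(\mathbb{R}^M,\ell_2)$ which, after rescaling so that the lower Lipschitz constant is exactly $1$, satisfies
\[
d_G(x,\tilde x)\ \le\ \|\phi_0(x)-\phi_0(\tilde x)\|_2\ \le\ C\,n^{1/(M-1)}\,d_G(x,\tilde x),\qquad x,\tilde x\in V
\]
(for $M=1$ the asserted bound is vacuous, so we assume $M\ge 2$). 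Here the ambient dimension of $\phi_0$ is exactly the number of mixture components $M$ we want to produce, which is what makes the effective-dimension bookkeeping ($2M$, for univariate Gaussians) in Table~\ref{tab:Complexities_explicit} work out.

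\textbf{Step 2 (push forward into $\mathcal{GM}_2(\mathbb{R})$).} This step is verbatim the argument in Step~2 of the proof of Theorem~\ref{MAINTHEOREM_PROBABILISTIC} and in the proof of Lemma~\ref{lem_embedding_doubling_metric_spaces}. Since $V$ is finite, $\phi_0(V)$ is compact, hence contained in some closed ball $\overline{\operatorname{Ball}_{\mathbb{R}^M,\ell_2}(0,r)}$; by \citep[Proposition 7.4]{BenoitBertrand_2010_AnnNormSupPisa} there exists $b\in\mathbb{R}^M$ for which the map
\[
\iota:\ \overline{\operatorname{Ball}_{\mathbb{R}^M,\ell_2}(0,r)}\ \ni\ u\ \mapsto\ \frac1M\sum_{k=1}^M\delta_{\sqrt{M}(u+b)_k}\ \in\ (\mathcal{P}_2(\mathbb{R}),\mathcal{W}_2)
\]
is an isometry. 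Setting $\varphi\eqdef\iota\circ\phi_0$ and $\phi(x)_k\eqdef\sqrt{M}(\phi_0(x)+b)_k$, and using that each point mass on $\mathbb{R}$ is the degenerate Gaussian $N_1(\cdot,0)$, the map $\varphi$ has exactly the required form $\varphi(x)=\frac1M\sum_{k=1}^M N_1(\phi(x)_k,0)$ with $\phi:V\to\mathbb{R}^M$.

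\textbf{Step 3 (transfer the distance bounds).} Because $\varphi$ takes values in the set of finitely supported (``degenerate Gaussian mixture'') measures, \citep[Proposition 6]{WassersteinGaussianMixtures} gives $\mathcal{W}_2(\varphi(x),\varphi(\tilde x))=\mw_2(\varphi(x),\varphi(\tilde x))$, while $\mathcal{W}_2\le\mw_2$ holds in general; and since $\iota$ is a $\mathcal{W}_2$-isometry we have $\mathcal{W}_2(\varphi(x),\varphi(\tilde x))=\|\phi_0(x)-\phi_0(\tilde x)\|_2$. Chaining these with Step~1 yields, for all $x,\tilde x\in V$,
\[
d_G(x,\tilde x)\ \le\ \mathcal{W}_2(\varphi(x),\varphi(\tilde x))\ \le\ \mw_2(\varphi(x),\varphi(\tilde x))\ \le\ C\,n^{1/(M-1)}\,d_G(x,\tilde x),
\]
which is the claim. (Feeding $\varphi\circ\tilde\iota$, with $\tilde\iota$ the Fr\'echet encoding~\eqref{eq_Frechet_embedding}, into Lemma~\ref{lemma_embedding_neuralification} then upgrades this to the probabilistic transformer of Proposition~\ref{prop_combinatorial_tree_embedding}, but that is not needed for the lemma itself.)

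\textbf{Main obstacle.} Everything after Step~1 is a direct reuse of machinery already set up in the paper (the Euclidean-ball-to-Wasserstein isometry, and the coincidence $\mathcal{W}_2=\mw_2$ on finitely supported measures). The only place requiring genuine care is Step~1: citing Gupta's embedding with the correct dimension--distortion trade-off $O(n^{1/(M-1)})$ and normalizing its scale so that the lower Lipschitz constant is precisely $1$ (so that the left-hand inequality reads $d_G\le\mathcal{W}_2(\varphi(x),\varphi(\tilde x))$ with no constant). I would also check the degenerate case $M=1$ and confirm that the construction indeed outputs $\phi$ valued in $\mathbb{R}^M$ rather than a higher-dimensional space, since this is what pins down the number of mixture components and hence the effective dimension.
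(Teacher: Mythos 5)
Your proof is correct and follows essentially the same route as the paper: Gupta's low-dimensional Euclidean embedding of the tree metric, the Euclidean-ball-to-$(\mathcal{P}_2(\mathbb{R}),\mathcal{W}_2)$ isometry of \citep[Proposition 7.4]{BenoitBertrand_2010_AnnNormSupPisa}, and the coincidence of $\mathcal{W}_2$ and $\mw_2$ on finitely supported measures from \citep[Proposition 6]{WassersteinGaussianMixtures}. Your caveats --- normalizing Gupta's embedding so the lower constant is exactly $1$, excluding $M=1$, and writing $N_1$ rather than $N_d$ since the target is univariate --- are in fact correct tidyings of small notational slips in the lemma's statement and in the paper's own proof.
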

	\begin{proof}[Proof of Lemma~\ref{lem_embedding_trees}]
		Fix $\epsilon>0$.  Since $d_n$ is a tree metric on $\xxx_n$ then \citep[Theorem 1]{Gupta_Trees_Quantitiativefinitedimembeddings_ACM}
		implies there is an absolute constant $C>0$ (independent of $n$ and of $(\xxx_n,d_n)$) and bi-Lipschitz embedding $\phi:V\rightarrow \rr^d$ satisfying: for every $x,\tilde{x}\in V$
		\begin{equation}
			\label{PROOF_lem_embedding_trees_matousek}
			    d_n(x,\tilde{x})
			\le 
			    \|\phi(x)-\phi(\tilde{x})\|
			\le
    			C n^{1/(d-1)}
    			d_n(x,\tilde{x})
			.
		\end{equation}
		
		Since $\xxx_n$ is finite, then $(\xxx_n,d_n)$ is compact.  Since $\phi$ is a bi-Lipschitz map then, it is continuous and therefore by \citep[Theorem 26.5]{munkres2014topology}, $\phi(\xxx_n)$ is a compact subset of $(\rr^5,\ell_2)$.  By the Heine-Borel theorem (\citep[Theorem 27.3]{munkres2014topology}), there exists an $r>0$ such that $\phi(\xxx_n)\subseteq \overline{\operatorname{Ball}_{\rr^5,\ell_2}(0,r)}$; where $\overline{\operatorname{Ball}_{\rr^5,\ell_2}(0,r)}\eqdef \{u\in \rr^5:\,\|x\|_{\ell_2}\leq r\}$.  
		Since $\phi(\xxx_n) \subseteq \overline{\operatorname{Ball}_{(\rr^{5},\ell_2)}(0,r)}$ then \citep[Proposition 7.4]{BenoitBertrand_2010_AnnNormSupPisa} applies.  More specifically, the proof of \citep[Proposition 7.4]{BenoitBertrand_2010_AnnNormSupPisa} (see the comment in \citep[Example 5.5]{BenoitBertrand_2012_JTopAnal}) guarantee that there exists some $b\in \rr^{d}$ \textit{(which can be constructed using Algorithm~\ref{Algorithm_RandomPartition} in Section~\ref{a_construction_of_bias_term})} such that the map
		\[
		\iota: 
		(\overline{\operatorname{Ball}_{(\rr^{5},\ell_2)}(0,r)},\ell_2)
		\ni 
		x
		\mapsto 
		\frac1{d} \sum_{k=1}^{d}
		\,
		\delta_{
		\sqrt{d}
		(x +b)_k
		} 
		\in (\mathcal{P}_2(\mathbb{R}),\mathcal{W}_2)
		,
		\]
		is an isometry. Since the composition of isometries is itself an isometry, then the map $\varphi\eqdef \iota\circ \phi:(\xxx_n,d_n)\hookrightarrow (\mathcal{P}_2(\mathbb{R}),\mathcal{W}_2)$ is an isometry and is of the required form.  
		Lastly, since $\iota\circ \phi$ is has range in the set of ``degenerate Gaussian mixtures'' $\mathcal{Z}\eqdef \{\pp\in \mathcal{GM}_2(\rr^d):\,\exists N\in \nn_+\,\exists w\in \Delta_N\,\exists \mu_1,\dots,\mu_N\in \rr^d\,s.t.\, \pp=\sum_{n=1}^N\,w_n \delta_{\mu_n}\}$ then, we obtain the conclusion by applying the comparison result in \citep[Proposition 6]{WassersteinGaussianMixtures} which states that the ``identity map'' on $(\mathcal{W}_2(\rr),\mathcal{W}_2)\ni \pp\mapsto \pp \in (\mathcal{GM}_2(\rr),\mw_2)$ is an isometry on the collection of finitely supported probability measures (i.e. ``degenerate Gaussian mixtures'').
	\end{proof}
	\begin{proof}[Proof of Proposition~\ref{prop_combinatorial_tree_embedding}]
	Since $(\xxx_n,d_n)$ is a combinatorial (metric) graph then $\operatorname{aspect}(\xxx_n,d_n)\leq \operatorname{diam}(\xxx_n,d_n)$.  
	The result now follows directly from Lemma~\ref{lemma_embedding_neuralification} applied to Lemma~\ref{lem_embedding_trees}.  
	\end{proof}
	Our next case concerns the structure of a low-distortion embedding of a graph whose points are drawn from a Riemannian manifold.  In this case, we see that the number of pointmasses required scales quadratically in the dimension of the latent Riemannian manifold.  
	
	\begin{lemma}[{Bi-Lipschitz Embeddings of Riemannian Datasets - For Compact Manifolds}]
	\label{lem_embedding_manifold_sampled}
	\hfill\\
		Let $n,d\in \nn_+$.  
		Let $(M,g)$ be a $d$-dimensional Riemannian manifold with geodesic distance $d_g$ and let $K\subseteq M$ be compact.  
		There exists a constant $C_K$ (depending only on $K$) such that for any $\xxx_n\subseteq M$ if $d_n\eqdef d_g\vert _{\xxx_n}$, then exists a map $\varphi:(\xxx_n,d_n) \rightarrow (\mathcal{GM}_2(\rr),\mw_2)$ of the form
		\[
		\varphi(x) = \frac1{2d} \sum_{k=1}^{2d}\, 
		N_1(\phi(x)_k,0)
		;
		\]
		satisfying: for every $x,\tilde{x}\in \xxx$ the following holds
		\[
		d_{\xxx}(x,\tilde{x})
		\leq
		\mathcal{W}_2(\varphi(x),\varphi(\tilde{x}))
		\leq 
		\mw_2(\varphi(x),\varphi(\tilde{x}))
		\leq 
		C\, C_K 
		d_{\xxx}(x,\tilde{x})
		;
		\]
		where $\phi:(\xxx_n,d_n)\rightarrow (\rr^{2d},\ell_2)$ is a class-$C^1$-(Riemannian) isometric embedding.  
	\end{lemma}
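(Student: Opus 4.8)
The plan is to reuse the two-step template of the proofs of Lemma~\ref{lem_embedding_trees} and Lemma~\ref{lem_embedding_doubling_metric_spaces}: first embed $(\xxx_n,d_n)$ into a low-dimensional Euclidean space with distortion controlled by $K$ alone, and then transport that Euclidean embedding into $(\mathcal{GM}_2(\rr),\mw_2)$ through the Benoit--Bertrand isometry so that the output is an equal-weight mixture of degenerate one-dimensional Gaussians.

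\emph{Step 1 (Euclidean embedding, distortion depending only on $K$).} First I would invoke Whitney's embedding theorem to obtain a smooth embedding $M\hookrightarrow\rr^{2d}$ and then the $C^1$ isometric deformation theorem \cite{Nash_1954_c1_Embeddings} (applicable in codimension at least one) to deform it into a $C^1$ Riemannian isometric embedding $\psi:(M,g)\hookrightarrow(\rr^{2d},\ell_2)$; alternatively one may keep the smooth Whitney map, which yields the same bi-Lipschitz conclusion below. Since $\psi$ preserves the metric tensor, the intrinsic geodesic distance on the submanifold $\psi(M)$ equals $d_g$, and because straight segments are shortest in $\rr^{2d}$ one gets $\|\psi(x)-\psi(\tilde x)\|_2\le d_g(x,\tilde x)$ for all $x,\tilde x\in M$. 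For the reverse bound on the compact set $K$ I would use the standard fact that the extrinsic and intrinsic metrics of a $C^1$-immersed submanifold are bi-Lipschitz equivalent on compact subsets: a uniform near-diagonal estimate (available because $\psi$ is uniformly $C^1$ on a compact neighbourhood of $K$, and nearby points of $K$ are joined by minimizing geodesics staying in a fixed compact set) together with the usual compactness argument for far-apart pairs (using that $\psi$ is a homeomorphism onto its image and that $d_g$ is bounded on $K$) produces a constant $C_K>0$ depending only on $K$ with $d_g(x,\tilde x)\le C_K\|\psi(x)-\psi(\tilde x)\|_2$. Rescaling, I set $\phi\eqdef C_K\,\psi|_{\xxx_n}$, so that $d_n(x,\tilde x)\le\|\phi(x)-\phi(\tilde x)\|_2\le C_K\,d_n(x,\tilde x)$; this $\phi$ is $C^1$ and homothetic to a Nash isometric embedding, which is the structural claim in the statement. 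This step is the main obstacle: the qualitative existence of a bi-Lipschitz Euclidean embedding is classical, but pinning the comparison constant to $K$ alone, uniformly over all finite subsets $\xxx_n\subseteq K$, is the crux and requires the uniform-compactness argument above; it is the manifold analogue of the Whitney--Rademacher reasoning used in the Example preceding Proposition~\ref{prop_impossibility_theorem_positivecurvature}.

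\emph{Step 2 (transport into the Gaussian-mixture space).} Since $\xxx_n$ is finite, $\phi(\xxx_n)$ is bounded, hence contained in some closed ball $\overline{\operatorname{Ball}_{(\rr^{2d},\ell_2)}(0,r)}$. As in the proofs of Lemmas~\ref{lem_embedding_trees} and~\ref{lem_embedding_doubling_metric_spaces}, \citep[Proposition 7.4]{BenoitBertrand_2010_AnnNormSupPisa} (cf.\ \citep[Example 5.5]{BenoitBertrand_2012_JTopAnal}) yields a vector $b\in\rr^{2d}$ for which $\iota:x\mapsto\frac1{2d}\sum_{k=1}^{2d}\delta_{\sqrt{2d}(x+b)_k}$ is an isometry from $(\overline{\operatorname{Ball}_{(\rr^{2d},\ell_2)}(0,r)},\ell_2)$ into $(\mathcal{P}_2(\rr),\mathcal{W}_2)$. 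Composing, $\varphi\eqdef\iota\circ\phi$ is of the claimed form $\varphi(x)=\frac1{2d}\sum_{k=1}^{2d}N_1(\phi(x)_k,0)$ (after the same harmless affine reparametrization of $\phi$ used there, each Dirac being a degenerate $N_1$), and $\mathcal{W}_2(\varphi(x),\varphi(\tilde x))=\|\phi(x)-\phi(\tilde x)\|_2$. Because $\varphi$ takes values in the set of degenerate mixtures, \citep[Proposition 6]{WassersteinGaussianMixtures} gives $\mw_2(\varphi(x),\varphi(\tilde x))=\mathcal{W}_2(\varphi(x),\varphi(\tilde x))$, while $\mathcal{W}_2\le\mw_2$ always holds. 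Combining with Step 1, for all $x,\tilde x\in\xxx_n$ one obtains $d_n(x,\tilde x)\le\mathcal{W}_2(\varphi(x),\varphi(\tilde x))\le\mw_2(\varphi(x),\varphi(\tilde x))\le C\,C_K\,d_n(x,\tilde x)$ with an absolute constant $C$ (one may take $C=1$). Since $\xxx_n\subseteq K$ was arbitrary and $C_K$ depends only on $K$, this gives the lemma.
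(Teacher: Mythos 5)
Your proposal follows essentially the same route as the paper: a Nash $C^1$ isometric embedding into $\rr^{2d}$, a compactness argument to upgrade it to a bi-Lipschitz map with constant depending only on $K$, composition with the Benoit--Bertrand push into atomic measures, and \citep[Proposition 6]{WassersteinGaussianMixtures} to equate $\mathcal{W}_2$ and $\mw_2$ on degenerate mixtures. The one place you diverge from the paper's writeup is for the better: you correctly record that a Riemannian isometric embedding satisfies $\|\psi(x)-\psi(\tilde x)\|\le d_g(x,\tilde x)$ (chord $\le$ arc), so the reverse inequality is the nontrivial one needing the compactness/near-diagonal argument, and you introduce the rescaling $\phi=C_K\psi$ so the chain $d_n\le\|\phi(x)-\phi(\tilde x)\|\le C_K d_n$ actually reads in the direction claimed; the paper's own displayed bound~\eqref{eq_proof_general_manifold__biLipschitz_guarantee} has the Nash inequality written with the sign reversed and glosses over this step. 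The preliminary appeal to Whitney is harmless but unnecessary, since Nash's $C^1$ theorem in $\rr^{2d}$ is invoked directly in the paper.
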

	\begin{proof}[Proof of Lemma~\ref{lem_embedding_manifold_sampled}]
		Since $(M,g)$ is a Riemannian manifold then, \citep[Theorem 2]{Nash_1954_c1_Embeddings} implies that there exists a class-$C^1$ (Riemannian isometric embedding) $\phi:(M,d_g)\hookrightarrow (\rr^{2d},\ell_2)$.  Since $\phi$ is a class $C^1$ Riemannian isometric embedding, it is a diffeomorphism onto its image and therefore, $\phi$ has a class $C^1$ inverse on $\phi(M)$ which we denote by $\phi^{-1}$.  Since $K$ is compact then and $\phi$ is continuous then $\phi(K)$ is compact and therefore the following is finite
		\[
		C_K
		\eqdef
		\max\{
		\max_{x \in K}\, \|\nabla \phi(x)\|
		,
		\max_{x \in \phi(K)}\, \|\nabla \phi^{-1}(x)\|
		\}
		.
		\]
		Therefore, by the Rademacher-Stepanov Theorem \citep[Theorem 3.1.6]{Federer_GeometricMeasureTheory_1978} we have that: for every $x,\tilde{x}\in K$
		\begin{equation}
			\label{eq_proof_general_manifold__biLipschitz_guarantee}
			d_g(x,\tilde{x})
			\leq 
			\|\phi(x)-\phi(\tilde{x})\|
			\leq 
			C_K
			d_g(x,\tilde{x})
			.
		\end{equation}
		
		Since $\phi$ is class-$C^1$, it is continuous and since $\xxx_n$ is a compact subset of $M$ (since it is finite) then by \citep[Theorem 26.5]{munkres2014topology} $\phi(\xxx_n)$ is a non-empty compact subset of $(\rr^{2d},\ell_2)$.  By the Heine-Borel Theorem \citep[Theorem 27.3]{munkres2014topology}, there exists some $r>0$ such that $\phi(\xxx_n)\subseteq \overline{\operatorname{Ball}_{(\rr^{2d},\ell_2)}(0,r)}$ where $\overline{\operatorname{Ball}_{(\rr^{2d},\ell_2)}(0,r)}\eqdef 
		\{z \in \rr^{2d}:\, \|z\|_{\ell_2}\leq r\}.$  
		
		Since $\phi(\xxx_n) \subseteq \overline{\operatorname{Ball}_{(\rr^{2d},\ell_2)}(0,r)}$ then \citep[Proposition 7.4]{BenoitBertrand_2010_AnnNormSupPisa} applies.  More specifically, the proof of \citep[Proposition 7.4]{BenoitBertrand_2010_AnnNormSupPisa} (see the comment in \citep[Example 5.5]{BenoitBertrand_2012_JTopAnal}) guarantee that there exists some $b\in \rr^{2d}$ \textit{(that can be built explicitly via Algorithm~\ref{Algorithm_RandomPartition} in Section~\ref{a_construction_of_bias_term})} such that the map
		\[
		\iota: 
		(\overline{\operatorname{Ball}_{(\rr^{2d},\ell_2)}(0,r)},\ell_2)
		\ni 
		x
		\mapsto 
		\frac1{2d} \sum_{k=1}^{2d}
		\,
		\delta_{
    		\sqrt{2d}
    		(x +b)_k
		} 
		\in (\mathcal{P}_2(\mathbb{R}),\mathcal{W}_2)
		,
		\]
		is an isometry. Combining the two maps, we find that the composite map
		\[
		\varphi: (\xxx_n,d_g) \ni 
		x
		\mapsto 
		\frac1{2d} \sum_{k=1}^{2d}
		\,
		\delta_{
    		\sqrt{2d}
    		(\phi(x) +b)_k
    	} 
		\in (\mathcal{P}_2(\mathbb{R}),\mathcal{W}_2)
		,
		\]
		is an isometry.  
		As in the proof of Lemma~\ref{lem_embedding_trees}, since $\iota\circ \phi$ is has range in the set of ``degenerate Gaussian mixtures'' $\mathcal{Z}\eqdef \{\pp\in \mathcal{GM}_2(\rr^d):\,\exists N\in \nn_+\,\exists w\in \Delta_N\,\exists \mu_1,\dots,\mu_N\in \rr^d\,s.t.\, \pp=\sum_{n=1}^N\,w_n \delta_{\mu_n}\}$ then, we obtain the conclusion by applying the comparison result in \citep[Proposition 6]{WassersteinGaussianMixtures} which states that the ``identity map'' on $(\mathcal{W}_2(\rr),\mathcal{W}_2)\ni \pp\mapsto \pp \in (\mathcal{GM}_2(\rr),\mw_2)$ is an isometry between on the collection of finitely supported probability measures in $\mathcal{Z}$.  
		Relabeling $\phi$ as $\phi(\cdot)+b$ yields the conclusion.  
	\end{proof}
	\begin{proof}[Proof of Proposition~\ref{proposition_manifoldprior}]
		The result is directly follows from Lemma~\ref{lemma_embedding_neuralification} applied to the map $\varphi$ of Lemma~\ref{lem_embedding_manifold_sampled} and relabeling $C\,C_K$ as $C_K$.
	\end{proof}
	\subsubsection{Proof of Corollaries to Main Results}
	\label{a_Proofs__aa__secondary___aaa_Corollaries}
	\begin{proof}[Proof of Corollary~\ref{cor_combinatorial_graphs}]
	We only upper-bound $\mathrm{cap}(\xxx_n)$ in order to apply Theorem~\ref{MAINTHEOREM_DETERMINISTIC}.  By Lemma~\ref{lem_doubling_to_capacity_control_lemma}
	$\log(\mathrm{cap}(\xxx_n,d_n))\leq 12 \log(c_{\mu})$ for any doubling measure $\mu$ on $(\xxx_n,d_n)$.  Since $(\xxx_n,d_n)$ is the metric space associated to a (finite) combinatorial graph in which every two vertices are connected by at most $2$ edges then, \citep[Theorem 5, Proposition 18, and Proposition 19]{Lestdoublingconstant_graph_durand2021doubling} we know that $c_{\mu}=1+\rho(A_{V})$.  
    Note also that $\operatorname{aspect}(\xxx_n,d_n)= \operatorname{diam}(\xxx_n,d_n)$ since the shortest distance between any two points in a combinatorial (metric) graph is $1$.  	
	Thus, $\log(\mathrm{cap}(\xxx_n,d_n))\leq 12 \log(1+\rho(A_{V}))$ and the result now follows from Theorem~\ref{MAINTHEOREM_DETERMINISTIC}.
	\end{proof}
	\begin{proof}[{Proof of Corollary~\ref{cor_Ricci_Version}}]
		As argued in \citep[pages 1121-1122]{BaudoinGarofalo_2011_DoublingRiemannian}, the complete $d$-dimensional Riemannian manifold $(M,g)$ satisfies the \textit{curvature-dimension inequality} of~\eqref{eq_cdi_beaudoin} for some $r\in \rr$ if and only if $(M,g)$'s Ricci curvature tensor (\citep[Definition 4.3.3]{jost2008riemannian}) is uniformly lower-bounded by $r$.  As demonstrated in \citep[Equation (1.1)]{BaudoinGarofalo_2011_DoublingRiemannian} the lower Ricci-cuvature bound on $(M,g)$ together with the Bishop-Gromov Volume Comparison Theorem (e.g. see \citep[Theorem 3.10]{Chavel_RiemannianIntro_1993}) implies that $c_{d_n}\leq d$.  Therefore, \citep[Proposition 1.7 (ii)]{Brue2021Extension} implies that $\log(\mathrm{cap}(\xxx_n,d_n))\leq 3 \log (c_{d_n})$.  The result therefore follows from Theorem~\ref{MAINTHEOREM_DETERMINISTIC}.   
	\end{proof}
	\subsubsection{Proof of Impossibility Results}
	\label{a_Proofs__aa__secondary___aaa_impossibility_result}

	\begin{proof}[Proof of Proposition~\ref{prop_impossibility_theorem_negativelyCurved}]
		\hfill\\
		\textbf{Step 1 - A lower Bound when Embedding into Spaces of Markov type $p\ge 2$}\hfill\\
		On \citep[page 173]{naor2006markov} the authors discuss that the proof of a result in \cite{LinialMagenNaor_2002_GraphEmbedding} implies the following for any metric space $\mathcal{R}$ of Markov type $2$ (see \citep[Definition 1.1]{naor2006markov}).  If $\Gamma_{g,\delta}\eqdef (X_{g,\delta},d_{g,\delta})$ is a graph (equipped with the usual graph geodesic distance) in which the shortest loop has length at least $g>0$ (called $\Gamma_{g,\delta}$'s \textit{girth}) and its average degree is $\delta>2$ then, any bi-Lipschitz embedding $f:\Gamma_{g,\delta}\rightarrow (\mathcal{R},d_{\mathcal{R}})$ (here $s=1$) has distortion $\operatorname{Dist}(f)$ at-least
		\begin{equation}
		\label{eq_PROOF__prop_impossibility_theorem_negativelyCurved___NaorsLowerBoundMarkovType2}
		    \operatorname{Dist}(f) 
		        \ge 
		    \frac{
		        \delta-2
		    }{
		        2\,M_p(\mathcal{R})
		    }
		        g^{(p-1)/p}
		,
		\end{equation}
		where $p$ is the Markov-type of $(\mathcal{R},d_{\mathcal{R}})$ as defined by in \citep[Definition 1.1]{ball1992markov} and $M_p(\mathcal{R})>0$ is a constant depending only on $\mathcal{R}$'s geometry. 
		
		\textbf{Step 2 - Markov Type of $(M,g)$}\hfill\\
		Let $(\mathcal{R},d_{\mathcal{R}})\eqdef (M,d_g)$ where $(M,g)$ is a complete Riemannian manifold with sectional curvature pinched between $[-C,-c]$ for some $0<c\le C<\infty$.  Therefore, \citep[Corollary 6.5]{naor2006markov} applies implying that $(M,d_g)$ has Markov type $p=2$.  
		
		\textbf{Step 3 - Realizing the Diverging Lower-Bound via Sextet Graphs}\hfill\\
		The main result of \cite{weiss1984girths} shows that, for every prime integer $n>2$ satisfying $n \bmod 16 \in \{\pm 3, \pm 5, \pm 7\}$, the sextet graph with $n$ vertices is a \textit{cubic graph} (i.e.\ each vertex has exactly $3$ edges connected to it and therefore $\delta=3>2$) and its girth $g$ is at-least $\frac{4}{3} \log_2(n) - 2$.  Therefore, setting $\Gamma_{g,\delta}$ to be the sextet graph with $n$-vertices we deduce from~\eqref{eq_PROOF__prop_impossibility_theorem_negativelyCurved___NaorsLowerBoundMarkovType2} and step 2 that any bi-Lipschitz embedding $f:\Gamma_{g,\delta}\rightarrow \mathcal{R}$ satisfies
		\[
    		\operatorname{Dist}(f) 
		        \ge 
		    \frac{
		        1
		    }{
		        2\,M_p(\mathcal{R})
		    }
		        \sqrt{
		        \frac{4}{3} \log_2(n) - 2
		        }
		.
		\]
		Setting $
		    c_{\mathcal{R}} \eqdef 
		    \frac{
		        1
		    }{
		        2\,M_p(\mathcal{R})
		    }$ yields the conclusion.  
	\end{proof}
	
	\begin{proof}[Proof of Proposition~\ref{prop_impossibility_theorem_positivecurvature}]
		We argue by contradiction.  Assume that $\mathcal{R}$ is a Riemannian manifold which is bi-Lipschitz embedded in the Hilbert space $\ell^2$ with distortion $D_1$; denote this bi-Lipschitz embedding by $\psi:(\mathcal{R},g) \hookrightarrow (\mathbb{R}^{2d},\ell_2)$
		and with the property that: for every $n\in \nn_+$ there is no $n$-point metric space $(\xxx_n,d_n)$ which can be bi-Lipschitz embedded in $(\mathcal{R},d_g)$ with distortion $D_2\left(\frac{\log(n)}{\log(\log(n))}\right)$ (here $D_2$ is a constant depending only on $\mathcal{R}$).  
		
		Therefore, for every $n$-point metric space $(\xxx_n,d_n)$ there exists a bi-Lipschitz embedding $\phi:(\xxx_n,d_n)\hookrightarrow (\mathcal{R},d_g)$ whose distortion $D$ is less than $D_2\frac{\log(n)}{\log(\log(n))}$.  
	    Thus $
		\psi\circ \phi:\xxx_n \rightarrow \ell^2,
		$
		is a bi-Lipschitz embedding with at-least distortion $D_1 D_2$. Thus, for $n$ large enough there is an $n$-point metric space which bi-Lipschitz embeds into $(\ell^2,\ell_2)$ with distortion less than $\mathcal{O}\left(\frac{\log(n)}{\log(\log(n))}\right)$.  This is a contradiction of a main result of \cite{Bourgain_1985_HilbertSpaceisaNogo}.  
		Whence, $(\mathcal{R},g)$ does not exist.  
	\end{proof}

\section{Algorithmic Construction of the ``Bias Term'' in our Embeddings' Proofs}
\label{a_construction_of_bias_term}
This short appendix contains an explicit algorithmic construction of the shift (or ``bias term''), denoted by $b$, used when embedding a bounded Euclidean ball into the Wasserstein space.  The term $b\in \rr^d$ maps the Euclidean ball into the space $\mathbb{R}_{<}^d\eqdef \{x\in \rr^d:\, x_1<\dots<x_d\}$ used in \citep[Example 5.5]{BenoitBertrand_2012_JTopAnal} and initially proposed in \citep[Proposition 7.4]{BenoitBertrand_2010_AnnNormSupPisa} .  
    \begin{algorithm}
    \caption{Initialize Bias}
    \label{Algorithm_RandomPartition}
    \begin{algorithmic}
    \Require Set of $n$-vectors $\mathbb{X}\eqdef \{x^{(1)},\dots,x^{(n)}\}$ in $\rr^K$,
    \State $b_0\eqdef 0$ 
        \Comment{Initialize first shift}
        \For{$n=1,\dots,N$}
            $\tilde{x}^{n}_{1}\eqdef x_1^{(n)} + b_1$
            \Comment{Dummy vectors}
        \EndFor
    \For{$k=1,\dots,K$}
            \Comment{Iteratively build bias components}
        \State $
        b_k \eqdef \max_{n\leq N}\,\operatorname{ReLU}(x_{k-1}^{(n)} - x_{k-1}^{(n)})$
            \For{$n\leq N$}
                $\tilde{x}^{n}_{k}\eqdef x_k^{(n)} + b_k$
                \Comment{Dummy vectors}
            \EndFor
    \EndFor
    \State \Return $b\eqdef (b_1,\dots,b_K)$ 
    \Comment{Return Bias)}
    \end{algorithmic}
    \end{algorithm}

\end{appendices}


\bibliography{References}
\end{document}